\documentclass[11pt]{article}

\usepackage[margin=1in]{geometry}
\usepackage{amsmath,amssymb,amsthm,mathtools}
\usepackage{enumitem}
\usepackage{hyperref}
\hypersetup{
  colorlinks=true,
  linkcolor=blue,
  citecolor=blue,
  urlcolor=blue
}
\usepackage[round,authoryear]{natbib}

\newtheorem{theorem}{Theorem}[section]
\newtheorem{lemma}[theorem]{Lemma}
\newtheorem{proposition}[theorem]{Proposition}
\newtheorem{corollary}[theorem]{Corollary}
\newtheorem{assumption}[theorem]{Assumption}
\newtheorem{remark}[theorem]{Remark}

\newcommand{\R}{\mathbb{R}}

\newcommand{\E}{\mathbb{E}}
\newcommand{\KL}{\mathrm{KL}}
\newcommand{\dd}{\,\mathrm{d}}
\newcommand{\norm}[1]{\left\lVert #1\right\rVert}
\newcommand{\abs}[1]{\left\lvert #1\right\rvert}

\newcommand{\one}{\mathbf{1}}
\newcommand{\divergence}{\nabla\!\cdot}

\allowdisplaybreaks

\title{\bf Finite-Particle Convergence Rates for Conservative and Non-Conservative Drifting Models}
\author{Krishnakumar Balasubramanian\thanks{The author was supported in part by National Science Foundation (NSF) grant DMS-2413426.
} \\ Department of Statistics, University of California, Davis\\
\texttt{kbala@ucdavis.edu}}
\date{\today}

\begin{document}
\maketitle

\begin{abstract}
We propose and analyze a conservative drifting method for one-step generative modeling. The method replaces the original displacement-based drifting velocity by a kernel density estimator (KDE)-gradient velocity, namely the difference of the kernel-smoothed data score and the kernel-smoothed model score. This velocity is a gradient field, addressing the non-conservatism issue identified for general displacement-based drifting fields. We prove continuous-time finite-particle convergence bounds for the conservative method on $\R^d$: a joint-entropy identity yields bounds for the empirical Stein drift, the smoothed Fisher discrepancy of the KDE, and the squared center velocity. The main finite-particle correction is a reciprocal-KDE self-interaction term, and we give deterministic and high-probability local-occupancy conditions under which this term is controlled. We keep the quadrature constants explicit and track their possible bandwidth dependence: the root residual-velocity rate $N^{-1/(d+4)}$ holds under an additional $h$-uniform quadrature regularity condition, while a more general growth condition yields the optimized root rate $N^{-(2-\beta)/(2(d+4-\beta))}$, where $0\le \beta<2$. We also analyze the non-conservative drifting method with Laplace kernel, corresponding to the original displacement-based velocity proposed in~\cite{deng2026drifting}. For this method, a sharp companion kernel decomposes the velocity into a positive scalar preconditioning of a sharp-score mismatch plus a Laplace scale-mismatch residual, producing an analogous finite-particle rate with an unavoidable residual term. Finally, we explain how the continuous-time residual-velocity bounds translate into one-step generation guarantees through the explicit drift size $\eta$.
\end{abstract}


\section{Introduction}

Drifting models~\citep{deng2026drifting} provide a direct approach to one-step generative modeling by moving the model distribution during training rather than applying a long iterative sampler at inference time. Let $\nu$ denote the data distribution on $\R^d$ and let $\mu$ denote the current model distribution. For a bandwidth-$h$ kernel $K_h$, define the local kernel mean-shift vector of a probability measure $\alpha$ by
\[
  M_{\alpha,h}(z)
  :=
  \frac{\int_{\R^d} (y-z)K_h(z-y)\,\alpha(\dd y)}
       {\int_{\R^d} K_h(z-y)\,\alpha(\dd y)}.
\]
The displacement-based drifting velocity from $\mu$ toward $\nu$ is
\[
  u_{\nu,\mu,h}^{\mathrm{disp}}(z)
  :=
  M_{\nu,h}(z)-M_{\mu,h}(z).
\]
The data term attracts a point $z$ toward regions locally represented by the data, while the model term repels $z$ from regions already represented by the model. In practical training, the drifted target is treated as a stop-gradient target: one forms $z+\eta u_{\nu,\mu,h}^{\mathrm{disp}}(z)$ and trains the generator to regress to this frozen target. In the idealized exact-regression limit, this produces the frozen-field Euler update $z^+=z+\eta u_{\nu,\mu,h}^{\mathrm{disp}}(z)$, and the continuous-time particle systems studied below are obtained by sending the step size to zero; see Section~\ref{app:stop-gradient-ode} for additional explanation.

The motivation for the first method in this paper is the fact that displacement-based drifting fields, as proposed in~\cite{deng2026drifting}, are generally not conservative. In particular,~\cite{franz2026drifting} show that the position-dependent normalization in the original displacement field generally destroys the gradient-field structure, with the Gaussian kernel being a special case in which the field remains conservative. We therefore propose\footnote{\textbf{Concurrent and independent work.} During the final stages of completing this work, the author became aware of the work by~\cite{estebancasadevall2026kernel} posted to arxiv on $11^{th}$ May 2026. While their work also introduces the kernel-gradient drifting models, our contribution is a finite-particle entropy-rate analysis for the conservative dynamics and a parallel treatment of the non-conservative Laplace field.} the conservative drifting method, which replaces the displacement velocity by an explicit KDE-gradient or score velocity. For any probability measure $\alpha$, define its kernel-smoothed density and score by
\[
  \rho_{\alpha,h}(z):=\int_{\R^d}K_h(z-y)\,\alpha(\dd y),
  \qquad
  s_{\alpha,h}(z):=\nabla\log\rho_{\alpha,h}(z).
\]
The conservative velocity from $\mu$ toward $\nu$ is
\[
  b_{\nu,\mu,h}(z)
  :=
  s_{\nu,h}(z)-s_{\mu,h}(z)
  =
  \nabla\log\rho_{\nu,h}(z)-\nabla\log\rho_{\mu,h}(z).
\]
It is called conservative because
\[
  b_{\nu,\mu,h}(z)
  =
  \nabla\{\log\rho_{\nu,h}(z)-\log\rho_{\mu,h}(z)\}
\]
is a gradient field. Throughout the paper, $b$ denotes a conservative velocity, whereas $u$ denotes a non-conservative displacement velocity.

For Gaussian kernels, the conservative velocity and the displacement-based velocity coincide up to a deterministic scale factor. Indeed, if
\[
  K_h(u)=(2\pi h^2)^{-d/2}\exp\!\left(-\frac{\norm{u}^2}{2h^2}\right),
\]
then
\[
  \nabla_zK_h(z-y)=\frac{y-z}{h^2}K_h(z-y).
\]
Therefore
\[
  s_{\alpha,h}(z)
  =
  \frac{\int \nabla_zK_h(z-y)\,\alpha(\dd y)}
       {\int K_h(z-y)\,\alpha(\dd y)}
  =
  \frac1{h^2}M_{\alpha,h}(z),
\]
and hence
\[
  b_{\nu,\mu,h}(z)
  =
  \frac1{h^2}u_{\nu,\mu,h}^{\mathrm{disp}}(z).
\]
Thus the two Gaussian dynamics generate the same trajectories after rescaling time or step size. For non-Gaussian kernels this identity generally fails: the non-conservative displacement field averages the Euclidean vector $y-z$, while the conservative field uses the KDE-gradient direction.

We also remark that~\cite{franz2026drifting} proposed a sharp-normalized field to restore conservativeness.  While both our approach and that of~\cite{franz2026drifting} lead to score-difference fields, they score different smoothed densities. Our
conservative KDE-gradient field uses the ordinary KDE \(K_h*\alpha\), whereas
the sharp-normalized field uses the companion KDE \(K_h^\#*\alpha\), where
\(K_h^\#\) is chosen so that \(\nabla K_h^\#(z-y)=(y-z)K_h(z-y)\). These two
fields coincide up to scale for Gaussian kernels, because the Gaussian satisfies
\(\nabla K_h(z-y)\propto (y-z)K_h(z-y)\). For Laplace kernels they differ:
the ordinary KDE-gradient field averages unit directions through
\(\nabla K_h\), while the sharp-normalized field averages full displacement
vectors and normalizes by the sharp KDE. Nevertheless, we leverage the idea of~\cite{franz2026drifting} in our proof for analyzing the original drifting method proposed in~\cite{deng2026drifting}; see Section~\ref{sec:nonconservative-laplace-drifting}.

\subsection{Conservative versus Non-conservative Drift: A Visual Comparison}
\label{subsec:conservative-vs-non}

\begin{figure}[t]
  \centering
  \includegraphics[width=0.6\linewidth]{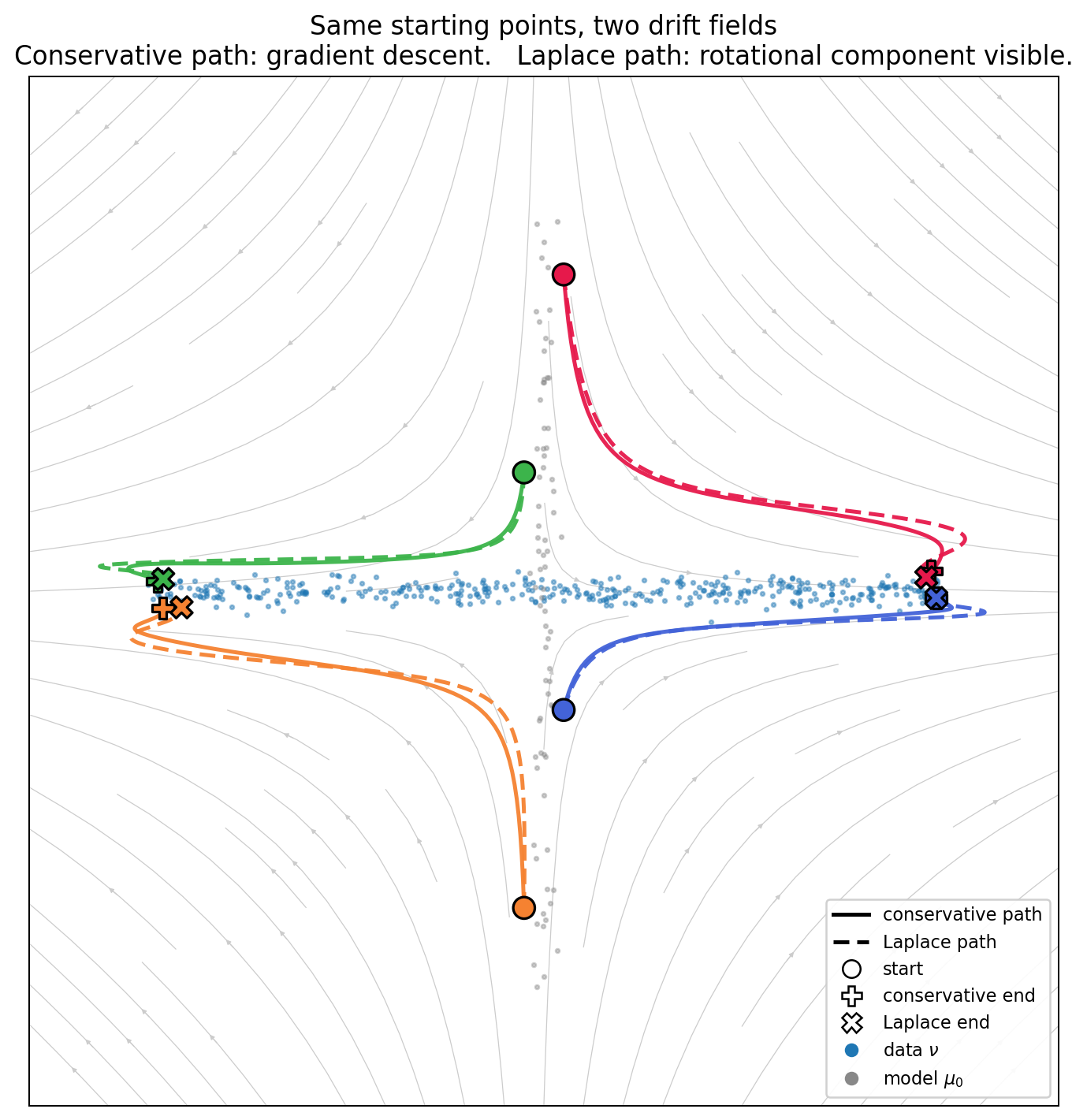}
  \caption{Same starting points, two drift fields: Data \(\nu\) is supported on a thin horizontal cluster (blue) and the initial model \(\mu\) on a thin vertical cluster (grey). Bandwidth \(h=0.55\); \(80\) model particles total, four tracers shown. Solid lines represent trajectories of four tracer particles under the conservative Gaussian drift \(b_{\nu,\mu,h}\). Dashed lines represent trajectories of the same four particles, from the same starting points, under the non-conservative Laplace drift \(u_{\nu,\mu,h}^{\mathrm{Lap}}\). The conservative paths follow the gradient of \(\log\rho_{\nu,h}-\log\rho_{\mu,h}\) and are curl-free; the non-conservative
paths are generated by a field that is not generally a gradient field and may
have nonzero local circulation.}
  \label{fig:xy-paths}
\end{figure}

To build intuition for the distinction between the conservative and non-conservative drifting fields, consider a simple two-dimensional setup in which the data distribution \(\nu\) is supported on a thin horizontal cluster along the \(x\)-axis and the model distribution \(\mu\) is initialized on a thin vertical cluster along the \(y\)-axis. We run the finite-particle dynamics from the same initial configuration under each of the two velocity fields, $  b_{\nu,\mu,h}(z)$ with a Gaussian kernel for the conservative case, and $u_{\nu,\mu,h}^{\mathrm{Lap}}(z):=
  M_{\nu,h}(z)-M_{\mu,h}(z)$ with a Laplace kernel for the non-conservative case, as in~\cite{deng2026drifting}. We track four model particles whose initial positions are identical across the two simulations.

Figure~\ref{fig:xy-paths} shows the resulting trajectories. Both fields successfully transport the model toward the data manifold, and the four pairs of paths share their starting points by construction. However, under the conservative field the
particles evolve along integral curves of the scalar potential (i.e., we have $b_{\nu,\mu,h}(z)=\nabla\Phi(z)$ where $  \Phi(z):=\log\rho_{\nu,h}(z)-\log\rho_{\mu,h}(z),$. In particular, along a conservative trajectory \(Z(t)\), $  \frac{\dd}{\dd t}\Phi(Z(t))=\norm{\nabla\Phi(Z(t))}^2\ge0,$ so the motion is potential-driven and curl-free, even though the path itself may
be curved. By contrast, under the non-conservative Laplace field the same
particles evolve according to a velocity that is not generally the gradient of
any scalar potential; its local curl, or equivalently its circulation around
closed loops, can be nonzero. This non-potential component leads to visibly
different trajectories (i.e., the same particles take visibly curved paths) and slightly different terminal positions. This curvature is the signature of the rotational component of \(u_{\nu,\mu,h}^{\mathrm{Lap}}\): a true gradient field cannot have non-zero circulation around any closed loop, but the Laplace mean-shift difference does. The reason is that the local scale factor \(a_{\alpha,h}(z) = R_{\alpha,h}(z)/Q_{\alpha,h}(z)\) appearing in the sharp-score representation \(M_{\alpha,h}(z) = a_{\alpha,h}(z)\sigma_{\alpha,h}(z)\) of the forthcoming Lemma~\ref{lem:laplace-sharp-score} varies with both \(z\) and \(\alpha\) for non-Gaussian kernels. The same position-dependent scale is the source of the unavoidable scale-mismatch residual \(\Delta_h\) that appears in the finite-particle rate of the forthcoming Theorem~\ref{thm:laplace-main-rate}, so the geometric picture in Figure~\ref{fig:xy-paths} and the analytical price paid in the non-conservative rate have a common root cause; see Remark~\ref{rem:laplace-curl-and-delta} for more details.

\subsection{Our Contributions}

For the finite-particle conservative method, let $x=(x_1,\ldots,x_N)$ and define
\[
  q_x(z):=\frac1N\sum_{j=1}^N K_h(z-x_j),
  \qquad
  s_x(z):=\nabla\log q_x(z).
\]
With the smoothed target density $\rho:=\rho_{\nu,h}$ and $s_\rho:=\nabla\log\rho$, the finite-particle conservative velocity is
\[
  b_x(z):=s_\rho(z)-s_x(z),
\]
and the center-evaluation dynamics are
\[
  \dot X_i(t)=b_{X(t)}(X_i(t)),
  \qquad i=1,\ldots,N.
\]
The main residual-drift quantity is
\[
  \mathsf V_N(x)
  :=
  \frac1N\sum_{i=1}^N \norm{b_x(x_i)}^2.
\]
This is exactly the mean squared speed of the generated particle cloud under the conservative drifting flow. If one more explicit conservative drift step is applied,
\[
  \widetilde x_i=x_i+\eta b_x(x_i),
  \qquad
  \widetilde\mu_x^N=\frac1N\sum_{i=1}^N\delta_{\widetilde x_i},
\]
then the identity coupling gives
\[
  W_2(\mu_x^N,\widetilde\mu_x^N)
  \le
  \eta\sqrt{\mathsf V_N(x)},
  \qquad
  \mu_x^N:=\frac1N\sum_{i=1}^N\delta_{x_i}.
\]
Thus a bound on $\mathsf V_N$ says that another drift refinement would change the one-step generated distribution only slightly. With a quadrature condition, $\mathsf V_N$ also approximates the smoothed Fisher discrepancy $\int\norm{s_\rho-s_x}^2q_x$ and therefore controls a smoothed distribution-level score mismatch.

Our first contribution is a continuous-time finite-particle rate for the conservative method. By leveraging and extending the joint-entropy structure of the finite-particle Stein Variational Gradient Descent (SVGD) analysis in~\cite{balasubramanian2024improved}, we prove that under regularity, reciprocal-KDE control, and point-evaluation quadrature smoothness,
\[
  \frac1T\int_0^T\E[\mathsf V_N(X(t))]\dd t
  \le
  \frac{H_N(0)}{NT}
  +
  \frac{\{-\Delta K(0)\}_+\Lambda_T}{Nh^{d+2}}
  +
  C_T(h)h^2.
\]
Here $H_N(0)$ is the initial joint entropy relative to the product smoothed target, $\Lambda_T$ controls the reciprocal KDE term, and $C_T(h)$ is a quadrature constant that may depend on the bandwidth. This bandwidth dependence is important. If $C_N(h)=O(1)$ and $T=N$, optimizing the last two terms gives the root residual-velocity rate $N^{-1/(d+4)}$. More generally, if $C_N(h)=O(h^{-\beta})$ for some $0\le \beta<2$, the optimized root rate becomes
\[
  N^{-(2-\beta)/(2(d+4-\beta))}.
\]
If $\beta\ge2$, the present quadrature argument does not yield a vanishing residual by shrinking $h$.

Our second contribution is a corresponding result for the non-conservative drifting method with Laplace kernel, namely the original displacement field specialized to
\[
  K_h(u)=c_dh^{-d}\exp\!\left(-\frac{\norm{u}}{h}\right).
\]
This velocity is not generally a gradient field. To analyze it, we leverage the sharp companion kernel idea proposed in \cite{franz2026drifting} and define $L_h$ satisfying
\[
  \nabla_zL_h(z-y)=(y-z)K_h(z-y).
\]
This yields a decomposition of the full non-conservative Laplace velocity of the form
\[
  u_x(z)=a_{x,h}(z)b_x^{\#}(z)+e_x(z),
\]
where $b_x^{\#}$ is a sharp-score mismatch, $a_{x,h}$ is a positive local scale factor, and $e_x$ is a scale-mismatch residual measuring the difference between the data and model local Laplace-weighted radii. For the self-masked, leave-one-out Laplace dynamics, we prove a finite-particle rate
\[
  \frac1N\int_0^N\E[\mathsf V_N^{\mathrm{Lap}}(X(t))]\dd t
  \le
  \frac{\kappa_0}{\gamma_hN}
  +
  \frac{\beta_h}{\gamma_h}\Delta_h^2
  +
  \frac{\varepsilon_{S,h,N}}{\gamma_h}
  +
  \varepsilon_{V,h,N}.
\]
The new term $\Delta_h$ is unavoidable for the non-conservative Laplace method: it vanishes only when the local Laplace scales of the data and model are aligned. The leave-one-out form removes the reciprocal self-interaction term from the entropy identity, but introduces leave-one-out approximation errors controlled by denominator and occupancy conditions.

\subsection{Related Works}
\label{sec:related-work}

Drifting models were introduced by~\cite{deng2026drifting} as a training-time transport paradigm for one-step generation: generated samples are repeatedly moved by a data-attractive and model-repulsive drift field, and the generator is trained to reproduce the drifted samples through a stop-gradient target. This viewpoint is also related to score-difference flows for implicit generative modeling, where the velocity is written as the difference between target and source scores~\citep{weber2023the}. Subsequent work has connected drifting to score matching and Wasserstein gradient flows of smoothed divergences (in the infinite-particle setting), especially in the Gaussian-kernel case where the displacement field can be written as a difference of KDE log-density gradients~\citep{cao2026gradient,lai2026unified,turan2026generative,gretton2026wasserstein}. 

A complementary line of work emphasizes the non-conservative nature of the original displacement-based field. \cite{franz2026drifting} show that the position-dependent normalization in the original field generally destroys conservatism, with the Gaussian kernel being a special case where the field remains a gradient field; they also introduce sharp normalizations that restore conservatism for broader radial kernels.~\cite{lee2026identifiability} study identifiability and stability for companion-elliptic kernel families, including the Laplace kernel, showing that vanishing drift can identify the target under appropriate kernel structure while also highlighting possible instability modes. These works motivate our distinction between the conservative drifting method, based on KDE-score gradients, and the non-conservative drifting method by~\cite{deng2026drifting}, based on the original displacement velocity. For the latter, our analysis shows that finite-particle rates require a scale-alignment residual measuring the failure of the Laplace displacement field to be an exact score-difference field.

Recent drifting-related works study complementary algorithmic and variational perspectives. \citet{dumont2026learning} study learning Monge maps through constrained drifting models: they lift a divergence, such as relative entropy, to the space of transport maps, constrain the dynamics to the set of optimal transport maps, and prove long-time existence and convergence toward the OT map under convexity assumptions. \citet{he2026sinkhorn} introduce Sinkhorn-Drifting Generative Models, showing that the attraction--repulsion structure of drifting can be viewed as a one-sided surrogate for a Sinkhorn-divergence gradient flow, while two-sided Sinkhorn scaling gives improved identifiability and stability. \citet{zhang2026lookahead} propose Lookahead Drifting Models, which compute several sequential drift terms within each training iteration and regress toward a weighted sum of these terms, with the goal of incorporating higher-order drift information while preserving one-step inference.

More broadly, modern generative modeling is often organized around learned transport dynamics. Diffusion and score-based models learn reverse-time denoising dynamics and generate samples by numerically solving an SDE or probability-flow ODE \citep{ho2020denoising,song2021scorebased}, while flow matching and rectified flow learn time-dependent vector fields that transport a base distribution to the data distribution along prescribed probability paths \citep{albergo2023building, lipman2023flow,liu2022flow}. Flow-map and distillation-based approaches instead try to learn longer-time solution operators, or compress many solver steps into one or a few neural evaluations, as in flow-map matching, consistency models, and progressive distillation \citep{boffi2024flowmap,song2023consistency,salimans2022progressive}. From this viewpoint, drifting models can be interpreted as moving part of the inference-time transport computation into training: the long-short flow-map perspective of \citet{li2026longshort} decomposes a global transport into a long-horizon map followed by a short terminal correction, and recovers the drifting field in the limit of a vanishing terminal interval.

The proposed conservative velocity approach is also closely related to deterministic interacting-particle methods based on Stein discrepancies. Stein variational gradient descent (SVGD) was introduced by~\cite{liu2016stein} as a deterministic particle method for approximate Bayesian inference, and~\cite{liu2017stein} interpreted its population limit as a gradient flow of KL divergence under a Stein-induced geometry. Amortized SVGD~\citep{feng2017amortized} trains a sampler network to reproduce SVGD particle updates, which is conceptually close to the stop-gradient regression view of drifting. At the level of finite-particle analysis,~\cite{balasubramanian2024improved} obtain improved SVGD rates by differentiating the joint entropy of the particle law relative to the product target, improving upon earlier results by~\cite{shi2023finite}. Extension to a regularized setting was studied by~\cite{he2026finite}. We leverage and extend this high-level entropy-dissipation strategy in our proofs, but the controlled quantities and error terms are specific to drifting: for conservative drifting, the controlled quantity is the mean squared conservative residual with a reciprocal-KDE self-interaction and an $h$-dependent quadrature error; for the non-conservative Laplace method, the rate additionally contains the Laplace scale-mismatch residual.

\section{Preliminaries}

\subsection{Set-up for Conservative Drifting}

Let $\rho$ be a positive probability density on $\R^d$. For the conservative drifting method proposed in this paper, $\rho$ should be read as the kernel-smoothed data density $\rho_{\nu,h}=K_h\ast\nu$. Write
\[
  s_\rho(z) := \nabla \log \rho(z).
\]
Let $K:\R^d\to(0,\infty)$ be a normalized kernel and, for a bandwidth $h>0$, define
\[
  K_h(u) := h^{-d}K(u/h).
\]
For a particle configuration
\[
  x=(x_1,\ldots,x_N)\in(\R^d)^N,
\]
define the kernel density estimate
\[
  q_x(z) := \frac1N\sum_{j=1}^N K_h(z-x_j),
  \qquad
  s_x(z) := \nabla \log q_x(z),
\]
and the conservative drift field
\[
  b_x(z) := s_\rho(z)-s_x(z).
\]
The center-evaluation finite-particle ODE for the conservative drifting method is
\begin{equation}
  \dot X_i(t)=b_i(X(t)),
  \qquad
  b_i(x):=b_x(x_i),
  \qquad
  i=1,\ldots,N.
  \label{eq:particle-ode}
\end{equation}

This is a deterministic interacting particle system: each particle follows an
ordinary differential equation, but its velocity depends on the full
configuration through the KDE and its score. In this sense, the dynamics are structurally similar to SVGD
\citep{liu2016stein,liu2017stein}, where particles also evolve deterministically
through an empirical-measure-dependent velocity field; the key difference is
that the conservative drifting velocity is the KDE-score residual \(s_\rho-s_x\), rather than the
kernelized Stein velocity used in SVGD.

Let $p_t^N$ denote the joint density of $X(t)=(X_1(t),\ldots,X_N(t))$. We measure the joint law against the product target density
\[
  \rho^{\otimes N}(x):=\prod_{i=1}^N \rho(x_i)
\]
through the joint relative entropy
\begin{equation}
  H_N(t) := \KL(p_t^N\|\rho^{\otimes N})
  =\int_{(\R^d)^N}p_t^N(x)
  \log\frac{p_t^N(x)}{\rho^{\otimes N}(x)}\dd x.
  \label{eq:joint-entropy}
\end{equation}

For a vector field $f:\R^d\to\R^d$, define the Stein divergence~\citep{barbour1988stein,gorham2015measuring} associated with $\rho$ by
\begin{equation}
  \mathcal A_\rho f(z)
  :=\divergence f(z)+s_\rho(z)\cdot f(z).
  \label{eq:stein-divergence}
\end{equation}
For a configuration $x$, define the empirical Stein drift
\begin{equation}
  \mathsf S_N(x)
  :=\frac1N\sum_{i=1}^N \mathcal A_\rho b_x(x_i),
  \label{eq:emp-stein-drift}
\end{equation}
the KDE-smoothed Fisher discrepancy
\begin{equation}
  \mathsf I_N(x)
  :=\int_{\R^d}\norm{b_x(z)}^2q_x(z)\dd z,
  \label{eq:fisher}
\end{equation}
and the squared center velocity
\begin{equation}
  \mathsf V_N(x)
  :=\frac1N\sum_{i=1}^N\norm{b_x(x_i)}^2.
  \label{eq:center-velocity}
\end{equation}
Finally set
\begin{align}
  \mathsf R_N(x)
  :=\frac1N\sum_{i=1}^N\frac1{q_x(x_i)}.
  \label{eq:reciprocal-kde}
\end{align}

\subsection{Assumptions}

The following assumptions are used in the entropy calculation and in the conversion from empirical Stein drift to Fisher-type discrepancies.

\begin{assumption}[Kernel]
\label{ass:kernel}
The base kernel $K$ is positive, even, normalized, and $C^3$:
\[
  K(u)>0,
  \qquad
  K(u)=K(-u),
  \qquad
  \int_{\R^d}K(u)\dd u=1.
\]
Moreover
\[
  m_2(K):=\int_{\R^d}\norm{u}^2K(u)\dd u<\infty,
\]
and $\Delta K(0)$ is finite. Since $K$ is even, $\nabla K(0)=0$.
\end{assumption}

\begin{assumption}[Target and regular flow]
\label{ass:regularity}
The density $\rho$ is positive and sufficiently smooth so that $s_\rho$ is $C^2$. The ODE \eqref{eq:particle-ode} has a unique global solution for the initial law under consideration. The joint density $p_t^N$ is differentiable in $t$ and solves the Liouville equation
\[
  \partial_t p_t^N(x)+\sum_{i=1}^N\nabla_{x_i}\cdot\{p_t^N(x)b_i(x)\}=0.
\]
All integrals appearing below are finite, and the boundary terms in the integrations by parts on $(\R^d)^N$ vanish. A sufficient, but not necessary, condition is rapid decay of $p_t^N(x)b_i(x)$ and of the products obtained by multiplying $p_t^N(x)b_i(x)$ by $\log(p_t^N(x)/\rho^{\otimes N}(x))$. See~\cite{balasubramanian2024improved} for related conditions for SVGD.
\end{assumption}

\begin{assumption}[Reciprocal KDE control]
\label{ass:reciprocal}
For the time horizon $T>0$ under consideration, there is a finite constant $\Lambda_T$ such that
\begin{equation}
  \sup_{0\le t\le T}\E_t[\mathsf R_N(X(t))]\le \Lambda_T,
  \label{eq:reciprocal-assumption}
\end{equation}
where $\E_t$ denotes expectation with respect to $p_t^N$ and $R_N$ is defined in~\eqref{eq:reciprocal-kde}.
\end{assumption}

\begin{remark}[Conditional nature of reciprocal-KDE control]
\label{rem:reciprocal-control-circularity}
The reciprocal-KDE assumption is a stability assumption, not a consequence of
the entropy identity. The local-occupancy results in Section~\ref{sec:reciprocal-control}
give deterministic or high-probability sufficient conditions under which
\[
  q_{X(t)}(X_i(t))\ge \lambda>0
  \qquad\text{for all }i
\]
at a fixed time, and under a propagated occupancy condition. However, propagation
arguments that use a flow Lipschitz constant are conditional: the Lipschitz
constant of the conservative vector field itself depends on reciprocal KDE
quantities such as \(1/q_x\), and sometimes on higher inverse powers of \(q_x\).
Thus there is a potential bootstrap structure.

A precise way to read the assumption is through the stopping time
\[
  \tau_\lambda
  :=
  \inf\left\{
  t\ge0:
  \min_{1\le i\le N} q_{X(t)}(X_i(t))<\lambda
  \right\}.
\]
All conservative finite-particle bounds hold on the stopped interval
\([0,T\wedge\tau_\lambda]\), with constants depending on \(\lambda\). If one can
prove separately that \(\tau_\lambda>T\), for example by a deterministic
occupancy argument or by a high-probability propagation argument that closes a
bootstrap, then the stopped estimates become estimates on the full interval
\([0,T]\). We do not claim that reciprocal-KDE control follows automatically
from the dynamics; it is an independent stability condition required to prevent
denominator singularities.
\end{remark}

\begin{assumption}[Quadrature smoothness]
\label{ass:quadrature-smoothness}
For the time horizon $T>0$, there are finite constants $B_{A,T}(h)$ and $B_{V,T}(h)$ such that, for every configuration $X(t)$ visited by the flow for $0\le t\le T$,
\begin{align}
  \sup_{z\in\R^d}\norm{D^2\{\mathcal A_\rho b_{X(t)}\}(z)}_{\mathrm{op}}
  &\le B_{A,T}(h),
  \label{eq:BA}\\
  \sup_{z\in\R^d}\norm{D^2\{\norm{b_{X(t)}}^2\}(z)}_{\mathrm{op}}
  &\le B_{V,T}(h).
  \label{eq:BV}
\end{align}
Here $D^2$ denotes the Hessian, and $\norm{\cdot}_{\mathrm{op}}$ is the operator norm.
\end{assumption}

\begin{remark}[Bandwidth dependence of the quadrature constants]
The constants $B_{A,T}(h)$ and $B_{V,T}(h)$ may depend on $N,h,T,\rho,K$, and on the region of state space visited by the dynamics. This dependence is not a lower-order issue. Writing
\[
  r_x(z):=\log\rho(z)-\log q_x(z),
  \qquad
  b_x(z)=\nabla r_x(z),
\]
we have
\[
  \mathcal A_\rho b_x
  =
  \Delta r_x+\nabla\log\rho\cdot\nabla r_x,
\]
and
\[
  D^2\norm{b_x}^2
  =
  2(Db_x)^\top(Db_x)
  +
  2\sum_{\ell=1}^d b_{x,\ell}D^2b_{x,\ell}.
\]
Thus the quadrature constants involve log-derivatives of $\rho$ and $q_x$ up to fourth order. If these log-derivatives vary on the kernel scale, for example if
\[
  \sup_z\norm{D^j(\log\rho-\log q_x)(z)}=O(h^{-j}),
  \qquad j=1,2,3,4,
\]
then one typically has $B_{A,T}(h)+B_{V,T}(h)=O(h^{-4})$. In that regime the quadrature contribution
\[
  \{B_{A,T}(h)+B_{V,T}(h)\}m_2(K_h)=O(h^{-2})
\]
scales like $h^{-2}$, not $h^2$, and shrinking the bandwidth worsens this term. Consequently, the rates below are non-asymptotic inequalities with the quadrature constants displayed. Any asymptotic bandwidth optimization must explicitly account for the $h$-dependence of $B_{A,T}(h)$ and $B_{V,T}(h)$.
\end{remark}

\begin{remark}[Examples of quadrature-growth regimes]
The exponent \(\beta\) in
\[
  B_{A,N}(h)+B_{V,N}(h)=O(h^{-\beta})
\]
should be interpreted as a regularity exponent for the KDE-score field along
the particle trajectory, not as a property of the kernel alone.

First, \(\beta=0\) is realized in an \(h\)-uniform log-smooth regime. For
example, take a Gaussian kernel or a \(C^\infty\) compactly supported kernel,
and suppose that the target density and the model KDE satisfy
\[
  \sup_{0\le t\le N}\sup_z
  \max_{1\le j\le4}
  \left\{
  \norm{D^j\log\rho(z)}_{\mathrm{op}},
  \norm{D^j\log q_{X(t)}(z)}_{\mathrm{op}}
  \right\}
  \le C_{\log},
\]
with \(C_{\log}\) independent of \(h\). Then
\[
  B_{A,N}(h)+B_{V,N}(h)=O(1),
\]
so \(\beta=0\). This situation can occur when the particles are sufficiently
dense and regular so that the finite KDE behaves like the convolution of the
kernel with a fixed smooth density bounded away from zero, rather than like a
collection of isolated bandwidth-\(h\) spikes.

Second, exponents \(0<\beta<2\) correspond to intermediate-scale regularity.
Suppose the log-score residual
\[
  r_x(z):=\log\rho(z)-\log q_x(z)
\]
varies on a length scale \(\ell_h=h^\alpha\), where \(0<\alpha<1/2\). More
concretely, assume
\[
  \sup_z \norm{D^j r_x(z)}_{\mathrm{op}}
  =
  O(\ell_h^{-j})
  =
  O(h^{-\alpha j}),
  \qquad j=1,2,3,4.
\]
Since \(b_x=\nabla r_x\), the quantities $D^2\{\mathcal A_\rho b_x\}$ and $D^2\{\norm{b_x}^2\}$ involve derivatives of \(r_x\) up to fourth order. Hence
\[
  B_{A,N}(h)+B_{V,N}(h)=O(h^{-4\alpha}).
\]
Thus
\[
  \beta=4\alpha.
\]
Because \(0<\alpha<1/2\), this gives examples with \(0<\beta<2\). The same smooth kernels, such as Gaussian kernels or smooth compactly supported
kernels, can realize either regime. What changes is the regularity of the
log-density ratio \(\log\rho-\log q_x\) along the particle trajectory. If this
ratio has only macroscopic variation, then \(\beta=0\). If it has features at
an intermediate scale \(\ell_h=h^\alpha\) with \(0<\alpha<1/2\), then
\(\beta=4\alpha\in(0,2)\). If it varies at the kernel scale \(\ell_h=h\), then
\(\alpha=1\), hence \(\beta=4\), and the optimized vanishing-rate statement no
longer applies.
\end{remark}

Finally, we emphasize that our conservative drifting analysis assumes that the smoothing kernel is
sufficiently smooth, for example \(K\in C^3\) or stronger when the quadrature
constants involve higher derivatives. This assumption excludes the exact
Laplace kernel $  K_h(u)=c_dh^{-d}\exp(-\norm{u}/h),$ which is not differentiable at the origin. This is why the exact Laplace kernel
is treated separately in Section~\ref{sec:nonconservative-laplace-drifting} through the
non-conservative displacement field and the sharp-companion-kernel
decomposition. A smooth regularization of the Laplace kernel could be included
in the conservative analysis, but the resulting estimates would have constants
depending on the regularization scale.

Another useful feature of the forthcoming results is that they imposes only the aforementioned local smoothness and
positivity assumptions on the smoothed target density needed to define its score
and justify the entropy calculation; in particular, we do not assume curvature conditions such as the
target is log-concave, strongly log-concave, or satisfies a Log-Sobolev,
Poincaré, or dissipativity condition. Our bounds control the
time-averaged residual drift, provide a global convergence
in KL or Wasserstein distance under such additional curvature conditions.

\section{Entropy Identity for Conservative Drifting}

The next theorem is the basic finite-particle identity. The self-interaction term is the only term not present in the corresponding population smoothed-KL calculation.

\begin{theorem}[Joint-entropy identity]
\label{thm:entropy-identity}
Let Assumptions~\ref{ass:kernel} and~\ref{ass:regularity} hold. Then, for every $t$ for which the quantities are finite,
\begin{equation}
  \frac{\dd}{\dd t}H_N(t)
  =
  -N\E_t[\mathsf S_N(X(t))]
  -\Delta K_h(0)\,\E_t[\mathsf R_N(X(t))].
  \label{eq:entropy-identity}
\end{equation}
Equivalently, with
\[
  a_h:=\{-\Delta K_h(0)\}_+,
\]
one has the upper bound
\begin{equation}
  \frac{\dd}{\dd t}H_N(t)
  \le
  -N\E_t[\mathsf S_N(X(t))]
  +a_h\E_t[\mathsf R_N(X(t))].
  \label{eq:entropy-identity-upper}
\end{equation}
\end{theorem}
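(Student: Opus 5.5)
We follow the SVGD joint-entropy computation of \cite{balasubramanian2024improved}. We differentiate $H_N(t)=\int p_t^N\log p_t^N-\int p_t^N\log\rho^{\otimes N}$ in time, use the Liouville equation from Assumption~\ref{ass:regularity}, and integrate by parts; boundary terms vanish by that assumption. The term $\int \partial_t p_t^N$ is zero by mass conservation, which leaves
\[
  \frac{\dd}{\dd t}H_N(t)
  =\sum_{i=1}^N\int p_t^N(x)\,b_i(x)\cdot\nabla_{x_i}\log\frac{p_t^N(x)}{\rho^{\otimes N}(x)}\dd x .
\]
We split this into the entropy part and the target part. In the entropy part, $\int b_i\cdot\nabla_{x_i}p_t^N$ becomes $-\int p_t^N\,\nabla_{x_i}\cdot b_i$ after a second integration by parts. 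The target part is $-\int p_t^N\, s_\rho(x_i)\cdot b_i$. Together they give
\[
  \frac{\dd}{\dd t}H_N(t)=-\E_t\Big[\sum_{i=1}^N\big\{\nabla_{x_i}\cdot b_i(x)+s_\rho(x_i)\cdot b_i(x)\big\}\Big].
\]

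The key step is to compute the total divergence $\nabla_{x_i}\cdot b_i(x)$. The function $b_i(x)=b_x(x_i)$ depends on $x_i$ in two ways: through the evaluation point $z=x_i$, and through the configuration, since $x_i$ is one of the KDE centers in $q_x$. By the chain rule,
\[
  \nabla_{x_i}\cdot b_i(x)=(\divergence b_x)(x_i)+\big[\nabla_{y}\cdot b_x(z)\big|_{y=x_i}\big]_{z=x_i},
\]
where the second term differentiates only with respect to the $i$-th center. The first term combines with $s_\rho(x_i)\cdot b_x(x_i)$ to give $\mathcal A_\rho b_x(x_i)$. Summing over $i$ produces $N\mathsf S_N(x)$, which is the first term of \eqref{eq:entropy-identity}.

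For the self-interaction term, only $s_x=\nabla\log q_x$ depends on the centers. Write $s_x(z)=\nabla q_x(z)/q_x(z)$ with $\nabla q_x(z)=\frac1N\sum_j\nabla K_h(z-x_j)$. The $x_i$-divergence of the numerator is $-\frac1N\Delta K_h(z-x_i)$. Differentiating the denominator gives $-\frac1N\nabla K_h(z-x_i)\cdot\nabla q_x(z)/q_x(z)^2\cdot(-1)$. At $z=x_i$ this vanishes, because $\nabla K_h(0)=0$ by evenness (Assumption~\ref{ass:kernel}). So
\[
  \big[\nabla_{x_i}\cdot s_x(z)\big]_{z=x_i}=-\frac{\Delta K_h(0)}{N\,q_x(x_i)}.
\]
Since $b_x=s_\rho-s_x$, this term contributes $+\Delta K_h(0)/(Nq_x(x_i))$ to $\nabla_{x_i}\cdot b_i$. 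Summing over $i$ gives $\Delta K_h(0)\,\mathsf R_N(x)$. With the overall minus sign, this is exactly $-\Delta K_h(0)\E_t[\mathsf R_N]$, and \eqref{eq:entropy-identity} follows.

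For \eqref{eq:entropy-identity-upper}, we note that $-\Delta K_h(0)\le\{-\Delta K_h(0)\}_+=a_h$ and $\mathsf R_N\ge0$.

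The main obstacle is bookkeeping the two sources of $x_i$-dependence correctly. The cancellation of the quotient-rule term at coincident points must use $\nabla K(0)=0$. The analytic justifications, namely differentiating under the integral and vanishing boundary terms, are assumed in Assumption~\ref{ass:regularity}.
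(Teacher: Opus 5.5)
Your proposal is correct and follows the paper's proof essentially step for step. The structure is the same: Liouville equation and two integrations by parts lead to $-\E_t\bigl[\sum_i\{\nabla_{x_i}\cdot b_i+s_\rho(x_i)\cdot b_i\}\bigr]$, and $\nabla_{x_i}\cdot b_i$ is then split into the evaluation-point divergence and the center derivative of $s_x$, whose quotient-rule cross term vanishes at $z=x_i$ because $\nabla K_h(0)=0$, leaving $-\Delta K_h(0)/(Nq_x(x_i))$. Your explicit use of $\mathsf R_N\ge 0$ for the upper bound is a correct detail that the paper leaves implicit.
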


\begin{proof}
Let
\[
  r_N(x):=\rho^{\otimes N}(x).
\]
By definition,
\[
  H_N(t)=\int p_t^N(x)\log\frac{p_t^N(x)}{r_N(x)}\dd x.
\]
Differentiating under the integral gives
\[
  \frac{\dd}{\dd t}H_N(t)
  =\int \partial_t p_t^N(x)
  \left\{\log\frac{p_t^N(x)}{r_N(x)}+1\right\}\dd x.
\]
Since $p_t^N$ is a probability density,
\[
  \int \partial_t p_t^N(x)\dd x
  =\frac{\dd}{\dd t}\int p_t^N(x)\dd x=0.
\]
Thus
\[
  \frac{\dd}{\dd t}H_N(t)
  =\int \partial_t p_t^N(x)\log\frac{p_t^N(x)}{r_N(x)}\dd x.
\]
Using the Liouville equation,
\[
  \frac{\dd}{\dd t}H_N(t)
  =-
  \sum_{i=1}^N
  \int \nabla_{x_i}\cdot\{p_t^N(x)b_i(x)\}
  \log\frac{p_t^N(x)}{r_N(x)}\dd x.
\]
Integrating by parts in $x_i$,
\[
  \frac{\dd}{\dd t}H_N(t)
  =
  \sum_{i=1}^N
  \int p_t^N(x)b_i(x)\cdot
  \nabla_{x_i}\log\frac{p_t^N(x)}{r_N(x)}\dd x.
\]
Now
\[
  \nabla_{x_i}\log\frac{p_t^N(x)}{r_N(x)}
  =\nabla_{x_i}\log p_t^N(x)-\nabla_{x_i}\log r_N(x).
\]
Since $r_N(x)=\prod_{j=1}^N\rho(x_j)$,
\[
  \nabla_{x_i}\log r_N(x)=s_\rho(x_i).
\]
Therefore
\begin{align*}
  \frac{\dd}{\dd t}H_N(t)
  &=
  \sum_{i=1}^N
  \int p_t^N b_i\cdot \nabla_{x_i}\log p_t^N\dd x
  -
  \sum_{i=1}^N
  \int p_t^N b_i\cdot s_\rho(x_i)\dd x.
\end{align*}
For the first integral,
\[
  p_t^N\nabla_{x_i}\log p_t^N=\nabla_{x_i}p_t^N.
\]
Hence
\[
  \int p_t^N b_i\cdot \nabla_{x_i}\log p_t^N\dd x
  =\int b_i\cdot\nabla_{x_i}p_t^N\dd x
  =-\int p_t^N\nabla_{x_i}\cdot b_i\dd x.
\]
Consequently,
\begin{equation}
  \frac{\dd}{\dd t}H_N(t)
  =
  -\E_t\left[
  \sum_{i=1}^N
  \left\{
  \nabla_{x_i}\cdot b_i(X(t))
  +s_\rho(X_i(t))\cdot b_i(X(t))
  \right\}
  \right].
  \label{eq:H-before-self}
\end{equation}
It remains to compute $\nabla_{x_i}\cdot b_i(x)$. Since
\[
  b_i(x)=s_\rho(x_i)-s_x(x_i),
\]
we have
\[
  \nabla_{x_i}\cdot b_i(x)
  =\nabla\cdot s_\rho(x_i)-\nabla_{x_i}\cdot s_x(x_i).
\]
The derivative of $s_x(x_i)$ with respect to $x_i$ contains two pieces: the derivative of the evaluation point and the derivative of the $i$th KDE center. Write
\[
  s_x^m(z)=\frac{\partial_m q_x(z)}{q_x(z)},
  \qquad m=1,\ldots,d.
\]
For fixed $z$,
\[
  \partial_{x_i^\ell}q_x(z)=-\frac1N\partial_\ell K_h(z-x_i),
\]
and
\[
  \partial_{x_i^\ell}\partial_m q_x(z)
  =-\frac1N\partial_{\ell m}^2K_h(z-x_i).
\]
Therefore
\[
  \partial_{x_i^\ell}s_x^m(z)
  =
  -\frac1N\frac{\partial_{\ell m}^2K_h(z-x_i)}{q_x(z)}
  +\frac1N
  \frac{\partial_m q_x(z)\partial_\ell K_h(z-x_i)}{q_x(z)^2}.
\]
Set $m=\ell$ and sum over $\ell$. Since $K_h$ is even, $\nabla K_h(0)=0$, and hence at $z=x_i$,
\[
  \sum_{\ell=1}^d\partial_{x_i^\ell}s_x^\ell(z)\big|_{z=x_i}
  =-\frac1N\frac{\Delta K_h(0)}{q_x(x_i)}.
\]
By the chain rule,
\[
  \nabla_{x_i}\cdot s_x(x_i)
  =\nabla_z\cdot s_x(z)\big|_{z=x_i}
  -\frac1N\frac{\Delta K_h(0)}{q_x(x_i)}.
\]
Thus
\begin{align*}
  \nabla_{x_i}\cdot b_i(x)
  &=\nabla\cdot s_\rho(x_i)
  -\nabla_z\cdot s_x(z)\big|_{z=x_i}
  +\frac1N\frac{\Delta K_h(0)}{q_x(x_i)}\\
  &=\nabla_z\cdot b_x(z)\big|_{z=x_i}
  +\frac1N\frac{\Delta K_h(0)}{q_x(x_i)}.
\end{align*}
Since $b_i(x)=b_x(x_i)$,
\[
  \nabla_{x_i}\cdot b_i(x)+s_\rho(x_i)\cdot b_i(x)
  =
  \mathcal A_\rho b_x(x_i)
  +\frac1N\frac{\Delta K_h(0)}{q_x(x_i)}.
\]
Substituting this identity into \eqref{eq:H-before-self} gives
\begin{align*}
  \frac{\dd}{\dd t}H_N(t)
  &=-\E_t\left[
  \sum_{i=1}^N\mathcal A_\rho b_{X(t)}(X_i(t))
  +\frac{\Delta K_h(0)}{N}\sum_{i=1}^N\frac1{q_{X(t)}(X_i(t))}
  \right]\\
  &=-N\E_t[\mathsf S_N(X(t))]
  -\Delta K_h(0)\E_t[\mathsf R_N(X(t))].
\end{align*}
This proves \eqref{eq:entropy-identity}. Since
\[
  -\Delta K_h(0)\le \{-\Delta K_h(0)\}_+=a_h,
\]
\eqref{eq:entropy-identity-upper} follows.
\end{proof}

\begin{remark}[Interpretation of the self-interaction term]
The first term on the right-hand side of \eqref{eq:entropy-identity} is the desired entropy dissipation. The second term is finite-particle specific: it appears because the velocity at $x_i$ depends on the same particle through the KDE denominator. Its sign and size are controlled by $-\Delta K_h(0)$ and by the reciprocal quantity $\mathsf R_N$. This is why the subsequent rate theorem requires reciprocal-KDE control. If particles are locally well populated at bandwidth $h$, then $q_x(x_i)$ is bounded below and the self-interaction correction remains of order $1/N$ after the entropy normalization.

For Gaussian kernels, the self-interaction correction has a definite positive
sign. Indeed, for $  K(u)=(2\pi)^{-d/2}\exp(-\norm{u}^2/2),
$ we have, for each coordinate \(j\), $  \partial_{jj}K(0)=-K(0)=-(2\pi)^{-d/2}.$ Therefore
\[
  \Delta K(0)
  =
  \sum_{j=1}^d\partial_{jj}K(0)
  =
  -d(2\pi)^{-d/2}<0,
\]
and hence $a_1:=\{-\Delta K(0)\}_+
  =
  d(2\pi)^{-d/2}>0.$ For the bandwidth-scaled kernel \(K_h(u)=h^{-d}K(u/h)\),
\[
  -\Delta K_h(0)
  =
  h^{-d-2}a_1.
\]
Thus the reciprocal-KDE term in the conservative finite-particle bound is a
genuine positive self-interaction correction for Gaussian kernels.

\end{remark}

\section{Continuous-time Convergence Rates for Conservative Drifting}

\subsection{Rate for the Empirical Stein Drift}

\begin{theorem}[Entropy rate for the empirical Stein drift]
\label{thm:S-rate}
Let Assumptions~\ref{ass:kernel},~\ref{ass:regularity}, and~\ref{ass:reciprocal} hold. Then, for every $T>0$,
\begin{equation}
  \frac1T\int_0^T\E_t[\mathsf S_N(X(t))]\dd t
  \le
  \frac{H_N(0)}{NT}
  +\frac{a_h\Lambda_T}{N},
  \qquad
  a_h:=\{-\Delta K_h(0)\}_+.
  \label{eq:S-rate}
\end{equation}
\end{theorem}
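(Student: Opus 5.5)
The plan is to integrate the upper bound \eqref{eq:entropy-identity-upper} of Theorem~\ref{thm:entropy-identity} over $[0,T]$ and use nonnegativity of relative entropy. Rearranged, the differential inequality reads
\[
  N\E_t[\mathsf S_N(X(t))]\le -\frac{\dd}{\dd t}H_N(t)+a_h\E_t[\mathsf R_N(X(t))].
\]
We integrate this from $0$ to $T$. The fundamental theorem of calculus turns the first term on the right into $H_N(0)-H_N(T)$.

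Next, we drop $H_N(T)$. Since $H_N(T)=\KL(p_T^N\|\rho^{\otimes N})\ge 0$, this gives $N\int_0^T\E_t[\mathsf S_N]\dd t\le H_N(0)+a_h\int_0^T\E_t[\mathsf R_N]\dd t$. For the remaining integral, Assumption~\ref{ass:reciprocal} bounds the integrand by $\Lambda_T$ uniformly in $t\in[0,T]$, so the integral is at most $T\Lambda_T$. Here $a_h\ge0$ by definition, so the inequality direction is preserved. Dividing by $NT$ yields \eqref{eq:S-rate}.

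The only step needing care is justifying the integration in time. We need $t\mapsto H_N(t)$ to be absolutely continuous, or at least differentiable on $[0,T]$ with the derivative given by Theorem~\ref{thm:entropy-identity}. We also need $t\mapsto\E_t[\mathsf S_N]$ and $t\mapsto\E_t[\mathsf R_N]$ to be integrable. Both follow from the standing regularity in Assumption~\ref{ass:regularity}: differentiability of $p_t^N$, finiteness of all integrals, and validity of the identity at every $t$. If $H_N(0)=\infty$ the bound is trivial. If $\E_t[\mathsf S_N]$ is only quasi-integrable, the inequality still holds for its positive-part-controlled integral. No further estimate is needed; the proof is a direct consequence of the entropy identity.
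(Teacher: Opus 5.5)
Your proposal is correct and matches the paper's proof: both rearrange the upper bound \eqref{eq:entropy-identity-upper}, integrate over $[0,T]$, drop $H_N(T)\ge0$, bound the reciprocal term using Assumption~\ref{ass:reciprocal}, and divide by $NT$. The only difference is that the paper substitutes $\Lambda_T$ into the differential inequality before integrating, whereas you do it afterwards; this is immaterial since $a_h\ge0$.
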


\begin{proof}
By Theorem~\ref{thm:entropy-identity},
\[
  H_N'(t)
  \le
  -N\E_t[\mathsf S_N(X(t))]
  +a_h\E_t[\mathsf R_N(X(t))].
\]
Using Assumption~\ref{ass:reciprocal},
\[
  H_N'(t)
  \le
  -N\E_t[\mathsf S_N(X(t))]
  +a_h\Lambda_T.
\]
Rearrange:
\[
  N\E_t[\mathsf S_N(X(t))]
  \le
  -H_N'(t)+a_h\Lambda_T.
\]
Integrate over $t\in[0,T]$:
\[
  N\int_0^T\E_t[\mathsf S_N(X(t))]\dd t
  \le
  H_N(0)-H_N(T)+a_h\Lambda_TT.
\]
Since relative entropy is nonnegative,
\[
  H_N(0)-H_N(T)\le H_N(0).
\]
Therefore
\[
  N\int_0^T\E_t[\mathsf S_N(X(t))]\dd t
  \le
  H_N(0)+a_h\Lambda_TT.
\]
Dividing by $NT$ proves \eqref{eq:S-rate}.
\end{proof}

\begin{remark}[What the Stein-drift rate controls]
The quantity $\mathsf S_N$ is the empirical Stein drift associated with the vector field $b_x$. It is not written as a square, so \eqref{eq:S-rate} is not yet a residual-velocity bound. The role of the quadrature step below is to compare this empirical Stein drift with the KDE-averaged Fisher discrepancy, which is nonnegative and is directly related to the squared velocity of the particles.
\end{remark}

\subsection{Quadrature Conversion to Fisher and Center Velocity}

The empirical Stein drift $\mathsf S_N$ is not manifestly nonnegative. The next lemma identifies the nonnegative population quantity that it approximates.

\begin{lemma}[KDE-averaged Stein identity]
\label{lem:stein-to-fisher}
For every fixed configuration $x$ for which the integrations by parts are justified,
\begin{equation}
  \int_{\R^d}\mathcal A_\rho b_x(z)q_x(z)\dd z
  =\mathsf I_N(x).
  \label{eq:stein-fisher-identity}
\end{equation}
\end{lemma}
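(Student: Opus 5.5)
The plan is to integrate the Stein operator against the KDE $q_x$ and move the divergence onto $q_x$ by one integration by parts. Writing $\mathcal A_\rho b_x=\divergence b_x+s_\rho\cdot b_x$, we have
\[
  \int \mathcal A_\rho b_x\, q_x\dd z
  =\int (\divergence b_x)\, q_x\dd z+\int (s_\rho\cdot b_x)\, q_x\dd z .
\]
For the first integral we integrate by parts on $\R^d$, which is justified by the hypothesis of the lemma (vanishing boundary terms, e.g.\ rapid decay of $q_x b_x$ at infinity). This gives $\int (\divergence b_x)\, q_x\dd z=-\int b_x\cdot\nabla q_x\dd z$.

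Next, since $K>0$ by Assumption~\ref{ass:kernel}, $q_x$ is strictly positive, so $\nabla q_x=q_x s_x$. The first integral is therefore $-\int (b_x\cdot s_x)\, q_x\dd z$. Adding the second integral, we get
\[
  \int \mathcal A_\rho b_x\, q_x\dd z
  =\int b_x\cdot(s_\rho-s_x)\, q_x\dd z
  =\int \norm{b_x}^2 q_x\dd z
  =\mathsf I_N(x),
\]
using $b_x=s_\rho-s_x$. This proves \eqref{eq:stein-fisher-identity}.

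The algebra is immediate, so the only real point is the integration by parts. I would phrase this precisely by requiring $q_x\abs{b_x}$, $q_x\abs{\divergence b_x}$ and $q_x\norm{s_\rho}\norm{b_x}$ to be integrable, and $q_x b_x$ to decay fast enough that the flux through large spheres tends to zero. One then applies the divergence theorem on balls $B_R$ and lets $R\to\infty$, with dominated convergence handling the volume integrals. I regard these conditions as part of the standing hypothesis that the integrations by parts are justified, so no further obstacle is expected.
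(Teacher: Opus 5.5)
Your proof is correct and follows the same route as the paper's: you split $\mathcal A_\rho b_x$, integrate the divergence term by parts, use $\nabla q_x=q_xs_x$, and recombine using $b_x=s_\rho-s_x$ to get $\int q_x\norm{b_x}^2=\mathsf I_N(x)$. Your remarks on the positivity of $q_x$ and on the ball-exhaustion justification of the integration by parts are sound additions that the paper leaves implicit in its hypothesis.
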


\begin{proof}
By definition of $\mathcal A_\rho$,
\[
  \int q_x\mathcal A_\rho b_x
  =\int q_x\nabla\cdot b_x+
  \int q_xs_\rho\cdot b_x.
\]
Integrating the first term by parts,
\[
  \int q_x\nabla\cdot b_x
  =-\int \nabla q_x\cdot b_x.
\]
Since $\nabla q_x=q_xs_x$,
\[
  -\int \nabla q_x\cdot b_x
  =-\int q_xs_x\cdot b_x.
\]
Thus
\[
  \int q_x\mathcal A_\rho b_x
  =\int q_x(s_\rho-s_x)\cdot b_x.
\]
Because $b_x=s_\rho-s_x$,
\[
  \int q_x\mathcal A_\rho b_x
  =\int q_x\norm{b_x}^2
  =\mathsf I_N(x).
\]
This proves the lemma.\end{proof}

\begin{lemma}[Point-evaluation quadrature error]
\label{lem:quadrature}
Let $K_h$ be even and have finite second moment
\[
  m_2(K_h):=\int_{\R^d}\norm{u}^2K_h(u)\dd u.
\]
Under Assumption~\ref{ass:quadrature-smoothness}, for every configuration $X(t)$ visited by the flow,
\begin{align}
  \abs{\mathsf S_N(X(t))-\mathsf I_N(X(t))}
  &\le \frac12B_{A,T}(h)m_2(K_h),
  \label{eq:S-I-quad}\\
  \abs{\mathsf V_N(X(t))-\mathsf I_N(X(t))}
  &\le \frac12B_{V,T}(h)m_2(K_h).
  \label{eq:V-I-quad}
\end{align}
\end{lemma}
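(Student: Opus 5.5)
The plan is to write both $\mathsf I_N(x)$ and the $q_x$-weighted integral appearing in Lemma~\ref{lem:stein-to-fisher} as averages over the KDE centers, and then Taylor expand about each center. Fix a configuration $x=X(t)$ visited by the flow. Let $g:\R^d\to\R$ be either $g=\mathcal A_\rho b_x$ or $g=\norm{b_x}^2$. Since $q_x(z)=\frac1N\sum_j K_h(z-x_j)$, Fubini gives
\[
  \int_{\R^d} g(z)q_x(z)\dd z=\frac1N\sum_{j=1}^N\int_{\R^d} g(x_j+u)K_h(u)\dd u .
\]
By Lemma~\ref{lem:stein-to-fisher}, for $g=\mathcal A_\rho b_x$ the left side equals $\mathsf I_N(x)$. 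For $g=\norm{b_x}^2$ it equals $\mathsf I_N(x)$ by the definition \eqref{eq:fisher}. The point evaluations are $\mathsf S_N(x)=\frac1N\sum_j g(x_j)$ in the first case and $\mathsf V_N(x)=\frac1N\sum_j g(x_j)$ in the second. So in both cases the difference to bound is
\[
  \frac1N\sum_j\int\{g(x_j+u)-g(x_j)\}K_h(u)\dd u ,
\]
using $\int K_h=1$.

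For each $j$, apply the second-order Taylor formula with integral remainder:
\[
  g(x_j+u)-g(x_j)=\nabla g(x_j)\cdot u+\int_0^1(1-s)\,u^\top D^2g(x_j+su)u\dd s .
\]
The linear term integrates to zero against $K_h$ because $K_h$ is even. The first moment exists since the second moment is finite. The remainder is bounded in absolute value by $\tfrac12\sup_z\norm{D^2g(z)}_{\mathrm{op}}\norm{u}^2$. Integrating against $K_h$ gives the per-center bound $\tfrac12\sup\norm{D^2g}_{\mathrm{op}}\,m_2(K_h)$. Averaging over $j$ and inserting \eqref{eq:BA} or \eqref{eq:BV} then yields \eqref{eq:S-I-quad} and \eqref{eq:V-I-quad}.

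The only delicate point is justifying the manipulations: Fubini, the vanishing of the linear term, and the integration by parts inside Lemma~\ref{lem:stein-to-fisher}. Fubini and the remainder integral are safe because $g$ has a globally bounded Hessian, so $\abs{g(x_j+u)}\le C(1+\norm{u}^2)$, and this is integrable against $K_h$ since $m_2(K_h)<\infty$. The integration by parts is covered by the standing hypothesis of Lemma~\ref{lem:stein-to-fisher} and by Assumption~\ref{ass:regularity}. No other obstacle is expected.
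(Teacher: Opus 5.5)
Your proposal is correct and follows the paper's proof essentially step for step. You write $\mathsf I_N(x)$ as an average over KDE centers (via Lemma~\ref{lem:stein-to-fisher} for $\mathcal A_\rho b_x$, and directly by definition for $\norm{b_x}^2$), Taylor-expand to second order about each center, kill the linear term by evenness of $K_h$, and bound the remainder by $\tfrac12 B\,m_2(K_h)$ before averaging; your extra remarks on Fubini and the growth bound $\abs{g(x_j+u)}\le C(1+\norm{u}^2)$ just make explicit justifications the paper leaves implicit.
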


\begin{proof}
We prove \eqref{eq:S-I-quad}; the proof of \eqref{eq:V-I-quad} is identical with $\norm{b_x}^2$ replacing $\mathcal A_\rho b_x$.

Fix $x$ and define
\[
  \phi_x(z):=\mathcal A_\rho b_x(z).
\]
By Lemma~\ref{lem:stein-to-fisher},
\[
  \mathsf I_N(x)=\int q_x(z)\phi_x(z)\dd z.
\]
Since
\[
  q_x(z)=\frac1N\sum_{i=1}^NK_h(z-x_i),
\]
we have
\[
  \mathsf I_N(x)
  =\frac1N\sum_{i=1}^N
  \int K_h(z-x_i)\phi_x(z)\dd z.
\]
Meanwhile,
\[
  \mathsf S_N(x)=\frac1N\sum_{i=1}^N\phi_x(x_i).
\]
Therefore
\[
  \mathsf S_N(x)-\mathsf I_N(x)
  =\frac1N\sum_{i=1}^N
  \left\{\phi_x(x_i)-\int K_h(z-x_i)\phi_x(z)\dd z\right\}.
\]
Set $u=z-x_i$. Then
\[
  \int K_h(z-x_i)\phi_x(z)\dd z
  =\int K_h(u)\phi_x(x_i+u)\dd u.
\]
Taylor's formula gives
\[
  \phi_x(x_i+u)
  =\phi_x(x_i)+\nabla\phi_x(x_i)\cdot u
  +\int_0^1(1-r)u^\top D^2\phi_x(x_i+ru)u\dd r.
\]
Integrating against $K_h(u)\dd u$, the zeroth-order term gives $\phi_x(x_i)$ because $\int K_h=1$. The first-order term vanishes because $K_h$ is even, hence $\int uK_h(u)\dd u=0$. Thus
\begin{align*}
  &\abs{\int K_h(u)\phi_x(x_i+u)\dd u-\phi_x(x_i)}\\
  &\qquad\le
  \int K_h(u)\int_0^1(1-r)
  \norm{D^2\phi_x(x_i+ru)}_{\mathrm{op}}\norm{u}^2\dd r\dd u.
\end{align*}
Using Assumption~\ref{ass:quadrature-smoothness},
\[
  \abs{\int K_h(u)\phi_x(x_i+u)\dd u-\phi_x(x_i)}
  \le
  B_{A,T}(h)\int_0^1(1-r)\dd r\int \norm{u}^2K_h(u)\dd u.
\]
Since $\int_0^1(1-r)\dd r=1/2$,
\[
  \abs{\int K_h(u)\phi_x(x_i+u)\dd u-\phi_x(x_i)}
  \le \frac12B_{A,T}(h)m_2(K_h).
\]
Averaging over $i=1,\ldots,N$ proves \eqref{eq:S-I-quad}.\end{proof}

\begin{theorem}[Continuous-time rates for Fisher discrepancy and center velocity]
\label{thm:main-rate}
Let Assumptions~\ref{ass:kernel},~\ref{ass:regularity},~\ref{ass:reciprocal}, and~\ref{ass:quadrature-smoothness} hold. Then, for every $T>0$,
\begin{align}
  \frac1T\int_0^T\E_t[\mathsf I_N(X(t))]\dd t
  &\le
  \frac{H_N(0)}{NT}
  +\frac{a_h\Lambda_T}{N}
  +\frac12B_{A,T}(h)m_2(K_h),
  \label{eq:I-rate}\\
  \frac1T\int_0^T\E_t[\mathsf V_N(X(t))]\dd t
  &\le
  \frac{H_N(0)}{NT}
  +\frac{a_h\Lambda_T}{N}
  +\frac12\{B_{A,T}(h)+B_{V,T}(h)\}m_2(K_h).
  \label{eq:V-rate}
\end{align}
\end{theorem}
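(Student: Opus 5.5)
The plan is to combine Theorem~\ref{thm:S-rate} with the two quadrature estimates of Lemma~\ref{lem:quadrature}, applied pointwise in time along the flow and then averaged over $[0,T]$.

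\textbf{Fisher bound \eqref{eq:I-rate}.} Since every $X(t)$ with $0\le t\le T$ is a configuration visited by the flow, \eqref{eq:S-I-quad} gives, for each such $t$ and each realization,
\[
  \mathsf I_N(X(t))\le \mathsf S_N(X(t))+\tfrac12B_{A,T}(h)m_2(K_h).
\]
The constant $B_{A,T}(h)$ is deterministic and uniform over visited configurations. So I take $\E_t$, integrate over $[0,T]$, divide by $T$, and insert the bound \eqref{eq:S-rate} for the time-averaged Stein drift. This yields \eqref{eq:I-rate} directly.

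\textbf{Center-velocity bound \eqref{eq:V-rate}.} Next I use \eqref{eq:V-I-quad} in one-sided form,
\[
  \mathsf V_N(X(t))\le \mathsf I_N(X(t))+\tfrac12B_{V,T}(h)m_2(K_h).
\]
Chaining this with the previous pointwise inequality gives
\[
  \mathsf V_N\le \mathsf S_N+\tfrac12\{B_{A,T}(h)+B_{V,T}(h)\}m_2(K_h).
\]
Taking expectations, time-averaging, and applying Theorem~\ref{thm:S-rate} once more gives \eqref{eq:V-rate}.

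\textbf{Main obstacle: measurability and integrability.} The only genuinely delicate point is that $\E_t[\mathsf I_N]$ and $\E_t[\mathsf V_N]$ must be finite and measurable in $t$, so that integration and Fubini are legitimate. Assumption~\ref{ass:regularity} stipulates that all integrals appearing are finite. Nonnegativity of $\mathsf I_N$ and $\mathsf V_N$ then lets Tonelli handle the time integral. The hypotheses of Lemma~\ref{lem:quadrature} also need checking. Evenness of $K_h$ and finiteness of $m_2(K_h)=h^2m_2(K)$ both follow from Assumption~\ref{ass:kernel}, and the integrations by parts in Lemma~\ref{lem:stein-to-fisher} are covered by the regularity assumption.

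No sign issue arises. The quadrature errors only add nonnegative constants, and the bound on $\mathsf S_N$ from Theorem~\ref{thm:S-rate} is an upper bound, which is the direction needed.
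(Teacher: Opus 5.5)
Your proposal is correct and follows the paper's proof essentially verbatim. Like the paper, you average the one-sided quadrature bound $\mathsf I_N\le\mathsf S_N+\tfrac12B_{A,T}(h)m_2(K_h)$ in time, invoke Theorem~\ref{thm:S-rate}, and then chain on $\mathsf V_N\le\mathsf I_N+\tfrac12B_{V,T}(h)m_2(K_h)$; your measurability and Tonelli remarks just make explicit what the paper leaves inside Assumption~\ref{ass:regularity}.
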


\begin{proof}
From Lemma~\ref{lem:quadrature},
\[
  \mathsf I_N(X(t))
  \le
  \mathsf S_N(X(t))+\frac12B_{A,T}(h)m_2(K_h).
\]
Taking expectations, integrating over $[0,T]$, dividing by $T$, and applying Theorem~\ref{thm:S-rate} gives \eqref{eq:I-rate}.

Similarly,
\[
  \mathsf V_N(X(t))
  \le
  \mathsf I_N(X(t))+\frac12B_{V,T}(h)m_2(K_h).
\]
Combining this inequality with \eqref{eq:I-rate} gives \eqref{eq:V-rate}.
\end{proof}

\begin{remark}[Implication for one-step generation]
Theorem~\ref{thm:main-rate} bounds the average squared speed of the generated particle cloud under the conservative drift. If a time $\tau$ is sampled uniformly from $[0,T]$, then
\[
  \E\left[\mathsf V_N(X(\tau))\right]
  =
  \frac1T\int_0^T\E_t[\mathsf V_N(X(t))]\dd t.
\]
Consequently, a small right-hand side in \eqref{eq:V-rate} implies that, at a typical training time, one more explicit drift step changes the empirical generated law by at most
\[
  \E\left[W_2^2\left(\mu_{X(\tau)}^N,
  \frac1N\sum_{i=1}^N\delta_{X_i(\tau)+\eta b_{X(\tau)}(X_i(\tau))}\right)\right]
  \le
  \eta^2
  \frac1T\int_0^T\E_t[\mathsf V_N(X(t))]\dd t.
\]
Thus the theorem is a convergence-to-small-residual-drift statement for one-step generation. With the quadrature comparison, it also controls the smoothed Fisher discrepancy between $q_x$ and $\rho$.
\end{remark}

\begin{corollary}[Bandwidth form]
\label{cor:bandwidth-rate}
Let $K_h(u)=h^{-d}K(u/h)$ and let the assumptions of Theorem~\ref{thm:main-rate} hold. Then
\[
  m_2(K_h)=h^2m_2(K),
  \qquad
  \Delta K_h(0)=h^{-d-2}\Delta K(0).
\]
Consequently, with $a_1:=\{-\Delta K(0)\}_+$,
\begin{equation}
  \frac1T\int_0^T\E_t[\mathsf V_N(X(t))]\dd t
  \le
  \frac{H_N(0)}{NT}
  +\frac{a_1\Lambda_T}{Nh^{d+2}}
  +\frac12\{B_{A,T}(h)+B_{V,T}(h)\}m_2(K)h^2.
  \label{eq:bandwidth-rate}
\end{equation}
If, in addition, $H_N(0)\le \kappa_0N$ and $T=N$, then
\begin{equation}
  \frac1N\int_0^N\E_t[\mathsf V_N(X(t))]\dd t
  \le
  \frac{\kappa_0}{N}
  +\frac{a_1\Lambda_N}{Nh^{d+2}}
  +\frac12\{B_{A,N}(h)+B_{V,N}(h)\}m_2(K)h^2.
  \label{eq:T-equals-N-rate}
\end{equation}
Thus the corresponding root-discrepancy bound is
\begin{equation}
  \left(\frac1N\int_0^N\E_t[\mathsf V_N(X(t))]\dd t\right)^{1/2}
  \le
  \sqrt{\frac{\kappa_0}{N}}
  +\sqrt{\frac{a_1\Lambda_N}{Nh^{d+2}}}
  +h\sqrt{\frac12\{B_{A,N}(h)+B_{V,N}(h)\}m_2(K)}.
  \label{eq:root-rate}
\end{equation}
\end{corollary}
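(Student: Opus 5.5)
The corollary is a direct specialization of Theorem~\ref{thm:main-rate}, so I would prove it in three steps: compute the two bandwidth-scaling identities, substitute them into \eqref{eq:V-rate}, and take square roots.

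\textbf{Scaling identities.} For the second moment, substitute $u=hv$, $\dd u=h^d\dd v$:
\[
  m_2(K_h)=\int\norm{u}^2h^{-d}K(u/h)\dd u=h^2\int\norm{v}^2K(v)\dd v=h^2m_2(K).
\]
For the Laplacian, the chain rule gives $\partial_{jj}K_h(u)=h^{-d-2}(\partial_{jj}K)(u/h)$. Evaluating at $u=0$ and summing over $j$ yields $\Delta K_h(0)=h^{-d-2}\Delta K(0)$. Both are finite by Assumption~\ref{ass:kernel}.

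\textbf{Substitution.} Since $h^{-d-2}>0$, the positive part commutes with the scaling, so $a_h=\{-\Delta K_h(0)\}_+=h^{-d-2}a_1$. Plugging this and $m_2(K_h)=h^2m_2(K)$ into \eqref{eq:V-rate} gives \eqref{eq:bandwidth-rate} verbatim. For \eqref{eq:T-equals-N-rate}, set $T=N$ and use $H_N(0)\le\kappa_0N$, so that $H_N(0)/(NT)\le \kappa_0N/N^2=\kappa_0/N$. The constants $\Lambda_T$, $B_{A,T}$, $B_{V,T}$ simply become their $T=N$ versions.

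\textbf{Root bound.} Each of the three terms on the right of \eqref{eq:T-equals-N-rate} is nonnegative: $\kappa_0\ge0$ because $H_N(0)\ge0$ (one may take $\kappa_0\ge0$ without loss), while $a_1,\Lambda_N,B\ge0$ and $m_2(K)\ge0$. Subadditivity of the square root on nonnegative reals, $\sqrt{\alpha+\beta+\gamma}\le\sqrt\alpha+\sqrt\beta+\sqrt\gamma$, then gives \eqref{eq:root-rate}. For the last term, pull out $h$ using $\sqrt{h^2}=h$ for $h>0$.

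\textbf{Main obstacle.} No step is genuinely hard. The only points needing care are keeping the positive part consistent under scaling, which works because the scale factor is positive, and the nonnegativity needed for square-root subadditivity. I would state these explicitly. Everything else is substitution into Theorem~\ref{thm:main-rate}.
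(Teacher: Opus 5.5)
Your proposal is correct and follows the paper's proof essentially step for step: the change of variables $u=hv$ and the chain rule give the scaling identities, these are substituted into \eqref{eq:V-rate}, the bound $H_N(0)/(NT)\le\kappa_0/N$ is used at $T=N$, and subadditivity of the square root gives the root bound. You also state explicitly two details the paper leaves implicit: $\{-\Delta K_h(0)\}_+=h^{-d-2}a_1$ because $h^{-d-2}>0$, and all three terms are nonnegative, which the square-root step needs.
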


\begin{proof}
The scaling identities follow by the change of variables $u=hv$:
\[
  m_2(K_h)=\int \norm{u}^2h^{-d}K(u/h)\dd u
  =h^2\int \norm{v}^2K(v)\dd v=h^2m_2(K).
\]
Also,
\[
  \Delta K_h(u)=h^{-d-2}(\Delta K)(u/h),
\]
so $\Delta K_h(0)=h^{-d-2}\Delta K(0)$. Substituting these identities into \eqref{eq:V-rate} gives \eqref{eq:bandwidth-rate}. If $H_N(0)\le\kappa_0N$ and $T=N$, then
\[
  \frac{H_N(0)}{NT}\le \frac{\kappa_0N}{N^2}=\frac{\kappa_0}{N},
\]
which gives \eqref{eq:T-equals-N-rate}. Finally, \eqref{eq:root-rate} follows from $  \sqrt{a+b+c}\le \sqrt a+\sqrt b+\sqrt c$, for $a,b,c\ge0$.
\end{proof}

\subsection{Optimizing the Conservative Bandwidth and The One-step Size}
\label{subsec:kde-optimized-rate}

The bandwidth form \eqref{eq:T-equals-N-rate} separates three effects: the entropy initialization term, the finite-particle self-interaction term, and the quadrature term. Since the quadrature constants may depend on $h$, we write them explicitly as functions of the bandwidth. Define
\[
  A_N(h):=a_1\Lambda_N(h),
  \qquad
  C_N(h):=\frac12\{B_{A,N}(h)+B_{V,N}(h)\}m_2(K).
\]
Then \eqref{eq:T-equals-N-rate} reads
\begin{equation}
  \frac1N\int_0^N\E_t[\mathsf V_N(X(t))]\dd t
  \le
  \frac{\kappa_0}{N}
  +
  \frac{A_N(h)}{Nh^{d+2}}
  +
  C_N(h)h^2.
  \label{eq:kde-rate-ABC}
\end{equation}
Thus the commonly stated bandwidth tradeoff is valid only after specifying how $A_N(h)$ and $C_N(h)$ scale with $h$.

Assume that, on the admissible bandwidth range, there are constants $A,C>0$ and $0\le \beta<2$ such that
\begin{equation}
  A_N(h)\le A,
  \qquad
  C_N(h)\le Ch^{-\beta}.
  \label{eq:h-dependent-quadrature-growth}
\end{equation}
Equivalently, the quadrature constants satisfy
\[
  B_{A,N}(h)+B_{V,N}(h)=O(h^{-\beta}).
\]
Then \eqref{eq:kde-rate-ABC} implies
\begin{equation}
  \frac1N\int_0^N\E_t[\mathsf V_N(X(t))]\dd t
  \le
  \frac{\kappa_0}{N}
  +
  \frac{A}{Nh^{d+2}}
  +
  Ch^{2-\beta}.
  \label{eq:kde-beta-rate}
\end{equation}
The last two terms are minimized at
\begin{equation}
  h_{\star,N}
  :=
  \left(
  \frac{(d+2)A}{(2-\beta)CN}
  \right)^{1/(d+4-\beta)}.
  \label{eq:kde-optimal-h}
\end{equation}
Indeed, differentiating
\[
  h\mapsto \frac{A}{Nh^{d+2}}+Ch^{2-\beta}
\]
gives
\[
  -\frac{(d+2)A}{Nh^{d+3}}+(2-\beta)Ch^{1-\beta}.
\]
Setting this derivative equal to zero gives \eqref{eq:kde-optimal-h}. At this bandwidth,
\[
  \frac{A}{Nh_{\star,N}^{d+2}}
  =
  \frac{2-\beta}{d+2}C h_{\star,N}^{2-\beta}.
\]
Therefore
\begin{equation}
  \frac1N\int_0^N\E_t[\mathsf V_N(X(t))]\dd t
  \le
  \frac{\kappa_0}{N}
  +
  \frac{d+4-\beta}{d+2}C
  \left(
  \frac{(d+2)A}{(2-\beta)CN}
  \right)^{(2-\beta)/(d+4-\beta)}.
  \label{eq:kde-optimized-rate}
\end{equation}
Thus, under \eqref{eq:h-dependent-quadrature-growth}, the squared residual velocity is
\[
  O\left(N^{-(2-\beta)/(d+4-\beta)}\right),
\]
and the root residual velocity is
\[
  O\left(N^{-(2-\beta)/(2(d+4-\beta))}\right).
\]
The $h$-uniform quadrature case is the special case $\beta=0$, which recovers the root rate $N^{-1/(d+4)}$. If $\beta\ge2$, the quadrature term $h^{2-\beta}$ does not vanish as $h\to0$, and this quadrature argument does not yield a vanishing residual by bandwidth shrinkage. For example, the natural kernel-scale derivative scaling $B_{A,N}(h)+B_{V,N}(h)=O(h^{-4})$ corresponds to $\beta=4$ and falls outside the regime of the optimized vanishing-rate statement.

The occupancy condition remains compatible with \eqref{eq:kde-optimal-h} whenever $Nh_{\star,N}^d\to\infty$ sufficiently quickly. For $0\le\beta<2$,
\[
  Nh_{\star,N}^d
  \asymp
  N^{1-d/(d+4-\beta)}
  =
  N^{(4-\beta)/(d+4-\beta)},
\]
which diverges faster than $\log N$ for fixed $d$ and fixed $\beta<2$.

The explicit one-step size $\eta$ does not enter the continuous-time entropy inequality. It enters when the learned generator is interpreted through one more explicit correction $z\mapsto z+\eta b_x(z)$. For a fixed $h$, the theorem gives
\[
  \E\left[W_2^2(\mu_x^N,\widetilde\mu_x^N)\right]
  \le
  \eta^2 R_{N,h},
\]
where $R_{N,h}$ denotes the right-hand side of \eqref{eq:kde-rate-ABC}. Taking $h=h_{\star,N}$ gives a residual correction of order
\[
  \eta\,N^{-(2-\beta)/(2(d+4-\beta))}
\]
in root mean square, up to constants. If the explicit map is required to be a small stable Euler step and $b_x$ has Lipschitz constant $L_b(h)$, a standard sufficient condition is $\eta L_b(h)\le c$ for a numerical constant $c<1$. In conservative models one often has $L_b(h)=O(h^{-2})$ under denominator and smoothness control, which suggests
\[
  \eta_{\star,N}\asymp h_{\star,N}^2
  \asymp
  N^{-2/(d+4-\beta)}.
\]
For Gaussian kernels, the non-conservative displacement velocity satisfies $u^{\mathrm{disp}}=h^2b$, so this conservative-step choice corresponds to a constant-order step size for the displacement formulation.

\begin{remark}[Initialization]
If the initial particles are i.i.d. from a density $\mu_0$ satisfying $\KL(\mu_0\|\rho)<\infty$, then
\[
  p_0^N=\mu_0^{\otimes N}
  \quad\Longrightarrow\quad
  H_N(0)=N\KL(\mu_0\|\rho).
\]
Thus the condition $H_N(0)\le\kappa_0N$ holds with $\kappa_0=\KL(\mu_0\|\rho)$.
\end{remark}

\section[Sufficient Conditions for Reciprocal KDE Control on R-d]{Sufficient Conditions for Reciprocal KDE control on $\R^d$}\label{sec:reciprocal-control}

The reciprocal-KDE condition \eqref{eq:reciprocal-assumption} is needed because the center-evaluation divergence differentiates the $i$th KDE center and produces a factor $1/q_x(x_i)$. This section gives checkable sufficient conditions. The occupancy estimates in this section are elementary and are included for
completeness. The deterministic implication is simply that if every particle has
enough neighbors in an \(O(h)\)-ball, then the KDE denominator at that particle
is bounded below. The high-probability version follows from the standard
binomial Chernoff lower-tail bound, applied conditionally on each particle
location, together with a union bound over particles. Such local-neighborhood occupancy
arguments are also standard in random geometric graph theory
\citep{penrose2003random}. We use these estimates only as sufficient conditions
for reciprocal-KDE control; the finite-particle entropy bounds themselves remain
conditional on the stated reciprocal-denominator assumptions.

\subsection{Deterministic Local-occupancy Condition}

\begin{assumption}[Kernel lower bound near the origin]
\label{ass:kernel-lower}
There exist constants $r_K>0$ and $\kappa_K>0$ such that
\begin{equation}
  K(u)\ge \kappa_K
  \qquad\text{whenever }\norm{u}\le r_K.
  \label{eq:kernel-lower}
\end{equation}
Equivalently,
\[
  K_h(u)\ge \kappa_K h^{-d}
  \qquad\text{whenever }\norm{u}\le r_Kh.
\]
\end{assumption}

For a configuration $x$, define the local neighbor count
\begin{equation}
  M_i(x;h):=\#\{j\in\{1,\ldots,N\}:\norm{x_j-x_i}\le r_Kh\}.
  \label{eq:local-neighbor-count}
\end{equation}
The count includes $j=i$.

\begin{proposition}[Local occupancy implies reciprocal control]
\label{prop:deterministic-occupancy}
Let Assumption~\ref{ass:kernel-lower} hold. Suppose that a configuration $x$ satisfies
\begin{equation}
  M_i(x;h)\ge \alpha Nh^d
  \qquad\text{for every }i=1,\ldots,N
  \label{eq:local-occupancy}
\end{equation}
for some $\alpha>0$. Then
\begin{equation}
  q_x(x_i)\ge \kappa_K\alpha
  \qquad\text{for every }i=1,\ldots,N,
  \label{eq:q-lower-from-occupancy}
\end{equation}
and hence
\begin{equation}
  \mathsf R_N(x)\le \frac1{\kappa_K\alpha}.
  \label{eq:R-local-occupancy}
\end{equation}
\end{proposition}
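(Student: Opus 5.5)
The plan is a direct lower bound on the kernel density estimate at each particle: keep only the terms coming from the particle's own $r_Kh$-neighborhood. Fix $i$. Every term in
\[
  q_x(x_i)=\frac1N\sum_{j=1}^N K_h(x_i-x_j)
\]
is nonnegative, because $K>0$ by Assumption~\ref{ass:kernel}. So we may drop all indices $j$ with $\norm{x_j-x_i}>r_Kh$ and still have a lower bound. The retained set has cardinality $M_i(x;h)$ by \eqref{eq:local-neighbor-count}, and it contains $j=i$.

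For each retained index, the kernel lower bound in Assumption~\ref{ass:kernel-lower}, in its rescaled form, gives $K_h(x_i-x_j)\ge\kappa_Kh^{-d}$, since $\norm{x_i-x_j}\le r_Kh$. Hence
\[
  q_x(x_i)\ge \frac1N M_i(x;h)\,\kappa_Kh^{-d}\ge \frac1N\,\alpha Nh^d\,\kappa_Kh^{-d}=\kappa_K\alpha,
\]
where the second inequality uses the occupancy hypothesis \eqref{eq:local-occupancy}. This is \eqref{eq:q-lower-from-occupancy}. The rescaled kernel bound itself is immediate from $K_h(u)=h^{-d}K(u/h)$ with $\norm{u/h}\le r_K$.

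For \eqref{eq:R-local-occupancy}, we use $q_x(x_i)\ge\kappa_K\alpha>0$ to get $1/q_x(x_i)\le 1/(\kappa_K\alpha)$ for every $i$. Averaging over $i$ in the definition \eqref{eq:reciprocal-kde} of $\mathsf R_N$ then gives the claim.

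There is no real obstacle here. The only points to handle carefully are that positivity of $K$ is what allows discarding the far particles, and that the $h^{-d}$ factor from the scaled kernel exactly cancels the $h^d$ in the occupancy threshold, which makes the bound independent of $N$ and $h$.
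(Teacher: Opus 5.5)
Your proposal is correct and follows the paper's proof essentially line for line: you restrict the sum defining $q_x(x_i)$ to the $r_Kh$-neighborhood, apply $K_h\ge\kappa_Kh^{-d}$ there, invoke the occupancy hypothesis, and then take reciprocals and average. You also state explicitly that positivity of $K$ is what justifies discarding the far-particle terms, a step the paper uses without comment.
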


\begin{proof}
Fix $i$. By definition,
\[
  q_x(x_i)=\frac1N\sum_{j=1}^NK_h(x_i-x_j).
\]
For every $j$ with $\norm{x_j-x_i}\le r_Kh$, Assumption~\ref{ass:kernel-lower} gives
\[
  K_h(x_i-x_j)\ge \kappa_Kh^{-d}.
\]
Therefore
\[
  q_x(x_i)
  \ge \frac1N M_i(x;h)\kappa_Kh^{-d}.
\]
Using \eqref{eq:local-occupancy},
\[
  q_x(x_i)
  \ge \frac1N(\alpha Nh^d)\kappa_Kh^{-d}
  =\kappa_K\alpha.
\]
Taking reciprocals and averaging over $i$ yields \eqref{eq:R-local-occupancy}.\end{proof}

\subsection{A High-probability Occupancy Bound for I.I.D. Particles}

The previous proposition is deterministic. The next result shows that its hypothesis holds with high probability for i.i.d. samples from a density with uniform lower local mass.

\begin{assumption}[Uniform lower local mass]
\label{ass:lower-local-mass}
Let $\mu$ be a probability measure on $\R^d$. For the bandwidth $h$ under consideration, there is a constant $p_0>0$ such that
\begin{equation}
  \mu(B(y,r_Kh))\ge p_0h^d
  \qquad\text{for }\mu\text{-a.e. }y.
  \label{eq:lower-local-mass}
\end{equation}
\end{assumption}

\begin{remark}[One way to verify Assumption~\ref{ass:lower-local-mass}]
Suppose $\mu$ has density $f$ on a closed set $S\subset\R^d$, $f(y)\ge m_0>0$ for $y\in S$, and $S$ satisfies the thickness condition
\[
  \operatorname{Leb}(S\cap B(y,r))\ge \theta v_dr^d
  \qquad\text{for all }y\in S\text{ and }0<r\le r_0,
\]
where $v_d$ is the volume of the unit Euclidean ball. If $r_Kh\le r_0$, then
\[
  \mu(B(y,r_Kh))
  \ge m_0\operatorname{Leb}(S\cap B(y,r_Kh))
  \ge m_0\theta v_dr_K^dh^d.
\]
Thus Assumption~\ref{ass:lower-local-mass} holds with
\[
  p_0=m_0\theta v_dr_K^d.
\]
\end{remark}

\begin{proposition}[I.i.d. local occupancy]
\label{prop:iid-occupancy}
Let $Y_1,\ldots,Y_N$ be i.i.d. from a probability measure $\mu$ satisfying Assumption~\ref{ass:lower-local-mass}. Let
\[
  Y=(Y_1,\ldots,Y_N).
\]
Assume $N\ge2$. Define the event
\[
  \mathcal E_h
  :=\left\{M_i(Y;h)\ge \frac{p_0}{4}Nh^d\text{ for every }i=1,\ldots,N\right\}.
\]
Then
\begin{equation}
  \mathbb P(\mathcal E_h^c)
  \le
  N\exp\left(-\frac{p_0}{16}Nh^d\right).
  \label{eq:occupancy-prob}
\end{equation}
Consequently, under Assumption~\ref{ass:kernel-lower}, on $\mathcal E_h$,
\begin{equation}
  \mathsf R_N(Y)\le \frac4{\kappa_Kp_0}.
  \label{eq:R-iid-event}
\end{equation}
Moreover, since $K_h(0)=h^{-d}K(0)$,
\begin{equation}
  \mathsf R_N(Y)
  \le \frac{Nh^d}{K(0)}
  \qquad\text{always},
  \label{eq:self-bound}
\end{equation}
and hence
\begin{equation}
  \E[\mathsf R_N(Y)]
  \le
  \frac4{\kappa_Kp_0}
  +\frac{Nh^d}{K(0)}
  N\exp\left(-\frac{p_0}{16}Nh^d\right).
  \label{eq:expected-R-iid}
\end{equation}
\end{proposition}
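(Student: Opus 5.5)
The plan is to condition on each particle's location, apply a binomial Chernoff lower-tail bound to its neighbor count, and finish with a union bound over particles; the reciprocal bounds then follow from Proposition~\ref{prop:deterministic-occupancy} and a trivial self-term estimate.

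\textbf{Step 1: conditional distribution of $M_i$.} Fix $i$ and condition on $Y_i=y$. Then
\[
  M_i(Y;h)=1+\sum_{j\ne i}\one\{\norm{Y_j-y}\le r_Kh\}.
\]
The $N-1$ indicators are i.i.d.\ Bernoulli with success probability $p(y):=\mu(B(y,r_Kh))$. By Assumption~\ref{ass:lower-local-mass}, $p(y)\ge p_0h^d$ for $\mu$-a.e.\ $y$. Hence $M_i-1$ stochastically dominates $W\sim\mathrm{Bin}(N-1,p_0h^d)$.

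\textbf{Step 2: Chernoff lower tail.} The mean of $W$ is
\[
  m:=(N-1)p_0h^d\ge \tfrac12 Np_0h^d \quad\text{for } N\ge2.
\]
Therefore $\{M_i<\tfrac{p_0}{4}Nh^d\}\subset\{W<m/2\}$. The multiplicative Chernoff bound $\mathbb P(W\le(1-\delta)m)\le e^{-\delta^2m/2}$ with $\delta=1/2$ gives
\[
  \mathbb P(W<m/2)\le e^{-m/8}\le \exp\!\left(-\tfrac{p_0}{16}Nh^d\right).
\]
The only care needed is this constant bookkeeping: $N-1\ge N/2$ feeds exactly into the exponent $1/16$. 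Integrating over $y$ and taking a union bound over $i$ yields \eqref{eq:occupancy-prob}.

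\textbf{Step 3: bound on $\mathcal E_h$.} On $\mathcal E_h$, the occupancy hypothesis \eqref{eq:local-occupancy} holds with $\alpha=p_0/4$. Proposition~\ref{prop:deterministic-occupancy} then gives \eqref{eq:R-iid-event}.

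\textbf{Step 4: deterministic self-bound.} Since $K>0$, dropping all terms except $j=i$ gives
\[
  q_Y(Y_i)\ge \frac1N K_h(0)=\frac{K(0)}{Nh^d}.
\]
Hence every reciprocal $1/q_Y(Y_i)$ is at most $Nh^d/K(0)$, and so is their average, which is \eqref{eq:self-bound}.

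\textbf{Step 5: expectation.} Split the expectation over $\mathcal E_h$ and its complement:
\[
  \E[\mathsf R_N]\le \frac{4}{\kappa_Kp_0}+\frac{Nh^d}{K(0)}\,\mathbb P(\mathcal E_h^c).
\]
Inserting \eqref{eq:occupancy-prob} gives \eqref{eq:expected-R-iid}.
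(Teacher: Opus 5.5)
Your proposal is correct and follows the paper's proof essentially step by step: condition on $Y_i=y$, apply a multiplicative Chernoff bound with $\delta=1/2$ using $N-1\ge N/2$, take a union bound, invoke Proposition~\ref{prop:deterministic-occupancy} with $\alpha=p_0/4$, use the self-term bound, and split the expectation over $\mathcal E_h$ and its complement. The only cosmetic difference is the reduction to a binomial. You pass to a fixed $\mathrm{Bin}(N-1,p_0h^d)$ by stochastic domination, so your event inclusion should be read through a coupling or as an inequality between probabilities. The paper instead applies Chernoff directly to the conditional binomial with mean $(N-1)p(y)$ and then lower-bounds the exponent uniformly in $y$.
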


\begin{proof}
Fix $i$. Conditional on $Y_i=y$, define
\[
  Z_i:=\sum_{j\ne i}\one\{\norm{Y_j-y}\le r_Kh\}.
\]
Then $Z_i$ is binomial with parameters $N-1$ and
\[
  p(y):=\mu(B(y,r_Kh)).
\]
By Assumption~\ref{ass:lower-local-mass}, $p(y)\ge p_0h^d$ for $\mu$-a.e. $y$. The neighbor count including the particle itself is
\[
  M_i(Y;h)=1+Z_i.
\]
Let $m_i:=(N-1)p(y)$. For $N\ge2$,
\[
  m_i\ge (N-1)p_0h^d\ge \frac12Np_0h^d.
\]
If
\[
  Z_i\ge \frac12m_i,
\]
then
\[
  M_i(Y;h)=1+Z_i\ge \frac12m_i\ge \frac14Np_0h^d.
\]
Therefore
\[
  \mathbb P\left(M_i(Y;h)<\frac14Np_0h^d\mid Y_i=y\right)
  \le
  \mathbb P\left(Z_i<\frac12m_i\mid Y_i=y\right).
\]
For a binomial random variable, the multiplicative Chernoff bound gives
\[
  \mathbb P\left(Z_i<\frac12m_i\mid Y_i=y\right)
  \le \exp\left(-\frac{m_i}{8}\right).
\]
Using $m_i\ge (N-1)p_0h^d\ge Np_0h^d/2$,
\[
  \exp\left(-\frac{m_i}{8}\right)
  \le
  \exp\left(-\frac{p_0}{16}Nh^d\right).
\]
This bound is uniform in $y$. Integrating over $Y_i$ and taking a union bound over $i=1,\ldots,N$ proves \eqref{eq:occupancy-prob}.

On $\mathcal E_h$, Proposition~\ref{prop:deterministic-occupancy} applies with $\alpha=p_0/4$, giving \eqref{eq:R-iid-event}. For the deterministic self-bound, observe that
\[
  q_Y(Y_i)=\frac1N\sum_{j=1}^NK_h(Y_i-Y_j)
  \ge \frac1NK_h(0)=\frac{K(0)}{Nh^d}.
\]
Thus
\[
  \frac1{q_Y(Y_i)}\le \frac{Nh^d}{K(0)}
  \qquad\text{for every }i,
\]
and averaging over $i$ gives \eqref{eq:self-bound}. Finally,
\begin{align*}
  \E[\mathsf R_N(Y)]
  &=\E[\mathsf R_N(Y)\one_{\mathcal E_h}]
  +\E[\mathsf R_N(Y)\one_{\mathcal E_h^c}]\\
  &\le \frac4{\kappa_Kp_0}
  +\frac{Nh^d}{K(0)}\mathbb P(\mathcal E_h^c).
\end{align*}
Substituting \eqref{eq:occupancy-prob} proves \eqref{eq:expected-R-iid}.\end{proof}

\subsection{Propagation of Occupancy Along the Continuous-time Flow}

The i.i.d. result controls the reciprocal KDE at a single time. To control it along the flow, one may combine initial occupancy at a slightly smaller scale with a Lipschitz-distortion bound for pairwise distances.

\begin{assumption}[Lipschitz distortion of the drift]
\label{ass:lipschitz-distortion}
For the realized trajectory on $[0,T]$, there exists an integrable function $L_t$ such that
\begin{equation}
  \norm{b_{X(t)}(z)-b_{X(t)}(z')}
  \le L_t\norm{z-z'}
  \qquad\text{for all }z,z'\in\R^d,
  \label{eq:b-lipschitz}
\end{equation}
for all $0\le t\le T$. Define
\begin{equation}
  \Gamma_T:=\int_0^T L_t\dd t.
  \label{eq:Gamma}
\end{equation}
\end{assumption}

\begin{lemma}[Pairwise distance distortion]
\label{lem:distance-distortion}
Under Assumption~\ref{ass:lipschitz-distortion}, for every pair $i,j$ and every $0\le t\le T$,
\begin{equation}
  \norm{X_i(t)-X_j(t)}
  \le e^{\Gamma_T}\norm{X_i(0)-X_j(0)}.
  \label{eq:distance-distortion}
\end{equation}
\end{lemma}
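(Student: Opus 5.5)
The plan is a Grönwall argument on the squared pairwise distance along the ODE \eqref{eq:particle-ode}. Fix a pair $i,j$ and set $D(t):=X_i(t)-X_j(t)$. Since every particle is driven by the same configuration-dependent field evaluated at its own position, we have $\dot X_i(t)=b_{X(t)}(X_i(t))$ and $\dot X_j(t)=b_{X(t)}(X_j(t))$, so
\[
  \dot D(t)=b_{X(t)}(X_i(t))-b_{X(t)}(X_j(t)).
\]
This is the key structural point. At each fixed time the field $b_{X(t)}(\cdot)$ is one map $\R^d\to\R^d$, and Assumption~\ref{ass:lipschitz-distortion} bounds its spatial Lipschitz constant uniformly in $z,z'$. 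The dependence of the field on the configuration never enters, because both particles see the same frozen field at time $t$.

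The first step is to differentiate $\norm{D(t)}^2$ and apply Cauchy--Schwarz followed by \eqref{eq:b-lipschitz}:
\[
  \frac{\dd}{\dd t}\norm{D(t)}^2
  =2D(t)\cdot\dot D(t)
  \le 2\norm{D(t)}\,L_t\norm{D(t)}
  =2L_t\norm{D(t)}^2 .
\]
Differentiating the square avoids any nondifferentiability of $\norm{D}$ at $D=0$. The second step is Grönwall's inequality in integral form, which is valid since $L_t$ is integrable and $t\mapsto\norm{D(t)}^2$ is absolutely continuous along the unique global solution from Assumption~\ref{ass:regularity}. It gives
\[
  \norm{D(t)}^2\le \exp\!\left(2\int_0^tL_s\dd s\right)\norm{D(0)}^2 .
\]

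The last step is to take square roots and use $L_s\ge0$ (a Lipschitz constant can always be taken nonnegative), so that $\int_0^tL_s\dd s\le\Gamma_T$ for every $t\le T$. This yields \eqref{eq:distance-distortion}.

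I do not expect a real obstacle. The only points needing care are the justification of Grönwall when $L_t$ is merely integrable rather than continuous, handled through the integrating factor $\exp(-2\int_0^tL_s\dd s)$ on an absolutely continuous function, and the remark that the configuration dependence of the field is harmless because both particles see the same field at each time. The same computation with the reverse inequality $\frac{\dd}{\dd t}\norm{D}^2\ge-2L_t\norm{D}^2$ would also give the lower bound $e^{-\Gamma_T}\norm{X_i(0)-X_j(0)}$. That lower bound is not needed here, but it is the direction relevant for propagating occupancy.
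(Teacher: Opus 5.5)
Your argument is correct and is essentially the paper's Gr\"onwall argument on $D_{ij}(t)=X_i(t)-X_j(t)$; the only difference is that you differentiate $\norm{D}^2$ rather than $\norm{D}$, which avoids the paper's Dini-derivative remark at $D=0$. One correction to your closing aside: the propagation of occupancy in Proposition~\ref{prop:trajectory-occupancy} uses exactly this upper bound (initial neighbours within $r_Khe^{-\Gamma_T}$ stay within $r_Kh$), not the lower bound $e^{-\Gamma_T}\norm{X_i(0)-X_j(0)}$.
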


\begin{proof}
For $i,j$, set $D_{ij}(t):=X_i(t)-X_j(t)$. From \eqref{eq:particle-ode},
\[
  \frac{\dd}{\dd t}D_{ij}(t)
  =b_{X(t)}(X_i(t))-b_{X(t)}(X_j(t)).
\]
Using Assumption~\ref{ass:lipschitz-distortion},
\[
  \frac{\dd}{\dd t}\norm{D_{ij}(t)}
  \le L_t\norm{D_{ij}(t)}
\]
whenever $D_{ij}(t)\ne0$. The same inequality holds in the upper Dini derivative sense when $D_{ij}(t)=0$. Gronwall's inequality gives
\[
  \norm{D_{ij}(t)}
  \le \exp\left(\int_0^tL_s\dd s\right)\norm{D_{ij}(0)}
  \le e^{\Gamma_T}\norm{D_{ij}(0)}.
\]
This proves the lemma.\end{proof}

\begin{proposition}[Trajectory-level reciprocal control from initial occupancy]
\label{prop:trajectory-occupancy}
Assume Assumptions~\ref{ass:kernel-lower} and~\ref{ass:lipschitz-distortion}. Suppose the initial configuration satisfies
\begin{equation}
  \#\{j:\norm{X_j(0)-X_i(0)}\le r_Khe^{-\Gamma_T}\}
  \ge \alpha Nh^de^{-d\Gamma_T}
  \qquad\text{for every }i.
  \label{eq:initial-smaller-occupancy}
\end{equation}
Then, for every $0\le t\le T$,
\begin{equation}
  \mathsf R_N(X(t))\le \frac{e^{d\Gamma_T}}{\kappa_K\alpha}.
  \label{eq:trajectory-R-bound}
\end{equation}
\end{proposition}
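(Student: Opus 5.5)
The plan is to combine Lemma~\ref{lem:distance-distortion} with Proposition~\ref{prop:deterministic-occupancy}. We fix $t\in[0,T]$ and $i$, and show that the initial neighbors of particle $i$ at the smaller scale remain neighbors at the full scale $r_Kh$ at time $t$. Let
\[
  J_i:=\{j:\norm{X_j(0)-X_i(0)}\le r_Khe^{-\Gamma_T}\}.
\]
For $j\in J_i$, Lemma~\ref{lem:distance-distortion} gives
\[
  \norm{X_j(t)-X_i(t)}\le e^{\Gamma_T}\norm{X_j(0)-X_i(0)}\le r_Kh .
\]
Therefore $J_i\subseteq\{j:\norm{X_j(t)-X_i(t)}\le r_Kh\}$, and by \eqref{eq:initial-smaller-occupancy},
\[
  M_i(X(t);h)\ge \#J_i\ge \alpha Nh^de^{-d\Gamma_T}.
\]

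This is exactly the hypothesis \eqref{eq:local-occupancy} of Proposition~\ref{prop:deterministic-occupancy} at configuration $X(t)$, with $\alpha$ replaced by $\alpha':=\alpha e^{-d\Gamma_T}$. That proposition then yields $q_{X(t)}(X_i(t))\ge\kappa_K\alpha e^{-d\Gamma_T}$ for every $i$. Taking reciprocals and averaging over $i$ gives \eqref{eq:trajectory-R-bound}. Since $t\in[0,T]$ was arbitrary, the bound holds uniformly on the interval.

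There is no real obstacle. The only point to check is that Lemma~\ref{lem:distance-distortion} supplies the uniform factor $e^{\Gamma_T}$ rather than a $t$-dependent one; it does, because $\int_0^tL_s\dd s\le\Gamma_T$ when $L_t\ge0$. We should also note that the factor $e^{-d\Gamma_T}$ in the count threshold is not used geometrically. It simply becomes the new occupancy constant $\alpha'$, and that is what produces the factor $e^{d\Gamma_T}$ in the final bound.
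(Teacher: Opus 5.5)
Your proof is correct and follows the paper's argument: Lemma~\ref{lem:distance-distortion} turns the initial neighbors at scale $r_Khe^{-\Gamma_T}$ into neighbors at scale $r_Kh$ at time $t$, and the counting argument of Proposition~\ref{prop:deterministic-occupancy} with constant $\alpha e^{-d\Gamma_T}$ gives $q_{X(t)}(X_i(t))\ge\kappa_K\alpha e^{-d\Gamma_T}$, after which you take reciprocals and average. Your side check on $L_t\ge0$ is harmless but not needed, since the lemma already states the uniform factor $e^{\Gamma_T}$ for all $t\in[0,T]$.
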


\begin{proof}
Fix $i$ and $t\in[0,T]$. If
\[
  \norm{X_j(0)-X_i(0)}\le r_Khe^{-\Gamma_T},
\]
then by Lemma~\ref{lem:distance-distortion},
\[
  \norm{X_j(t)-X_i(t)}
  \le e^{\Gamma_T}r_Khe^{-\Gamma_T}=r_Kh.
\]
Thus the number of particles within distance $r_Kh$ of $X_i(t)$ is at least the number of particles within distance $r_Khe^{-\Gamma_T}$ of $X_i(0)$. By \eqref{eq:initial-smaller-occupancy},
\[
  M_i(X(t);h)\ge \alpha Nh^de^{-d\Gamma_T}.
\]
The proof of Proposition~\ref{prop:deterministic-occupancy} gives
\[
  q_{X(t)}(X_i(t))
  \ge
  \frac1N\{\alpha Nh^de^{-d\Gamma_T}\}\kappa_Kh^{-d}
  =\kappa_K\alpha e^{-d\Gamma_T}.
\]
Taking reciprocals and averaging over $i$ proves \eqref{eq:trajectory-R-bound}.\end{proof}

\begin{corollary}[I.i.d. initial data plus controlled distortion]
\label{cor:iid-trajectory-reciprocal}
Let $X_1(0),\ldots,X_N(0)$ be i.i.d. from a probability measure $\mu$. Suppose that, with
\[
  \widetilde h:=he^{-\Gamma_T},
\]
the lower local mass condition
\begin{equation}
  \mu(B(y,r_K\widetilde h))\ge p_0\widetilde h^{d}
  \qquad\text{for }\mu\text{-a.e. }y
  \label{eq:lower-local-mass-shrunk}
\end{equation}
holds. Assume Assumptions~\ref{ass:kernel-lower} and~\ref{ass:lipschitz-distortion}. Then, with probability at least
\begin{equation}
  1-N\exp\left(-\frac{p_0}{16}Nh^de^{-d\Gamma_T}\right),
  \label{eq:traj-high-prob}
\end{equation}
one has
\begin{equation}
  \sup_{0\le t\le T}\mathsf R_N(X(t))
  \le \frac{4e^{d\Gamma_T}}{\kappa_Kp_0}.
  \label{eq:traj-R-high-prob}
\end{equation}
Furthermore,
\begin{equation}
  \E\left[\sup_{0\le t\le T}\mathsf R_N(X(t))\right]
  \le
  \frac{4e^{d\Gamma_T}}{\kappa_Kp_0}
  +\frac{Nh^d}{K(0)}
  N\exp\left(-\frac{p_0}{16}Nh^de^{-d\Gamma_T}\right).
  \label{eq:traj-R-expectation}
\end{equation}
\end{corollary}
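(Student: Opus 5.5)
The plan is to apply Proposition~\ref{prop:iid-occupancy} to the initial configuration at the shrunk bandwidth $\widetilde h=he^{-\Gamma_T}$, and then transport the resulting occupancy forward in time with Proposition~\ref{prop:trajectory-occupancy}. The hypothesis \eqref{eq:lower-local-mass-shrunk} is exactly Assumption~\ref{ass:lower-local-mass} with $h$ replaced by $\widetilde h$. Proposition~\ref{prop:iid-occupancy} therefore gives an event $\mathcal E_{\widetilde h}$ on which every $i$ satisfies
\[
  \#\{j:\norm{X_j(0)-X_i(0)}\le r_K\widetilde h\}\ge \frac{p_0}{4}N\widetilde h^d=\frac{p_0}{4}Nh^de^{-d\Gamma_T}.
\]
Its complement has probability at most $N\exp(-\frac{p_0}{16}N\widetilde h^d)$, which is \eqref{eq:traj-high-prob}. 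Since $r_K\widetilde h=r_Khe^{-\Gamma_T}$, this is precisely \eqref{eq:initial-smaller-occupancy} with $\alpha=p_0/4$.

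On $\mathcal E_{\widetilde h}$ I would invoke Proposition~\ref{prop:trajectory-occupancy}, which yields $\mathsf R_N(X(t))\le 4e^{d\Gamma_T}/(\kappa_Kp_0)$ for every $t\in[0,T]$. Taking the supremum over $t$ gives \eqref{eq:traj-R-high-prob}.

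For the expectation bound I split on $\mathcal E_{\widetilde h}$ and its complement, as in the proof of \eqref{eq:expected-R-iid}. On the complement I need a deterministic bound that holds uniformly along the flow. The self term gives $q_{X(t)}(X_i(t))\ge K_h(0)/N=K(0)/(Nh^d)$ for every configuration, so $\sup_t\mathsf R_N(X(t))\le Nh^d/(K(0))$ always. Note that this self-bound uses the original bandwidth $h$, since the KDE is built with $K_h$, not $K_{\widetilde h}$. Multiplying it by the probability of the bad event and adding the good-event bound gives \eqref{eq:traj-R-expectation}.

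The main subtlety concerns $\Gamma_T$. It is defined from the realized trajectory, so it may be random. The shrunk scale $\widetilde h$ must be fixed before sampling for the Chernoff argument to apply. I will therefore read Assumption~\ref{ass:lipschitz-distortion} as providing a deterministic bound $\Gamma_T$ valid for all realized trajectories, or else work on the event where that bound holds. With $\Gamma_T$ deterministic the argument goes through verbatim. Measurability of $\sup_t\mathsf R_N(X(t))$ is not a concern, because the flow is continuous and it suffices to take the supremum over rational times.
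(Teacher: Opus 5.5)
Your proposal is correct and follows the paper's proof essentially step for step. You apply Proposition~\ref{prop:iid-occupancy} at $\widetilde h=he^{-\Gamma_T}$ to obtain \eqref{eq:initial-smaller-occupancy} with $\alpha=p_0/4$, transport it forward with Proposition~\ref{prop:trajectory-occupancy}, and control the bad event with the time-uniform self-bound $\mathsf R_N\le Nh^d/K(0)$ at the original bandwidth. Your remark that $\Gamma_T$ must be deterministic, or replaced by a deterministic upper bound, for the Chernoff step at scale $\widetilde h$ to be valid is a correct clarification that the paper leaves implicit.
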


\begin{proof}
Apply Proposition~\ref{prop:iid-occupancy} at bandwidth $\widetilde h=he^{-\Gamma_T}$. With probability at least \eqref{eq:traj-high-prob}, for every $i$,
\[
  \#\{j:\norm{X_j(0)-X_i(0)}\le r_Khe^{-\Gamma_T}\}
  \ge \frac{p_0}{4}N(he^{-\Gamma_T})^d.
\]
This is \eqref{eq:initial-smaller-occupancy} with $\alpha=p_0/4$. Therefore Proposition~\ref{prop:trajectory-occupancy} gives \eqref{eq:traj-R-high-prob}.

For the expectation bound, use the same decomposition as in the proof of Proposition~\ref{prop:iid-occupancy}. On the high-probability event, use \eqref{eq:traj-R-high-prob}. On the complement, the deterministic self-bound holds at every time:
\[
  q_{X(t)}(X_i(t))
  \ge \frac1NK_h(0)=\frac{K(0)}{Nh^d},
\]
so
\[
  \sup_{0\le t\le T}\mathsf R_N(X(t))\le \frac{Nh^d}{K(0)}.
\]
Combining with the probability estimate proves \eqref{eq:traj-R-expectation}.\end{proof}

\begin{remark}[Use in Theorem~\ref{thm:main-rate}]
If the right-hand side of \eqref{eq:traj-R-expectation} is finite, then Assumption~\ref{ass:reciprocal} holds with
\[
  \Lambda_T=
  \frac{4e^{d\Gamma_T}}{\kappa_Kp_0}
  +\frac{Nh^d}{K(0)}
  N\exp\left(-\frac{p_0}{16}Nh^de^{-d\Gamma_T}\right).
\]
In particular, if $\Gamma_T$ is bounded independently of $N$ and $Nh^d\to\infty$, then the exponential term is negligible and $\Lambda_T=O(1)$.
\end{remark}

\section{Finite-particle Rates for Non-conservative Drifting with Laplace Kernel}
\label{sec:nonconservative-laplace-drifting}

We now prove a continuous-time finite-particle rate for the non-conservative displacement-based drifting field with Laplace kernel. The main point is that, unlike the Gaussian kernel, the Laplace mean-shift field is not exactly a score-difference field. Nevertheless, it admits a useful ``sharp-score'' representation, which yields a finite-particle convergence theorem with an additional scale-mismatch residual. The sharp companion kernel construction used in this section follows the
sharp-normalization framework of \citet{franz2026drifting}, specialized to the
Laplace kernel. Our use of it is different: rather than replacing the original
Laplace normalization by the sharp normalization, we use the sharp kernel to
decompose the original non-conservative Laplace field into a preconditioned
sharp-score mismatch plus a scale-mismatch residual. 

Throughout this section, let
\[
  K_h(u)
  :=
  c_d h^{-d}\exp\!\left(-\frac{\norm{u}}{h}\right),
  \qquad u\in\R^d,
\]
where \(c_d>0\) is chosen so that \(\int_{\R^d}K_h(u)\dd u=1\). For a probability measure \(\alpha\) on \(\R^d\), define its Laplace KDE
\[
  Q_{\alpha,h}(z)
  :=
  \int_{\R^d}K_h(z-y)\,\alpha(\dd y),
\]
and the non-conservative displacement mean-shift vector
\[
  M_{\alpha,h}(z)
  :=
  \frac{\int_{\R^d}(y-z)K_h(z-y)\,\alpha(\dd y)}
       {Q_{\alpha,h}(z)}.
\]
The non-conservative Laplace drifting field from a model distribution \(\mu\) toward the data distribution \(\nu\) is
\[
  u_{\nu,\mu,h}^{\mathrm{Lap}}(z)
  :=
  M_{\nu,h}(z)-M_{\mu,h}(z).
\]

For a particle configuration \(x=(x_1,\ldots,x_N)\in(\R^d)^N\), let
\[
  \mu_x^N:=\frac1N\sum_{j=1}^N\delta_{x_j}.
\]
We analyze the self-masked, or leave-one-out, finite-particle version of the non-conservative Laplace drifting field. For each \(i\), define
\[
  \mu_{x,-i}^{N}
  :=
  \frac1{N-1}\sum_{j\ne i}\delta_{x_j},
\]
and
\[
  u_{x,-i}(z)
  :=
  M_{\nu,h}(z)-M_{\mu_{x,-i}^{N},h}(z).
\]

The leave-one-out form is both analytically convenient and faithful to the
self-masking used in the original drifting implementation. Analytically, it
ensures that \(u_{x,-i}\) depends on \(x_i\) only through the evaluation point
\(z=x_i\), so that
\[
  \nabla_{x_i}\cdot u_{x,-i}(x_i)
  =
  \nabla_z\cdot u_{x,-i}(z)\big|_{z=x_i},
\]
and the joint-entropy identity has no reciprocal self-interaction correction.
This matches the implementation of \citet[Algorithm 2]{deng2026drifting}, where generated
samples are reused as negative samples but the diagonal self-interaction is
masked. 

A full-KDE version without leave-one-out could also be studied, but it would
produce an additional self-interaction correction. For the exact Laplace kernel
this correction is less transparent because the field is non-conservative and
the kernel is not differentiable at the origin. We therefore analyze the
self-masked version, which is the version closest to the practical algorithm.

The continuous-time dynamics are
\begin{equation}
  \dot X_i(t)
  =
  u_{X(t),-i}(X_i(t)),
  \qquad i=1,\ldots,N.
  \label{eq:laplace-ooo-ode}
\end{equation}
The residual drift quantity we want to control is
\begin{equation}
  \mathsf V_N^{\mathrm{Lap}}(x)
  :=
  \frac1N\sum_{i=1}^N
  \norm{u_{x,-i}(x_i)}^2.
  \label{eq:laplace-particle-velocity}
\end{equation}
This is the mean squared norm of the non-conservative Laplace drift applied to the generated particles. Hence if one more explicit drift step is applied,
\[
  \widetilde x_i=x_i+\eta u_{x,-i}(x_i),
\]
then the identity coupling gives
\[
  W_2^2\left(
  \frac1N\sum_{i=1}^N\delta_{x_i},
  \frac1N\sum_{i=1}^N\delta_{\widetilde x_i}
  \right)
  \le
  \eta^2\mathsf V_N^{\mathrm{Lap}}(x).
\]
Thus a bound on \(\mathsf V_N^{\mathrm{Lap}}\) controls how much an additional non-conservative Laplace drifting correction changes the generated distribution.

\subsection{Sharp-score Representation of the Laplace Mean-shift Field}

The Laplace mean-shift field is not itself a KDE score, but it is a scaled score of a companion kernel. Define the sharp companion kernel
\begin{equation}
  L_h(u)
  :=
  h(\norm{u}+h)K_h(u).
  \label{eq:sharp-laplace-kernel}
\end{equation}
Let
\[
  R_{\alpha,h}(z)
  :=
  \int_{\R^d}L_h(z-y)\,\alpha(\dd y),
\]
and define the sharp-smoothed score
\[
  \sigma_{\alpha,h}(z)
  :=
  \nabla\log R_{\alpha,h}(z).
\]
Also define the scalar scale factor
\[
  a_{\alpha,h}(z)
  :=
  \frac{R_{\alpha,h}(z)}{Q_{\alpha,h}(z)}.
\]

\begin{lemma}[Sharp-score representation]
\label{lem:laplace-sharp-score}
For every probability measure \(\alpha\) for which the following quantities are well-defined,
\begin{equation}
  M_{\alpha,h}(z)
  =
  a_{\alpha,h}(z)\sigma_{\alpha,h}(z).
  \label{eq:mean-shift-sharp-score}
\end{equation}
Moreover,
\begin{equation}
  a_{\alpha,h}(z)
  =
  h\{\bar r_{\alpha,h}(z)+h\},
  \label{eq:laplace-scale-radius}
\end{equation}
where
\[
  \bar r_{\alpha,h}(z)
  :=
  \frac{\int_{\R^d}\norm{y-z}K_h(z-y)\,\alpha(\dd y)}
       {\int_{\R^d}K_h(z-y)\,\alpha(\dd y)}
\]
is the Laplace-weighted local mean radius.
\end{lemma}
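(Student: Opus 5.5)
The plan is to verify the pointwise gradient identity for the sharp companion kernel, namely
\[
  \nabla_z L_h(z-y)=(y-z)K_h(z-y)\qquad (z\neq y),
\]
and then integrate it against \(\alpha\).

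\emph{Gradient identity.} Write \(w:=z-y\) and \(K_h(w)=c_dh^{-d}e^{-\norm{w}/h}\). For \(w\neq0\) we have \(\nabla\norm{w}=w/\norm{w}\) and \(\nabla K_h(w)=-\frac{w}{h\norm{w}}K_h(w)\). The product rule applied to \(L_h(w)=h(\norm{w}+h)K_h(w)\) gives
\[
  \nabla L_h(w)=h\frac{w}{\norm{w}}K_h(w)-h(\norm{w}+h)\frac{w}{h\norm{w}}K_h(w)
  =-wK_h(w).
\]
This equals \((y-z)K_h(z-y)\), as claimed. At \(w=0\), \(L_h\) is Lipschitz. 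Its only non-smooth term is \(h c_dh^{-d}(\norm{w}+h)e^{-\norm{w}/h}\), whose radial derivative at \(r=0\) is \(c_dh^{1-d}(1-1)=0\). Hence \(L_h\) is in fact \(C^1\) with \(\nabla L_h(0)=0\), which is consistent with the formula.

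\emph{Differentiation under the integral.} Next justify
\[
  \nabla R_{\alpha,h}(z)=\int (y-z)K_h(z-y)\,\alpha(\dd y).
\]
This follows by dominated convergence: \(\nabla L_h\) is continuous and bounded, since \(\norm{w}K_h(w)\le c_dh^{1-d}\sup_r re^{-r}\). The mean-value bound \(\abs{L_h(w+\delta)-L_h(w)}\le \sup\norm{\nabla L_h}\,\norm{\delta}\) then lets us exchange gradient and integral for any probability \(\alpha\). This step is the main point requiring care, because of the kink of the Laplace kernel at the origin. The \(C^1\) observation above handles it, even when \(\alpha\) has atoms, as for empirical measures.

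\emph{Dividing through.} Since \(R_{\alpha,h}>0\) and \(Q_{\alpha,h}>0\),
\[
  M_{\alpha,h}=\frac{\nabla R_{\alpha,h}}{Q_{\alpha,h}}=\frac{R_{\alpha,h}}{Q_{\alpha,h}}\cdot\frac{\nabla R_{\alpha,h}}{R_{\alpha,h}}=a_{\alpha,h}\,\sigma_{\alpha,h},
\]
which is \eqref{eq:mean-shift-sharp-score}.

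\emph{Scale factor.} Finally, substitute the definition of \(L_h\) into \(R_{\alpha,h}\):
\[
  R_{\alpha,h}(z)=h\int\norm{y-z}K_h(z-y)\,\alpha(\dd y)+h^2Q_{\alpha,h}(z).
\]
Dividing by \(Q_{\alpha,h}(z)\) gives \(a_{\alpha,h}=h(\bar r_{\alpha,h}+h)\), which is \eqref{eq:laplace-scale-radius}. The only integrability needed is \(\int\norm{y-z}K_h(z-y)\,\alpha(\dd y)<\infty\), and this holds automatically because \(\norm{w}K_h(w)\) is bounded.
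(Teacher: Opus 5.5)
Your proof is correct and follows the same route as the paper. You verify $\nabla_z L_h(z-y)=(y-z)K_h(z-y)$ (your product-rule computation is equivalent to the paper's radial derivative), integrate against $\alpha$, divide by $Q_{\alpha,h}$, and expand $R_{\alpha,h}$ to read off $a_{\alpha,h}=h(\bar r_{\alpha,h}+h)$. In addition, your dominated-convergence justification for differentiating under the integral, using the $C^1$ property of $L_h$ at the origin and the bound on $\norm{w}K_h(w)$, fills in a step the paper takes for granted.
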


\begin{proof}
Let \(r=\norm{u}\). Since
\[
  L_h(u)=h(r+h)c_dh^{-d}e^{-r/h},
\]
the radial derivative is
\[
  \frac{\dd}{\dd r}\{h(r+h)c_dh^{-d}e^{-r/h}\}
  =
  hc_dh^{-d}e^{-r/h}
  -
  h(r+h)c_dh^{-d}\frac1h e^{-r/h}.
\]
Combining the two terms,
\[
  \frac{\dd}{\dd r}\{h(r+h)c_dh^{-d}e^{-r/h}\}
  =
  c_dh^{1-d}e^{-r/h}
  -
  (r+h)c_dh^{-d}e^{-r/h}.
\]
Since
\[
  c_dh^{1-d}e^{-r/h}
  =
  hc_dh^{-d}e^{-r/h},
\]
we get
\[
  \frac{\dd}{\dd r}\{h(r+h)c_dh^{-d}e^{-r/h}\}
  =
  \{h-(r+h)\}c_dh^{-d}e^{-r/h}
  =
  -rK_h(u).
\]
Therefore, for \(u\ne0\),
\[
  \nabla_u L_h(u)
  =
  -rK_h(u)\frac{u}{r}
  =
  -uK_h(u).
\]
By continuity, the same identity holds at \(u=0\), because both sides vanish there. Taking \(u=z-y\), we obtain
\[
  \nabla_z L_h(z-y)
  =
  -(z-y)K_h(z-y)
  =
  (y-z)K_h(z-y).
\]
Hence
\[
  \nabla R_{\alpha,h}(z)
  =
  \int_{\R^d}\nabla_zL_h(z-y)\,\alpha(\dd y)
  =
  \int_{\R^d}(y-z)K_h(z-y)\,\alpha(\dd y).
\]
Dividing by \(Q_{\alpha,h}(z)\) gives
\[
  M_{\alpha,h}(z)
  =
  \frac{\nabla R_{\alpha,h}(z)}{Q_{\alpha,h}(z)}
  =
  \frac{R_{\alpha,h}(z)}{Q_{\alpha,h}(z)}
  \frac{\nabla R_{\alpha,h}(z)}{R_{\alpha,h}(z)}
  =
  a_{\alpha,h}(z)\sigma_{\alpha,h}(z).
\]
This proves \eqref{eq:mean-shift-sharp-score}.

It remains to prove \eqref{eq:laplace-scale-radius}. By definition of \(L_h\),
\[
  R_{\alpha,h}(z)
  =
  \int_{\R^d}h(\norm{z-y}+h)K_h(z-y)\,\alpha(\dd y).
\]
Thus
\[
  R_{\alpha,h}(z)
  =
  h\int_{\R^d}\norm{z-y}K_h(z-y)\,\alpha(\dd y)
  +
  h^2\int_{\R^d}K_h(z-y)\,\alpha(\dd y).
\]
The second integral is \(Q_{\alpha,h}(z)\). Therefore
\[
  \frac{R_{\alpha,h}(z)}{Q_{\alpha,h}(z)}
  =
  h
  \frac{\int \norm{z-y}K_h(z-y)\,\alpha(\dd y)}
       {Q_{\alpha,h}(z)}
  +
  h^2.
\]
Since \(\norm{z-y}=\norm{y-z}\), this is
\[
  a_{\alpha,h}(z)
  =
  h\bar r_{\alpha,h}(z)+h^2
  =
  h\{\bar r_{\alpha,h}(z)+h\}.
\]
The lemma follows.
\end{proof}

Let
\[
  Z_{\#,h}:=\int_{\R^d}L_h(u)\dd u.
\]
The normalized sharp-smoothed data density is
\[
  \rho_{\nu,h}^{\#}(z)
  :=
  \frac{R_{\nu,h}(z)}{Z_{\#,h}}.
\]
Since the normalizing constant does not depend on \(z\),
\[
  \nabla\log\rho_{\nu,h}^{\#}(z)
  =
  \nabla\log R_{\nu,h}(z)
  =
  \sigma_{\nu,h}(z).
\]
For the full empirical model, define
\[
  R_{x,h}:=R_{\mu_x^N,h},
  \qquad
  Q_{x,h}:=Q_{\mu_x^N,h},
  \qquad
  \sigma_{x,h}:=\nabla\log R_{x,h},
  \qquad
  a_{x,h}:=\frac{R_{x,h}}{Q_{x,h}}.
\]
Also define the full non-conservative Laplace field
\[
  u_x(z):=M_{\nu,h}(z)-M_{\mu_x^N,h}(z).
\]
By Lemma~\ref{lem:laplace-sharp-score},
\[
  u_x(z)
  =
  a_{\nu,h}(z)\sigma_{\nu,h}(z)
  -
  a_{x,h}(z)\sigma_{x,h}(z).
\]
Define the sharp score mismatch
\[
  b_x^{\#}(z)
  :=
  \sigma_{\nu,h}(z)-\sigma_{x,h}(z),
\]
and the Laplace scale-mismatch residual
\begin{equation}
  e_x(z)
  :=
  \{a_{\nu,h}(z)-a_{x,h}(z)\}\sigma_{\nu,h}(z).
  \label{eq:laplace-scale-residual}
\end{equation}
Then
\begin{equation}
  u_x(z)
  =
  a_{x,h}(z)b_x^{\#}(z)+e_x(z).
  \label{eq:laplace-decomposition}
\end{equation}
This decomposition is the key difference between the non-conservative Laplace field and the conservative field. If the local scales \(a_{\nu,h}\) and \(a_{x,h}\) match, then the non-conservative Laplace drift is a positive scalar preconditioning of a score mismatch. If they do not match, the residual \(e_x\) is unavoidable.

\subsection{Entropy Identity for the Leave-one-out Non-conservative Field}

Let \(p_t^N\) denote the joint density of \(X(t)\). We measure it against the sharp-smoothed product target
\[
  (\rho_{\nu,h}^{\#})^{\otimes N}(x)
  =
  \prod_{i=1}^N\rho_{\nu,h}^{\#}(x_i),
\]
and define
\begin{equation}
  H_N^{\#}(t)
  :=
  \KL\left(p_t^N\,\middle\|\,(\rho_{\nu,h}^{\#})^{\otimes N}\right).
  \label{eq:sharp-joint-entropy}
\end{equation}
Let
\[
  \mathcal A_{\#}f(z)
  :=
  \divergence f(z)+\sigma_{\nu,h}(z)\cdot f(z)
\]
be the Stein divergence associated with \(\rho_{\nu,h}^{\#}\). Define
\begin{equation}
  \mathsf S_N^{\mathrm{Lap}}(x)
  :=
  \frac1N\sum_{i=1}^N
  \mathcal A_{\#}u_{x,-i}(x_i).
  \label{eq:laplace-empirical-stein}
\end{equation}

\begin{assumption}[Regularity for the leave-one-out Laplace flow]
\label{ass:laplace-regularity}
The ODE \eqref{eq:laplace-ooo-ode} has a unique solution on \([0,T]\), the joint law admits a differentiable density \(p_t^N\), and \(p_t^N\) solves the Liouville equation associated with the vector field \(x\mapsto (u_{x,-1}(x_1),\ldots,u_{x,-N}(x_N))\). All integrations by parts below are justified, and the trajectory avoids particle collisions and data atoms at which the exact Laplace field is not differentiable. Equivalently, one may replace \(K_h\) by a smooth regularized Laplace kernel and then pass to the exact Laplace limit under uniform bounds.
\end{assumption}

\begin{theorem}[Joint-entropy identity for non-conservative Laplace drifting]
\label{thm:laplace-entropy-identity}
Under Assumption~\ref{ass:laplace-regularity},
\begin{equation}
  \frac{\dd}{\dd t}H_N^{\#}(t)
  =
  -N\E_t[\mathsf S_N^{\mathrm{Lap}}(X(t))].
  \label{eq:laplace-entropy-identity}
\end{equation}
\end{theorem}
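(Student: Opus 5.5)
The proof repeats the argument of Theorem~\ref{thm:entropy-identity}, now with the sharp product target $(\rho_{\nu,h}^{\#})^{\otimes N}$ in place of $\rho^{\otimes N}$ and with the leave-one-out velocities $v_i(x):=u_{x,-i}(x_i)$ in place of $b_i$. Under Assumption~\ref{ass:laplace-regularity} we may differentiate $H_N^{\#}(t)$ under the integral and drop the $+1$ term using $\int\partial_tp_t^N=0$. We then substitute the Liouville equation $\partial_tp_t^N=-\sum_i\nabla_{x_i}\cdot(p_t^Nv_i)$ and integrate by parts in each $x_i$. This gives
\[
  \frac{\dd}{\dd t}H_N^{\#}(t)=\sum_{i=1}^N\int p_t^N\,v_i\cdot\bigl\{\nabla_{x_i}\log p_t^N-\sigma_{\nu,h}(x_i)\bigr\}\dd x .
\]
Here we used $\nabla_{x_i}\log(\rho^{\#}_{\nu,h})^{\otimes N}(x)=\nabla\log\rho^{\#}_{\nu,h}(x_i)=\sigma_{\nu,h}(x_i)$, which holds because the constant $Z_{\#,h}$ does not depend on $z$. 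Writing $p_t^N\nabla_{x_i}\log p_t^N=\nabla_{x_i}p_t^N$ and integrating by parts once more, we obtain
\[
  \frac{\dd}{\dd t}H_N^{\#}(t)=-\E_t\Bigl[\sum_{i=1}^N\bigl\{\nabla_{x_i}\cdot v_i(X(t))+\sigma_{\nu,h}(X_i(t))\cdot v_i(X(t))\bigr\}\Bigr].
\]

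The key step is computing $\nabla_{x_i}\cdot v_i$. In the conservative case this derivative produced the self-interaction term $\Delta K_h(0)/(Nq_x(x_i))$. Here it does not. The field $u_{x,-i}(z)=M_{\nu,h}(z)-M_{\mu^N_{x,-i},h}(z)$ depends on the configuration only through the data measure $\nu$ and the points $\{x_j\}_{j\ne i}$. So, as a function of $x_i$ with the other particles fixed, $v_i(x)$ is the fixed field $z\mapsto u_{x,-i}(z)$ evaluated at $z=x_i$. There is no center-derivative contribution, and the chain rule gives $\nabla_{x_i}\cdot v_i(x)=\nabla_z\cdot u_{x,-i}(z)\big|_{z=x_i}$. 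Adding the $\sigma_{\nu,h}$ term yields $\mathcal A_{\#}u_{x,-i}(x_i)$, and summing over $i$ gives $N\mathsf S_N^{\mathrm{Lap}}(x)$. This is exactly \eqref{eq:laplace-entropy-identity}. Note that the decomposition of Lemma~\ref{lem:laplace-sharp-score} is not needed for the identity itself; it becomes relevant only when $\mathsf S_N^{\mathrm{Lap}}$ is later compared with a squared quantity.

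The main obstacle is regularity, since the exact Laplace kernel is not differentiable at the origin. The field $M_{\mu^N_{x,-i},h}$ is smooth only away from the points $x_j$, $j\ne i$, and $M_{\nu,h}$ only away from atoms of $\nu$. The pointwise divergence above therefore requires $x_i\ne x_j$ and that $x_i$ avoid the atoms of $\nu$. Assumption~\ref{ass:laplace-regularity} supplies exactly this: no collisions, no data atoms, and vanishing boundary terms. Any residual concern (for instance, a distributional divergence of $\norm{u}$-type kinks on a null set) is handled by the alternative stated in that assumption: carry out the computation for a smooth regularized Laplace kernel, where every step is classical, and pass to the limit using the assumed uniform bounds. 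Crucially, the leave-one-out structure removes the $K_h(0)$-type singular self-term that would otherwise appear with a nonsmooth kernel.
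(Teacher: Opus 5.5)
Your proposal is correct and follows the paper's proof step by step. Both differentiate $H_N^{\#}$, drop the $+1$ term, substitute the Liouville equation, and integrate by parts twice using $\nabla_{x_i}\log(\rho_{\nu,h}^{\#})^{\otimes N}(x)=\sigma_{\nu,h}(x_i)$; both then observe that $u_{x,-i}$ depends on $x_i$ only through the evaluation point, so no self-interaction term appears. Your regularity discussion defers to Assumption~\ref{ass:laplace-regularity}, exactly as the paper does.
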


\begin{proof}
Let
\[
  r_N^{\#}(x):=(\rho_{\nu,h}^{\#})^{\otimes N}(x).
\]
The joint density \(p_t^N\) satisfies
\[
  \partial_t p_t^N(x)
  +
  \sum_{i=1}^N
  \nabla_{x_i}\cdot
  \left\{
  p_t^N(x)u_{x,-i}(x_i)
  \right\}
  =0.
\]
By definition,
\[
  H_N^{\#}(t)
  =
  \int p_t^N(x)\log\frac{p_t^N(x)}{r_N^{\#}(x)}\dd x.
\]
Differentiating under the integral gives
\[
  \frac{\dd}{\dd t}H_N^{\#}(t)
  =
  \int \partial_t p_t^N(x)
  \left\{
  \log\frac{p_t^N(x)}{r_N^{\#}(x)}+1
  \right\}\dd x.
\]
Since \(p_t^N\) is a probability density,
\[
  \int \partial_t p_t^N(x)\dd x=0.
\]
Therefore
\[
  \frac{\dd}{\dd t}H_N^{\#}(t)
  =
  \int \partial_t p_t^N(x)
  \log\frac{p_t^N(x)}{r_N^{\#}(x)}\dd x.
\]
Using the Liouville equation,
\[
  \frac{\dd}{\dd t}H_N^{\#}(t)
  =
  -
  \sum_{i=1}^N
  \int
  \nabla_{x_i}\cdot
  \{p_t^N(x)u_{x,-i}(x_i)\}
  \log\frac{p_t^N(x)}{r_N^{\#}(x)}
  \dd x.
\]
Integrating by parts in \(x_i\),
\[
  \frac{\dd}{\dd t}H_N^{\#}(t)
  =
  \sum_{i=1}^N
  \int
  p_t^N(x)u_{x,-i}(x_i)\cdot
  \nabla_{x_i}\log\frac{p_t^N(x)}{r_N^{\#}(x)}
  \dd x.
\]
Now
\[
  \nabla_{x_i}\log r_N^{\#}(x)
  =
  \nabla\log\rho_{\nu,h}^{\#}(x_i)
  =
  \sigma_{\nu,h}(x_i).
\]
Hence
\[
  \nabla_{x_i}\log\frac{p_t^N(x)}{r_N^{\#}(x)}
  =
  \nabla_{x_i}\log p_t^N(x)-\sigma_{\nu,h}(x_i).
\]
Thus
\begin{align*}
  \frac{\dd}{\dd t}H_N^{\#}(t)
  &=
  \sum_{i=1}^N
  \int
  p_t^N u_{x,-i}(x_i)\cdot\nabla_{x_i}\log p_t^N
  \dd x \\
  &\qquad
  -
  \sum_{i=1}^N
  \int
  p_t^N u_{x,-i}(x_i)\cdot\sigma_{\nu,h}(x_i)
  \dd x.
\end{align*}
Since
\[
  p_t^N\nabla_{x_i}\log p_t^N=\nabla_{x_i}p_t^N,
\]
the first integral in the last display satisfies
\[
  \int
  p_t^N u_{x,-i}(x_i)\cdot\nabla_{x_i}\log p_t^N
  \dd x
  =
  \int
  u_{x,-i}(x_i)\cdot\nabla_{x_i}p_t^N
  \dd x.
\]
Integrating by parts again,
\[
  \int
  u_{x,-i}(x_i)\cdot\nabla_{x_i}p_t^N
  \dd x
  =
  -
  \int
  p_t^N
  \nabla_{x_i}\cdot u_{x,-i}(x_i)
  \dd x.
\]
Because \(u_{x,-i}\) depends on \(x_i\) only through the evaluation point \(z=x_i\), not through the leave-one-out model measure \(\mu_{x,-i}^N\), we have
\[
  \nabla_{x_i}\cdot u_{x,-i}(x_i)
  =
  \divergence_z u_{x,-i}(z)\big|_{z=x_i}.
\]
Therefore
\[
  \frac{\dd}{\dd t}H_N^{\#}(t)
  =
  -
  \E_t\left[
  \sum_{i=1}^N
  \left\{
  \divergence u_{X(t),-i}(X_i(t))
  +
  \sigma_{\nu,h}(X_i(t))\cdot u_{X(t),-i}(X_i(t))
  \right\}
  \right].
\]
The expression inside braces is exactly
\[
  \mathcal A_{\#}u_{X(t),-i}(X_i(t)).
\]
Hence
\[
  \frac{\dd}{\dd t}H_N^{\#}(t)
  =
  -
  N\E_t[\mathsf S_N^{\mathrm{Lap}}(X(t))],
\]
which proves the identity.
\end{proof}

\subsection{Laplace Coercivity with A Scale-mismatch Residual}

Let
\[
  \rho_{x,h}^{\#}(z)
  :=
  \frac{R_{x,h}(z)}{Z_{\#,h}}
\]
be the full sharp-smoothed empirical density. Define the sharp-smoothed drift energy
\begin{equation}
  \mathcal V_h^{\mathrm{Lap}}(x)
  :=
  \int_{\R^d}
  \norm{u_x(z)}^2\rho_{x,h}^{\#}(z)\dd z,
  \label{eq:laplace-pop-energy}
\end{equation}
and the sharp-smoothed Stein drift
\begin{equation}
  \mathcal J_h^{\mathrm{Lap}}(x)
  :=
  \int_{\R^d}
  \mathcal A_{\#}u_x(z)\rho_{x,h}^{\#}(z)\dd z.
  \label{eq:laplace-pop-stein}
\end{equation}

\begin{lemma}[Sharp-smoothed Stein identity]
\label{lem:laplace-stein-identity}
For every fixed configuration \(x\) for which the integrations by parts are justified,
\begin{equation}
  \mathcal J_h^{\mathrm{Lap}}(x)
  =
  \int_{\R^d}
  b_x^{\#}(z)\cdot u_x(z)\rho_{x,h}^{\#}(z)\dd z.
  \label{eq:laplace-stein-projection}
\end{equation}
\end{lemma}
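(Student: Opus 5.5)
This is the exact analogue of Lemma~\ref{lem:stein-to-fisher}, with the sharp empirical density \(\rho_{x,h}^{\#}\) in place of \(q_x\) and the sharp data score \(\sigma_{\nu,h}\) in place of \(s_\rho\). The one difference is that \(u_x\) is not itself the score mismatch, so the result is a projection onto \(b_x^{\#}\) rather than a squared norm. The plan is to integrate the divergence part of \(\mathcal A_{\#}u_x\) by parts against \(\rho_{x,h}^{\#}\) and use that \(\rho_{x,h}^{\#}\) has score \(\sigma_{x,h}\).

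The steps, in order, are as follows.

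\emph{Step 1: split the integrand.} By the definition of \(\mathcal A_{\#}\),
\[
  \mathcal J_h^{\mathrm{Lap}}(x)=\int \rho_{x,h}^{\#}\,\nabla\cdot u_x\dd z+\int \rho_{x,h}^{\#}\,\sigma_{\nu,h}\cdot u_x\dd z .
\]

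\emph{Step 2: integrate the first term by parts.} This gives \(-\int \nabla\rho_{x,h}^{\#}\cdot u_x\dd z\).

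\emph{Step 3: identify the sharp empirical score.} Since \(Z_{\#,h}\) does not depend on \(z\), we have \(\nabla\log\rho_{x,h}^{\#}=\nabla\log R_{x,h}=\sigma_{x,h}\). Hence \(\nabla\rho_{x,h}^{\#}=\rho_{x,h}^{\#}\sigma_{x,h}\), and the first term equals \(-\int\rho_{x,h}^{\#}\sigma_{x,h}\cdot u_x\dd z\).

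\emph{Step 4: combine.} Adding the two terms yields \(\int(\sigma_{\nu,h}-\sigma_{x,h})\cdot u_x\,\rho_{x,h}^{\#}\dd z\), which is exactly \eqref{eq:laplace-stein-projection} by the definition of \(b_x^{\#}\).

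The only delicate step is Step 2. The Laplace kernel is not differentiable at the origin, so \(u_x\) may fail to be \(C^1\) at the particle locations \(x_j\) and at data atoms. The divergence \(\nabla\cdot u_x\) must therefore be interpreted where \(u_x\) is differentiable, and the integration by parts must be justified globally. Two points need checking.

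First, there are no boundary contributions at infinity. This follows from the exponential decay of \(R_{x,h}\) together with the at most linear growth of \(u_x\): the mean-shift vectors are bounded by local radii, so \(\norm{M_{\alpha,h}}\le\bar r_{\alpha,h}\).

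Second, the isolated nonsmooth points contribute nothing. Near such a point the field \(u_x\) is continuous and locally Lipschitz. Indeed, \(\nabla_z\{(y-z)K_h(z-y)\}\) is bounded near \(z=y\), since \(K_h\) is Lipschitz. So \(u_x\in W^{1,\infty}_{\mathrm{loc}}\), and the Gauss--Green formula holds after excising small balls whose boundary flux vanishes as the radius tends to zero.

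The statement says "for which the integrations by parts are justified", so, consistent with Assumption~\ref{ass:laplace-regularity}, I would note this justification briefly rather than reprove it. Alternatively, one can run the computation for a smooth regularized Laplace kernel and pass to the limit.
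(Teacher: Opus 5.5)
Your proof is correct and follows the paper's argument essentially line by line. You split $\mathcal A_{\#}u_x$, integrate the divergence term by parts against $\rho^{\#}_{x,h}$, use $\nabla\rho^{\#}_{x,h}=\rho^{\#}_{x,h}\sigma_{x,h}$ (since $Z_{\#,h}$ does not depend on $z$), and recombine into $b_x^{\#}\cdot u_x$. Your added justification of the integration by parts goes beyond the paper, which simply assumes it in the hypothesis, and it is sound: linear growth of $u_x$ is beaten by the exponential decay of $R_{x,h}$, and $u_x$ is locally Lipschitz at the nonsmooth points of the Laplace kernel.
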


\begin{proof}
By definition,
\[
  \mathcal A_{\#}u_x
  =
  \divergence u_x+\sigma_{\nu,h}\cdot u_x.
\]
Therefore
\[
  \mathcal J_h^{\mathrm{Lap}}(x)
  =
  \int \rho_{x,h}^{\#}\divergence u_x
  +
  \int \rho_{x,h}^{\#}\sigma_{\nu,h}\cdot u_x.
\]
For the first term, integration by parts gives
\[
  \int \rho_{x,h}^{\#}\divergence u_x
  =
  -
  \int \nabla\rho_{x,h}^{\#}\cdot u_x.
\]
Since
\[
  \nabla\rho_{x,h}^{\#}
  =
  \rho_{x,h}^{\#}\nabla\log\rho_{x,h}^{\#}
  =
  \rho_{x,h}^{\#}\sigma_{x,h},
\]
we obtain
\[
  \int \rho_{x,h}^{\#}\divergence u_x
  =
  -
  \int \rho_{x,h}^{\#}\sigma_{x,h}\cdot u_x.
\]
Thus
\[
  \mathcal J_h^{\mathrm{Lap}}(x)
  =
  \int
  \rho_{x,h}^{\#}
  (\sigma_{\nu,h}-\sigma_{x,h})\cdot u_x.
\]
By definition, \(b_x^{\#}=\sigma_{\nu,h}-\sigma_{x,h}\). Hence
\[
  \mathcal J_h^{\mathrm{Lap}}(x)
  =
  \int
  b_x^{\#}\cdot u_x\,\rho_{x,h}^{\#},
\]
which proves the claim.
\end{proof}

\begin{assumption}[Laplace scale alignment]
\label{ass:laplace-scale-alignment}
For all configurations visited by the flow on \([0,T]\), there exist constants \(0<\lambda_h\le L_h<\infty\) and \(\Delta_h\ge0\) such that
\begin{equation}
  \lambda_h
  \le
  a_{x,h}(z)
  \le
  L_h
  \qquad
  \text{for all }z\in\R^d,
  \label{eq:scale-upper-lower}
\end{equation}
and
\begin{equation}
  \int_{\R^d}
  \norm{e_x(z)}^2\rho_{x,h}^{\#}(z)\dd z
  \le
  \Delta_h^2.
  \label{eq:scale-residual-bound}
\end{equation}
\end{assumption}

\begin{lemma}[Laplace coercivity]
\label{lem:laplace-coercivity}
Under Assumption~\ref{ass:laplace-scale-alignment},
\begin{equation}
  \mathcal J_h^{\mathrm{Lap}}(x)
  \ge
  \gamma_h\mathcal V_h^{\mathrm{Lap}}(x)
  -
  \beta_h\Delta_h^2,
  \label{eq:laplace-coercivity}
\end{equation}
where
\begin{equation}
  \gamma_h:=\frac{\lambda_h}{4L_h^2},
  \qquad
  \beta_h:=\frac{\lambda_h}{2L_h^2}+\frac1{2\lambda_h}.
  \label{eq:gamma-beta-laplace}
\end{equation}
\end{lemma}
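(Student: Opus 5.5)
The plan is to start from Lemma~\ref{lem:laplace-stein-identity}, which writes \(\mathcal J_h^{\mathrm{Lap}}(x)\) as the \(\rho_{x,h}^{\#}\)-average of \(b_x^{\#}\cdot u_x\). I will prove a pointwise lower bound on \(b_x^{\#}(z)\cdot u_x(z)\) of the form \(\gamma_h\norm{u_x(z)}^2-\beta_h\norm{e_x(z)}^2\). Integrating this bound against the probability density \(\rho_{x,h}^{\#}\) and applying \eqref{eq:scale-residual-bound} then gives \eqref{eq:laplace-coercivity} directly. Everything reduces to elementary inequalities at a fixed \(z\), using the decomposition \eqref{eq:laplace-decomposition} and the scale bounds \eqref{eq:scale-upper-lower}. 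Throughout, write \(a=a_{x,h}(z)\), \(b=b_x^{\#}(z)\), \(e=e_x(z)\) and \(u=u_x(z)=ab+e\).

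\textbf{Step 1 (lower bound by the sharp score).} Expanding gives \(b\cdot u=a\norm{b}^2+b\cdot e\). Apply Young's inequality to the cross term with weight \(a\):
\[
  b\cdot e\ge -\tfrac a2\norm{b}^2-\tfrac1{2a}\norm{e}^2 .
\]
Since \(a\ge\lambda_h\), this yields
\[
  b\cdot u\ge \tfrac{a}{2}\norm b^2-\tfrac1{2\lambda_h}\norm e^2\ge \tfrac{\lambda_h}{2}\norm b^2-\tfrac1{2\lambda_h}\norm e^2 .
\]

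\textbf{Step 2 (convert from \(b\) to \(u\)).} The elementary inequality \(\norm{ab+e}^2\le 2a^2\norm b^2+2\norm e^2\), together with \(a\le L_h\), gives \(\norm u^2\le 2L_h^2\norm b^2+2\norm e^2\). Rearranging,
\[
  \norm b^2\ge \frac{\norm u^2}{2L_h^2}-\frac{\norm e^2}{L_h^2}.
\]
Substituting this into Step~1 gives
\[
  b\cdot u\ge \frac{\lambda_h}{4L_h^2}\norm u^2-\Big(\frac{\lambda_h}{2L_h^2}+\frac1{2\lambda_h}\Big)\norm e^2 .
\]
The coefficients are exactly \(\gamma_h\) and \(\beta_h\) from \eqref{eq:gamma-beta-laplace}.

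\textbf{Step 3 (integrate).} Multiply the pointwise bound by \(\rho_{x,h}^{\#}\ge0\) and integrate. The left side becomes \(\mathcal J_h^{\mathrm{Lap}}(x)\) by Lemma~\ref{lem:laplace-stein-identity}. The \(\norm{u}^2\) term becomes \(\gamma_h\mathcal V_h^{\mathrm{Lap}}(x)\). The \(\norm{e}^2\) term is at most \(\beta_h\Delta_h^2\) by \eqref{eq:scale-residual-bound}.

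No step is a genuine obstacle. The only point needing care is the choice of Young weight in Step~1. Using the local weight \(a\) makes the quadratic term \(a\norm b^2/2\) nonnegative before the lower bound \(\lambda_h\) is applied, and it is this choice that produces the stated constants.
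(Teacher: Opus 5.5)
Your proof is correct and is essentially the paper's argument. Both proofs expand \(b_x^{\#}\cdot u_x=a_{x,h}\norm{b_x^{\#}}^2+b_x^{\#}\cdot e_x\), use Young's inequality to reach \(\tfrac{\lambda_h}{2}\norm{b_x^{\#}}^2-\tfrac{1}{2\lambda_h}\norm{e_x}^2\), and then convert via \(\norm{u_x}^2\le 2L_h^2\norm{b_x^{\#}}^2+2\norm{e_x}^2\). The differences are cosmetic: you work pointwise before integrating against \(\rho_{x,h}^{\#}\), and you use the Young weight \(a_{x,h}(z)\) where the paper uses the constant \(\lambda_h\). The constant weight gives exactly the same \(\gamma_h,\beta_h\), so, contrary to your closing remark, the weight choice is not what produces the stated constants.
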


\begin{proof}
By Lemma~\ref{lem:laplace-stein-identity},
\[
  \mathcal J_h^{\mathrm{Lap}}(x)
  =
  \int b_x^{\#}(z)\cdot u_x(z)\rho_{x,h}^{\#}(z)\dd z.
\]
Using the decomposition \eqref{eq:laplace-decomposition},
\[
  u_x(z)=a_{x,h}(z)b_x^{\#}(z)+e_x(z),
\]
we get
\[
  b_x^{\#}(z)\cdot u_x(z)
  =
  a_{x,h}(z)\norm{b_x^{\#}(z)}^2
  +
  b_x^{\#}(z)\cdot e_x(z).
\]
Thus
\[
  \mathcal J_h^{\mathrm{Lap}}(x)
  =
  \int a_{x,h}\norm{b_x^{\#}}^2\rho_{x,h}^{\#}
  +
  \int b_x^{\#}\cdot e_x\,\rho_{x,h}^{\#}.
\]
By \(a_{x,h}\ge\lambda_h\),
\[
  \int a_{x,h}\norm{b_x^{\#}}^2\rho_{x,h}^{\#}
  \ge
  \lambda_h
  \int \norm{b_x^{\#}}^2\rho_{x,h}^{\#}.
\]
For the second term, Young's inequality gives, pointwise,
\[
  -b_x^{\#}\cdot e_x
  \le
  \frac{\lambda_h}{2}\norm{b_x^{\#}}^2
  +
  \frac1{2\lambda_h}\norm{e_x}^2.
\]
Equivalently,
\[
  b_x^{\#}\cdot e_x
  \ge
  -
  \frac{\lambda_h}{2}\norm{b_x^{\#}}^2
  -
  \frac1{2\lambda_h}\norm{e_x}^2.
\]
After integration,
\[
  \int b_x^{\#}\cdot e_x\,\rho_{x,h}^{\#}
  \ge
  -
  \frac{\lambda_h}{2}
  \int\norm{b_x^{\#}}^2\rho_{x,h}^{\#}
  -
  \frac1{2\lambda_h}
  \int\norm{e_x}^2\rho_{x,h}^{\#}.
\]
Using \eqref{eq:scale-residual-bound},
\[
  \int b_x^{\#}\cdot e_x\,\rho_{x,h}^{\#}
  \ge
  -
  \frac{\lambda_h}{2}
  \int\norm{b_x^{\#}}^2\rho_{x,h}^{\#}
  -
  \frac{\Delta_h^2}{2\lambda_h}.
\]
Combining the two lower bounds yields
\[
  \mathcal J_h^{\mathrm{Lap}}(x)
  \ge
  \frac{\lambda_h}{2}
  \int\norm{b_x^{\#}}^2\rho_{x,h}^{\#}
  -
  \frac{\Delta_h^2}{2\lambda_h}.
\]
Next, from \(u_x=a_{x,h}b_x^{\#}+e_x\) and \(a_{x,h}\le L_h\),
\[
  \norm{u_x}^2
  \le
  2a_{x,h}^2\norm{b_x^{\#}}^2+2\norm{e_x}^2
  \le
  2L_h^2\norm{b_x^{\#}}^2+2\norm{e_x}^2.
\]
Integrating with respect to \(\rho_{x,h}^{\#}\),
\[
  \mathcal V_h^{\mathrm{Lap}}(x)
  \le
  2L_h^2
  \int\norm{b_x^{\#}}^2\rho_{x,h}^{\#}
  +
  2\Delta_h^2.
\]
Rearranging,
\[
  \int\norm{b_x^{\#}}^2\rho_{x,h}^{\#}
  \ge
  \frac{\mathcal V_h^{\mathrm{Lap}}(x)-2\Delta_h^2}{2L_h^2}.
\]
Substituting this into the lower bound for \(\mathcal J_h^{\mathrm{Lap}}\),
\[
  \mathcal J_h^{\mathrm{Lap}}(x)
  \ge
  \frac{\lambda_h}{2}
  \frac{\mathcal V_h^{\mathrm{Lap}}(x)-2\Delta_h^2}{2L_h^2}
  -
  \frac{\Delta_h^2}{2\lambda_h}.
\]
Therefore
\[
  \mathcal J_h^{\mathrm{Lap}}(x)
  \ge
  \frac{\lambda_h}{4L_h^2}
  \mathcal V_h^{\mathrm{Lap}}(x)
  -
  \left(
  \frac{\lambda_h}{2L_h^2}
  +
  \frac1{2\lambda_h}
  \right)\Delta_h^2.
\]
This is exactly \eqref{eq:laplace-coercivity}.
\end{proof}

\subsection{From the Sharp-smoothed Coercivity to Particle Rates}

The entropy identity controls the empirical leave-one-out Stein quantity \(\mathsf S_N^{\mathrm{Lap}}\), whereas the coercivity lemma is written for the full-field sharp-smoothed quantities \(\mathcal J_h^{\mathrm{Lap}}\) and \(\mathcal V_h^{\mathrm{Lap}}\). The following assumption records the required quadrature and leave-one-out approximation errors.

\begin{assumption}[Quadrature and leave-one-out approximation]
\label{ass:laplace-quadrature}
For all configurations visited by the flow on \([0,T]\), there are constants \(\varepsilon_{S,h,N}\ge0\) and \(\varepsilon_{V,h,N}\ge0\) such that
\begin{align}
  \left|
  \mathsf S_N^{\mathrm{Lap}}(x)-\mathcal J_h^{\mathrm{Lap}}(x)
  \right|
  &\le
  \varepsilon_{S,h,N},
  \label{eq:laplace-S-quad}\\
  \left|
  \mathsf V_N^{\mathrm{Lap}}(x)-\mathcal V_h^{\mathrm{Lap}}(x)
  \right|
  &\le
  \varepsilon_{V,h,N}.
  \label{eq:laplace-V-quad}
\end{align}
\end{assumption}

\begin{remark}
\label{rem:loo-errors-assumed}
Assumption~\ref{ass:laplace-quadrature} separates the entropy argument from the
technical problem of controlling leave-one-out perturbations. The quantities
\(\varepsilon_{S,h,N}\) and \(\varepsilon_{V,h,N}\) contain two distinct effects:
the point-evaluation quadrature error and the difference between the full field
\(u_x\) and the leave-one-out fields \(u_{x,-i}\). Lemma~\ref{lem:laplace-quadrature-sufficient}
reduces these errors to Hessian bounds for the full-field integrands and the
pointwise leave-one-out errors
\[
  \ell_{S,h,N}
  :=
  \frac1N\sum_{i=1}^N
  \left|
  \mathcal A_{\#}u_{x,-i}(x_i)-\mathcal A_{\#}u_x(x_i)
  \right|,
\]
and
\[
  \ell_{V,h,N}
  :=
  \frac1N\sum_{i=1}^N
  \left|
  \norm{u_{x,-i}(x_i)}^2-\norm{u_x(x_i)}^2
  \right|.
\]
In this paper these leave-one-out errors are kept as explicit assumptions. A
fully explicit bound in terms of \(N\) and \(h\) would require uniform lower
bounds on the relevant Laplace KDE denominators, upper bounds on local weighted
moments, and derivative stability estimates for the sharp-score and
mean-shift maps after removing one particle. Under such bounds one expects
leave-one-out perturbations to be of order \(1/N\) times an \(h\)-dependent
stability constant, but deriving the sharp dependence is a separate stability
problem. The finite-particle rate in Theorem~\ref{thm:laplace-main-rate} below should
therefore be interpreted as conditional on the displayed
\(\varepsilon_{S,h,N}\) and \(\varepsilon_{V,h,N}\), rather than as an
unconditional \(N,h\)-explicit theorem for the practical minibatch algorithm.
\end{remark}

\begin{theorem}[Continuous-time finite-particle rate for non-conservative Laplace drifting]
\label{thm:laplace-main-rate}
Let Assumptions~\ref{ass:laplace-regularity},~\ref{ass:laplace-scale-alignment}, and~\ref{ass:laplace-quadrature} hold on \([0,T]\). Then
\begin{equation}
  \frac1T\int_0^T
  \E_t[\mathsf V_N^{\mathrm{Lap}}(X(t))]\dd t
  \le
  \frac{H_N^{\#}(0)}{\gamma_hNT}
  +
  \frac{\beta_h}{\gamma_h}\Delta_h^2
  +
  \frac{\varepsilon_{S,h,N}}{\gamma_h}
  +
  \varepsilon_{V,h,N},
  \label{eq:laplace-main-rate}
\end{equation}
where
\[
  \gamma_h=\frac{\lambda_h}{4L_h^2},
  \qquad
  \beta_h=\frac{\lambda_h}{2L_h^2}+\frac1{2\lambda_h}.
\]
In particular, if \(H_N^{\#}(0)\le \kappa_0N\) and \(T=N\), then
\begin{equation}
  \frac1N\int_0^N
  \E_t[\mathsf V_N^{\mathrm{Lap}}(X(t))]\dd t
  \le
  \frac{\kappa_0}{\gamma_hN}
  +
  \frac{\beta_h}{\gamma_h}\Delta_h^2
  +
  \frac{\varepsilon_{S,h,N}}{\gamma_h}
  +
  \varepsilon_{V,h,N}.
  \label{eq:laplace-TN-rate}
\end{equation}
Equivalently,
\begin{equation}
  \left(
  \frac1N\int_0^N
  \E_t[\mathsf V_N^{\mathrm{Lap}}(X(t))]\dd t
  \right)^{1/2}
  \le
  \sqrt{\frac{\kappa_0}{\gamma_hN}}
  +
  \sqrt{\frac{\beta_h}{\gamma_h}}\Delta_h
  +
  \sqrt{\frac{\varepsilon_{S,h,N}}{\gamma_h}}
  +
  \sqrt{\varepsilon_{V,h,N}}.
  \label{eq:laplace-root-rate}
\end{equation}
\end{theorem}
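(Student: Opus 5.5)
The proof follows the same template as Theorems~\ref{thm:S-rate} and~\ref{thm:main-rate}, with the coercivity Lemma~\ref{lem:laplace-coercivity} replacing the exact Stein--Fisher identity. First, integrate the identity of Theorem~\ref{thm:laplace-entropy-identity} over $[0,T]$. Using $H_N^{\#}(T)\ge0$, this gives
\[
  N\int_0^T\E_t[\mathsf S_N^{\mathrm{Lap}}(X(t))]\dd t
  \le H_N^{\#}(0).
\]
There is no reciprocal term here, since the leave-one-out construction removed the self-interaction.

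Second, convert the empirical leave-one-out Stein quantity into the full-field sharp-smoothed drift energy. For every configuration visited by the flow, \eqref{eq:laplace-S-quad} and Lemma~\ref{lem:laplace-coercivity} give
\[
  \mathsf S_N^{\mathrm{Lap}}(x)\ge \mathcal J_h^{\mathrm{Lap}}(x)-\varepsilon_{S,h,N}
  \ge \gamma_h\mathcal V_h^{\mathrm{Lap}}(x)-\beta_h\Delta_h^2-\varepsilon_{S,h,N}.
\]
By \eqref{eq:laplace-V-quad}, $\mathcal V_h^{\mathrm{Lap}}(x)\ge \mathsf V_N^{\mathrm{Lap}}(x)-\varepsilon_{V,h,N}$. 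Substituting, dividing by $\gamma_h>0$, and rearranging yields the pointwise bound
\[
  \mathsf V_N^{\mathrm{Lap}}(x)\le \frac{\mathsf S_N^{\mathrm{Lap}}(x)}{\gamma_h}+\frac{\beta_h}{\gamma_h}\Delta_h^2+\frac{\varepsilon_{S,h,N}}{\gamma_h}+\varepsilon_{V,h,N}.
\]

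Third, take expectations under $p_t^N$, integrate over $[0,T]$, divide by $T$, and insert the bound from the first step, $\frac1T\int_0^T\E_t[\mathsf S_N^{\mathrm{Lap}}]\dd t\le H_N^{\#}(0)/(NT)$. This gives \eqref{eq:laplace-main-rate}. Note that $\mathsf S_N^{\mathrm{Lap}}$ need not be nonnegative, but this does not matter: the bound holds pointwise and is linear in $\mathsf S_N^{\mathrm{Lap}}$, so it survives integration. Setting $T=N$ and $H_N^{\#}(0)\le\kappa_0N$ gives \eqref{eq:laplace-TN-rate}. Finally, \eqref{eq:laplace-root-rate} follows from subadditivity of the square root over four nonnegative terms.

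The only step needing care is the second one: checking that all the constants are uniform over the visited configurations. This is what allows the pointwise inequality to be integrated in time. Assumptions~\ref{ass:laplace-scale-alignment} and~\ref{ass:laplace-quadrature} are stated exactly uniformly over visited configurations, so this reduces to confirming that the coercivity lemma is applied with $x=X(t)$ for each $t$. It also requires that the integrability in Assumption~\ref{ass:laplace-regularity} justifies exchanging expectation and time integration.
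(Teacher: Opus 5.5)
Your proposal is correct and follows the paper's own proof essentially line for line. The paper likewise integrates the entropy identity and drops $H_N^{\#}(T)\ge0$, then chains the $\mathsf S$-quadrature bound, the coercivity lemma, and the $\mathsf V$-quadrature bound pointwise at $x=X(t)$, and finally integrates in time. The only cosmetic difference is that the paper divides by $\gamma_h$ at the end rather than before integrating.
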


\begin{proof}
By Theorem~\ref{thm:laplace-entropy-identity},
\[
  \frac{\dd}{\dd t}H_N^{\#}(t)
  =
  -N\E_t[\mathsf S_N^{\mathrm{Lap}}(X(t))].
\]
Integrating from \(0\) to \(T\),
\[
  H_N^{\#}(T)-H_N^{\#}(0)
  =
  -N\int_0^T
  \E_t[\mathsf S_N^{\mathrm{Lap}}(X(t))]\dd t.
\]
Rearranging,
\[
  \int_0^T
  \E_t[\mathsf S_N^{\mathrm{Lap}}(X(t))]\dd t
  =
  \frac{H_N^{\#}(0)-H_N^{\#}(T)}{N}.
\]
Since relative entropy is nonnegative,
\[
  H_N^{\#}(T)\ge0.
\]
Therefore
\begin{equation}
  \frac1T\int_0^T
  \E_t[\mathsf S_N^{\mathrm{Lap}}(X(t))]\dd t
  \le
  \frac{H_N^{\#}(0)}{NT}.
  \label{eq:laplace-S-rate-proof}
\end{equation}

Next, fix a configuration \(x\) visited by the flow. By Assumption~\ref{ass:laplace-quadrature},
\[
  \mathsf S_N^{\mathrm{Lap}}(x)
  \ge
  \mathcal J_h^{\mathrm{Lap}}(x)-\varepsilon_{S,h,N}.
\]
By Lemma~\ref{lem:laplace-coercivity},
\[
  \mathcal J_h^{\mathrm{Lap}}(x)
  \ge
  \gamma_h\mathcal V_h^{\mathrm{Lap}}(x)
  -
  \beta_h\Delta_h^2.
\]
Therefore
\[
  \mathsf S_N^{\mathrm{Lap}}(x)
  \ge
  \gamma_h\mathcal V_h^{\mathrm{Lap}}(x)
  -
  \beta_h\Delta_h^2
  -
  \varepsilon_{S,h,N}.
\]
Again by Assumption~\ref{ass:laplace-quadrature},
\[
  \mathcal V_h^{\mathrm{Lap}}(x)
  \ge
  \mathsf V_N^{\mathrm{Lap}}(x)-\varepsilon_{V,h,N}.
\]
Substituting this lower bound gives
\[
  \mathsf S_N^{\mathrm{Lap}}(x)
  \ge
  \gamma_h
  \{\mathsf V_N^{\mathrm{Lap}}(x)-\varepsilon_{V,h,N}\}
  -
  \beta_h\Delta_h^2
  -
  \varepsilon_{S,h,N}.
\]
Thus
\[
  \gamma_h\mathsf V_N^{\mathrm{Lap}}(x)
  \le
  \mathsf S_N^{\mathrm{Lap}}(x)
  +
  \beta_h\Delta_h^2
  +
  \varepsilon_{S,h,N}
  +
  \gamma_h\varepsilon_{V,h,N}.
\]
Apply this with \(x=X(t)\), take expectations, integrate over \(t\in[0,T]\), and divide by \(T\):
\[
  \gamma_h
  \frac1T\int_0^T
  \E_t[\mathsf V_N^{\mathrm{Lap}}(X(t))]\dd t
  \le
  \frac1T\int_0^T
  \E_t[\mathsf S_N^{\mathrm{Lap}}(X(t))]\dd t
  +
  \beta_h\Delta_h^2
  +
  \varepsilon_{S,h,N}
  +
  \gamma_h\varepsilon_{V,h,N}.
\]
Using \eqref{eq:laplace-S-rate-proof},
\[
  \gamma_h
  \frac1T\int_0^T
  \E_t[\mathsf V_N^{\mathrm{Lap}}(X(t))]\dd t
  \le
  \frac{H_N^{\#}(0)}{NT}
  +
  \beta_h\Delta_h^2
  +
  \varepsilon_{S,h,N}
  +
  \gamma_h\varepsilon_{V,h,N}.
\]
Dividing by \(\gamma_h\) proves \eqref{eq:laplace-main-rate}. If \(H_N^{\#}(0)\le\kappa_0N\) and \(T=N\), then
\[
  \frac{H_N^{\#}(0)}{\gamma_hNT}
  \le
  \frac{\kappa_0N}{\gamma_hN^2}
  =
  \frac{\kappa_0}{\gamma_hN},
\]
which proves \eqref{eq:laplace-TN-rate}. Finally, \eqref{eq:laplace-root-rate} follows from $  \sqrt{a+b+c+d}
  \le
  \sqrt a+\sqrt b+\sqrt c+\sqrt d$ for nonnegative \(a,b,c,d\).
\end{proof}

\begin{remark}[Meaning of the Laplace rate]
The first term in \eqref{eq:laplace-main-rate} is the same entropy-driven optimization term as in the conservative theorem, except that it is divided by the Laplace coercivity constant $\gamma_h$. The terms $\varepsilon_{S,h,N}$ and $\varepsilon_{V,h,N}$ are finite-particle approximation errors: they compare the leave-one-out, center-evaluated quantities to the full-field sharp-smoothed quantities. 

The term
\[
  \frac{\beta_h}{\gamma_h}\Delta_h^2
\]
in Theorem~\ref{thm:laplace-main-rate} is an irreducible residual term for the
original non-conservative drifting method with Laplace kernel. It is not a proof artifact; it measures the fact that the non-conservative Laplace displacement field is not an exact score mismatch. It is also not a
finite-particle fluctuation term and it does not vanish merely by taking
\(N\to\infty\) with \(h\) fixed. Consequently, the theorem proves convergence
to a residual neighborhood whose size is controlled by \(\Delta_h\), not
convergence to zero residual unless an additional alignment condition forces
\(\Delta_h\to0\).

Using the identity
\[
  a_{\alpha,h}(z)=h\{\bar r_{\alpha,h}(z)+h\},
\]
the residual can be written as
\[
  e_x(z)
  =
  h\{\bar r_{\nu,h}(z)-\bar r_{x,h}(z)\}\sigma_{\nu,h}(z).
\]
Therefore, if
\[
  \norm{\sigma_{\nu,h}}_\infty\le G_h
  \qquad\text{and}\qquad
  \sup_z
  \left|
  \bar r_{\nu,h}(z)-\bar r_{x,h}(z)
  \right|
  \le \delta_{r,h},
\]
then
\[
  \Delta_h^2
  \le
  h^2G_h^2\delta_{r,h}^2.
\]
Thus the non-conservative Laplace method has a vanishing residual only in
regimes where the model and data local Laplace-weighted radii align. Without a
separate argument giving \(\delta_{r,h}\to0\), the theorem should be read as a
convergence-to-neighborhood result.
\end{remark}

\begin{remark}[Common origin of the scale-mismatch residual and the
non-conservativity of \(u_{\nu,\mu,h}^{\mathrm{Lap}}\)]
\label{rem:laplace-curl-and-delta}
Both the scale-mismatch residual \(\Delta_h\) appearing in
\eqref{eq:laplace-main-rate} and the failure of the Laplace displacement
field to be a gradient field originate from the same algebraic feature of
the sharp-score representation
\(M_{\alpha,h}(z)=a_{\alpha,h}(z)\sigma_{\alpha,h}(z)\) of
Lemma~\ref{lem:laplace-sharp-score}. The sharp score \(\sigma_{\alpha,h}\)
is curl-free for every \(\alpha\), but the local scale
\(a_{\alpha,h}(z)=R_{\alpha,h}(z)/Q_{\alpha,h}(z)\) depends on both the
evaluation point \(z\) and the measure \(\alpha\). Using the decomposition
\(u_x = a_{x,h}b_x^{\#}+e_x\) of \eqref{eq:laplace-decomposition}, the
two-dimensional curl of \(u_x\) reads, on a smooth open set,
\[
  \mathrm{curl}\,u_x
  =
  \nabla a_{x,h}\times b_x^{\#}
  +
  \nabla\{a_{\nu,h}-a_{x,h}\}\times \sigma_{\nu,h},
\]
where \(\nabla a_{\alpha,h}\times f:=\partial_1 a_{\alpha,h}\,f_2
-\partial_2 a_{\alpha,h}\,f_1\). Both terms are driven by spatial
variation of \(a_{\alpha,h}\) in \(z\), while the residual
\(e_x=(a_{\nu,h}-a_{x,h})\sigma_{\nu,h}\) is driven by the dependence of
\(a_{\alpha,h}\) on \(\alpha\). For Gaussian kernels one has the identity
\(\nabla K_h(z-y)=h^{-2}(y-z)K_h(z-y)\), which forces
\(R_{\alpha,h}=h^2Q_{\alpha,h}\) and hence \(a_{\alpha,h}(z)\equiv h^2\); both
the curl of \(u_{\nu,\mu,h}^{\mathrm{Lap}}\) and the residual \(\Delta_h\)
then vanish. For non-Gaussian radial kernels such as the Laplace kernel
neither identity holds, so generically the field carries non-zero curl and
\(\Delta_h>0\). The two are however not monotone functions of each other:
if the data and model local Laplace scales happen to coincide
(\(a_{\nu,h}\equiv a_{x,h}\)) one has \(e_x\equiv0\) and \(\Delta_h=0\), yet
\(\mathrm{curl}\,u_x\) need not vanish unless \(a_{x,h}\) is also
constant in \(z\). Thus \(\Delta_h\) and the rotational component of
\(u_{\nu,\mu,h}^{\mathrm{Lap}}\) are two distinct but related symptoms of
the same position-and-measure-dependence of the local Laplace scale
\(a_{\alpha,h}\).
\end{remark}

\subsection{Checkable Forms of The Laplace Assumptions}

The scale-alignment residual has a concrete interpretation in terms of local weighted radii. By Lemma~\ref{lem:laplace-sharp-score},
\[
  a_{\alpha,h}(z)=h\{\bar r_{\alpha,h}(z)+h\}.
\]
Therefore
\[
  a_{\nu,h}(z)-a_{x,h}(z)
  =
  h\{\bar r_{\nu,h}(z)-\bar r_{x,h}(z)\},
\]
and the residual is
\[
  e_x(z)
  =
  h\{\bar r_{\nu,h}(z)-\bar r_{x,h}(z)\}\sigma_{\nu,h}(z).
\]

\begin{corollary}[Shell-alignment sufficient condition]
\label{cor:laplace-shell-alignment}
Assume that, along the trajectory on \([0,T]\), there are constants
\[
  0\le r_{-,h}\le r_{+,h}<\infty,
  \qquad
  \delta_{r,h}\ge0,
  \qquad
  G_h<\infty,
\]
such that, for all configurations visited by the flow and all \(z\in\R^d\),
\[
  r_{-,h}\le \bar r_{x,h}(z)\le r_{+,h},
\]
\[
  \abs{\bar r_{\nu,h}(z)-\bar r_{x,h}(z)}
  \le
  \delta_{r,h},
\]
and
\[
  \norm{\sigma_{\nu,h}(z)}\le G_h.
\]
Then Assumption~\ref{ass:laplace-scale-alignment} holds with
\[
  \lambda_h=h(r_{-,h}+h),
  \qquad
  L_h=h(r_{+,h}+h),
\]
and
\[
  \Delta_h^2
  \le
  h^2G_h^2\delta_{r,h}^2.
\]
Consequently, if the hypotheses of Theorem~\ref{thm:laplace-main-rate} also hold, then
\[
  \frac1T\int_0^T
  \E_t[\mathsf V_N^{\mathrm{Lap}}(X(t))]\dd t
  \le
  \frac{H_N^{\#}(0)}{\gamma_hNT}
  +
  \frac{\beta_h}{\gamma_h}h^2G_h^2\delta_{r,h}^2
  +
  \frac{\varepsilon_{S,h,N}}{\gamma_h}
  +
  \varepsilon_{V,h,N}.
\]
\end{corollary}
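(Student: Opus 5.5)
The plan is to verify the two parts of Assumption~\ref{ass:laplace-scale-alignment} directly from the identity $a_{\alpha,h}(z)=h\{\bar r_{\alpha,h}(z)+h\}$ of Lemma~\ref{lem:laplace-sharp-score}. Once they hold, the final display follows by substitution into Theorem~\ref{thm:laplace-main-rate}. No new analytic estimates are needed; the argument is bookkeeping.

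First, the scale bounds \eqref{eq:scale-upper-lower}. I apply Lemma~\ref{lem:laplace-sharp-score} with $\alpha=\mu_x^N$, which gives $a_{x,h}(z)=h\{\bar r_{x,h}(z)+h\}$. The map $r\mapsto h(r+h)$ is increasing, so the hypothesis $r_{-,h}\le\bar r_{x,h}(z)\le r_{+,h}$ yields $\lambda_h:=h(r_{-,h}+h)\le a_{x,h}(z)\le h(r_{+,h}+h)=:L_h$ for every $z$ and every visited configuration. Positivity $\lambda_h\ge h^2>0$ holds because $r_{-,h}\ge0$ and $h>0$. Finiteness of $L_h$ holds because $r_{+,h}<\infty$. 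The ordering $\lambda_h\le L_h$ is inherited from $r_{-,h}\le r_{+,h}$.

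Second, the residual bound \eqref{eq:scale-residual-bound}. Subtracting the two instances of the lemma (for $\alpha=\nu$ and $\alpha=\mu_x^N$) gives $e_x(z)=h\{\bar r_{\nu,h}(z)-\bar r_{x,h}(z)\}\sigma_{\nu,h}(z)$. The radius-alignment and score bounds then give the pointwise estimate $\norm{e_x(z)}^2\le h^2\delta_{r,h}^2G_h^2$. I integrate this against the probability density $\rho^{\#}_{x,h}=R_{x,h}/Z_{\#,h}$, which has total mass one because $\int L_h(z-y)\dd z=Z_{\#,h}$ for each atom $y$. This gives $\int\norm{e_x}^2\rho^{\#}_{x,h}\le h^2G_h^2\delta_{r,h}^2$, so I may take $\Delta_h^2:=h^2G_h^2\delta_{r,h}^2$.

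Finally, I insert these $\lambda_h,L_h,\Delta_h$ into \eqref{eq:laplace-main-rate}, using the remaining hypotheses of Theorem~\ref{thm:laplace-main-rate}. This gives the stated bound, with $\gamma_h,\beta_h$ as in \eqref{eq:gamma-beta-laplace}. The only point needing care is the interpretation of $\Delta_h$ in the statement. If the assumption is read as holding with some possibly smaller $\Delta_h$, monotonicity of the right-hand side in $\Delta_h^2$ (its coefficient $\beta_h/\gamma_h$ is positive) still yields the displayed inequality. I expect no real obstacle. The step most deserving of attention is confirming that $\rho^{\#}_{x,h}$ is normalized, so that the sup bound transfers to the weighted $L^2$ bound without an extra constant.
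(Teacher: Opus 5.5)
Your proposal is correct and follows essentially the same route as the paper's proof. The identity $a_{x,h}=h\{\bar r_{x,h}+h\}$ gives the two-sided scale bounds, and the pointwise estimate $\norm{e_x(z)}\le h\delta_{r,h}G_h$, integrated against the probability density $\rho^{\#}_{x,h}$, gives the bound on $\Delta_h^2$; substituting into Theorem~\ref{thm:laplace-main-rate} then finishes, and your extra checks (positivity $\lambda_h\ge h^2$ and normalization of $\rho^{\#}_{x,h}$) are details the paper leaves implicit.
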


\begin{proof}
Since
\[
  a_{x,h}(z)=h\{\bar r_{x,h}(z)+h\},
\]
the bound \(r_{-,h}\le\bar r_{x,h}(z)\le r_{+,h}\) gives
\[
  h(r_{-,h}+h)
  \le
  a_{x,h}(z)
  \le
  h(r_{+,h}+h).
\]
Thus \eqref{eq:scale-upper-lower} holds with the stated \(\lambda_h\) and \(L_h\).

Next,
\[
  e_x(z)
  =
  h\{\bar r_{\nu,h}(z)-\bar r_{x,h}(z)\}\sigma_{\nu,h}(z).
\]
Taking norms,
\[
  \norm{e_x(z)}
  \le
  h\abs{\bar r_{\nu,h}(z)-\bar r_{x,h}(z)}
  \norm{\sigma_{\nu,h}(z)}.
\]
Using the assumed bounds,
\[
  \norm{e_x(z)}
  \le
  h\delta_{r,h}G_h.
\]
Squaring,
\[
  \norm{e_x(z)}^2
  \le
  h^2\delta_{r,h}^2G_h^2.
\]
Since \(\rho_{x,h}^{\#}\) is a probability density,
\[
  \int\norm{e_x(z)}^2\rho_{x,h}^{\#}(z)\dd z
  \le
  h^2\delta_{r,h}^2G_h^2.
\]
Thus \(\Delta_h^2\le h^2G_h^2\delta_{r,h}^2\). Substitution into Theorem~\ref{thm:laplace-main-rate} proves the final display.
\end{proof}

We next give one sufficient route to Assumption~\ref{ass:laplace-quadrature}. Let
\[
  \bar L_h(u):=\frac{L_h(u)}{Z_{\#,h}},
  \qquad
  m_{2,\#,h}:=\int_{\R^d}\norm{u}^2\bar L_h(u)\dd u.
\]
Because \(\bar L_h\) is a bandwidth-\(h\) radial kernel, \(m_{2,\#,h}=h^2m_{2,\#,1}\) for a constant \(m_{2,\#,1}\) depending only on dimension.

\begin{lemma}[A sufficient quadrature bound]
\label{lem:laplace-quadrature-sufficient}
Fix a configuration \(x\). Define
\[
  \phi_x(z):=\mathcal A_{\#}u_x(z),
  \qquad
  \psi_x(z):=\norm{u_x(z)}^2.
\]
Assume that
\[
  \sup_{z\in\R^d}\norm{D^2\phi_x(z)}_{\mathrm{op}}\le B_{\phi,h},
  \qquad
  \sup_{z\in\R^d}\norm{D^2\psi_x(z)}_{\mathrm{op}}\le B_{\psi,h}.
\]
Assume also that the leave-one-out point errors satisfy
\[
  \frac1N\sum_{i=1}^N
  \abs{
  \mathcal A_{\#}u_{x,-i}(x_i)
  -
  \mathcal A_{\#}u_x(x_i)
  }
  \le
  \ell_{S,h,N},
\]
and
\[
  \frac1N\sum_{i=1}^N
  \abs{
  \norm{u_{x,-i}(x_i)}^2
  -
  \norm{u_x(x_i)}^2
  }
  \le
  \ell_{V,h,N}.
\]
Then
\[
  \abs{\mathsf S_N^{\mathrm{Lap}}(x)-\mathcal J_h^{\mathrm{Lap}}(x)}
  \le
  \frac12B_{\phi,h}m_{2,\#,h}
  +
  \ell_{S,h,N},
\]
and
\[
  \abs{\mathsf V_N^{\mathrm{Lap}}(x)-\mathcal V_h^{\mathrm{Lap}}(x)}
  \le
  \frac12B_{\psi,h}m_{2,\#,h}
  +
  \ell_{V,h,N}.
\]
\end{lemma}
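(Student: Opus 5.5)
The plan is to split each difference into a leave-one-out piece and a quadrature piece by inserting the full-field center average. For the Stein quantity, I write
\[
  \mathsf S_N^{\mathrm{Lap}}(x)-\mathcal J_h^{\mathrm{Lap}}(x)
  =
  \frac1N\sum_{i=1}^N\bigl\{\mathcal A_{\#}u_{x,-i}(x_i)-\phi_x(x_i)\bigr\}
  +
  \Bigl\{\frac1N\sum_{i=1}^N\phi_x(x_i)-\int\phi_x\rho_{x,h}^{\#}\Bigr\}.
\]
By the triangle inequality, the first bracket is bounded in absolute value by \(\ell_{S,h,N}\), directly from the hypothesis. The \(\mathsf V\) case is handled by the same decomposition, with \(\psi_x\) in place of \(\phi_x\) and \(\ell_{V,h,N}\) in place of \(\ell_{S,h,N}\).

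For the quadrature piece, I use the representation of the full sharp-smoothed empirical density as a mixture of normalized sharp kernels:
\[
  \rho_{x,h}^{\#}(z)=\frac{R_{x,h}(z)}{Z_{\#,h}}=\frac1N\sum_{i=1}^N\bar L_h(z-x_i).
\]
This gives \(\int\phi_x\rho_{x,h}^{\#}=\frac1N\sum_i\int\bar L_h(u)\phi_x(x_i+u)\dd u\). From here I repeat the argument of Lemma~\ref{lem:quadrature} with \(K_h\) replaced by \(\bar L_h\). The needed properties of \(\bar L_h\) all hold. It is nonnegative, since \(L_h=h(\norm u+h)K_h>0\). It integrates to one by construction. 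It is even (indeed radial), so \(\int u\bar L_h(u)\dd u=0\). Its second moment \(m_{2,\#,h}\) is finite because of the exponential tail.

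I then expand \(\phi_x(x_i+u)\) by Taylor's formula with integral remainder. The zeroth-order term reproduces \(\phi_x(x_i)\), and the first-order term integrates to zero by evenness. The remainder is at most
\[
  \int_0^1(1-r)\dd r\;B_{\phi,h}\norm u^2=\tfrac12B_{\phi,h}\norm u^2.
\]
Integrating against \(\bar L_h\) and averaging over \(i\) gives \(\tfrac12B_{\phi,h}m_{2,\#,h}\). Combining this with the leave-one-out bound yields the first claim, and the identical argument with \(\psi_x\) yields the second.

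The main obstacle is regularity. Taylor's formula with a Hessian bound requires \(\phi_x\) and \(\psi_x\) to be \(C^2\), or at least \(C^{1,1}\) along segments. The exact Laplace field is not smooth at data atoms or at particle locations. I would read the stated supremum Hessian bounds as implicitly assuming this regularity, consistent with Assumption~\ref{ass:laplace-regularity}. Alternatively, I would work with the smoothed Laplace kernel and pass to the limit.

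A second point to check is that \(\mathcal J_h^{\mathrm{Lap}}\) and \(\mathcal V_h^{\mathrm{Lap}}\) are defined against the full sharp density \(\rho_{x,h}^{\#}\) and not a leave-one-out version. This is what makes the mixture identity exact, with no extra \(1/N\) reweighting error.
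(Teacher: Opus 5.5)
Your proposal is correct and follows the paper's proof essentially step for step. It uses the mixture identity $\rho_{x,h}^{\#}=\frac1N\sum_{i}\bar L_h(\cdot-x_i)$, a Taylor expansion about each $x_i$ whose first-order term vanishes by evenness of $\bar L_h$ and whose remainder gives $\tfrac12B_{\phi,h}m_{2,\#,h}$, and the triangle inequality with the leave-one-out point errors; the paper only presents the two pieces in the opposite order. Your regularity caveat is apt and is something the paper's proof leaves implicit: for the exact Laplace kernel, $u_x$ generically fails to be twice differentiable at the particle locations $x_i$ themselves, so the Hessian hypotheses effectively presuppose a smoothed kernel.
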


\begin{proof}
We prove the first inequality. The second is identical with \(\psi_x\) in place of \(\phi_x\).

Since
\[
  \rho_{x,h}^{\#}(z)
  =
  \frac1N\sum_{i=1}^N \bar L_h(z-x_i),
\]
we have
\[
  \mathcal J_h^{\mathrm{Lap}}(x)
  =
  \int \phi_x(z)\rho_{x,h}^{\#}(z)\dd z
  =
  \frac1N\sum_{i=1}^N
  \int \phi_x(z)\bar L_h(z-x_i)\dd z.
\]
Also,
\[
  \frac1N\sum_{i=1}^N\phi_x(x_i)
  =
  \frac1N\sum_{i=1}^N\mathcal A_{\#}u_x(x_i).
\]
Therefore
\[
  \left|
  \frac1N\sum_{i=1}^N\phi_x(x_i)
  -
  \mathcal J_h^{\mathrm{Lap}}(x)
  \right|
  \le
  \frac1N\sum_{i=1}^N
  \left|
  \phi_x(x_i)
  -
  \int \phi_x(z)\bar L_h(z-x_i)\dd z
  \right|.
\]
Set \(u=z-x_i\). Then
\[
  \int \phi_x(z)\bar L_h(z-x_i)\dd z
  =
  \int \phi_x(x_i+u)\bar L_h(u)\dd u.
\]
Taylor's formula gives
\[
  \phi_x(x_i+u)
  =
  \phi_x(x_i)
  +
  \nabla\phi_x(x_i)\cdot u
  +
  \int_0^1(1-r)u^\top D^2\phi_x(x_i+ru)u\dd r.
\]
Integrating against \(\bar L_h(u)\dd u\), the constant term gives \(\phi_x(x_i)\). The first-order term vanishes because \(\bar L_h\) is even, hence
\[
  \int u\bar L_h(u)\dd u=0.
\]
Thus
\[
  \left|
  \phi_x(x_i)
  -
  \int \phi_x(x_i+u)\bar L_h(u)\dd u
  \right|
  \le
  \int \bar L_h(u)
  \int_0^1(1-r)
  \norm{D^2\phi_x(x_i+ru)}_{\mathrm{op}}
  \norm{u}^2\dd r\dd u.
\]
Using the Hessian bound,
\[
  \left|
  \phi_x(x_i)
  -
  \int \phi_x(x_i+u)\bar L_h(u)\dd u
  \right|
  \le
  B_{\phi,h}
  \int_0^1(1-r)\dd r
  \int \norm{u}^2\bar L_h(u)\dd u.
\]
Since \(\int_0^1(1-r)\dd r=1/2\),
\[
  \left|
  \phi_x(x_i)
  -
  \int \phi_x(x_i+u)\bar L_h(u)\dd u
  \right|
  \le
  \frac12B_{\phi,h}m_{2,\#,h}.
\]
Averaging over \(i\) gives
\[
  \left|
  \frac1N\sum_{i=1}^N\mathcal A_{\#}u_x(x_i)
  -
  \mathcal J_h^{\mathrm{Lap}}(x)
  \right|
  \le
  \frac12B_{\phi,h}m_{2,\#,h}.
\]
Finally,
\[
  \mathsf S_N^{\mathrm{Lap}}(x)
  =
  \frac1N\sum_{i=1}^N\mathcal A_{\#}u_{x,-i}(x_i).
\]
By the assumed leave-one-out point-error bound,
\[
  \left|
  \mathsf S_N^{\mathrm{Lap}}(x)
  -
  \frac1N\sum_{i=1}^N\mathcal A_{\#}u_x(x_i)
  \right|
  \le
  \ell_{S,h,N}.
\]
The triangle inequality gives
\[
  \abs{\mathsf S_N^{\mathrm{Lap}}(x)-\mathcal J_h^{\mathrm{Lap}}(x)}
  \le
  \frac12B_{\phi,h}m_{2,\#,h}
  +
  \ell_{S,h,N}.
\]
This proves the first bound.
\end{proof}

Finally, the leave-one-out denominators are controlled by the same local-occupancy mechanism used for conservative drifting.

\begin{lemma}[Leave-one-out Laplace denominator control]
\label{lem:laplace-occupancy}
Fix \(r_0>0\). For the Laplace kernel,
\[
  K_h(u)\ge c_dh^{-d}e^{-r_0}
  \qquad\text{whenever }\norm{u}\le r_0h.
\]
For a configuration \(x\), define
\[
  n_i^{(-)}(x)
  :=
  \#\{j\ne i:\norm{x_j-x_i}\le r_0h\}.
\]
If
\[
  n_i^{(-)}(x)\ge \alpha (N-1)h^d
  \qquad\text{for every }i,
\]
then
\[
  Q_{\mu_{x,-i}^{N},h}(x_i)
  \ge
  c_de^{-r_0}\alpha
  \qquad\text{for every }i.
\]
\end{lemma}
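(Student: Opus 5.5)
The plan is to mirror the proof of Proposition~\ref{prop:deterministic-occupancy}, with the leave-one-out normalization $1/(N-1)$ replacing $1/N$ and the self term excluded. There are two steps: a pointwise kernel lower bound, and then counting the neighbors that contribute to the leave-one-out KDE.

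\emph{Step 1 (kernel lower bound).} The function $t\mapsto e^{-t}$ is decreasing. So if $\norm{u}\le r_0h$, then $\norm{u}/h\le r_0$, and therefore
\[
  K_h(u)=c_dh^{-d}e^{-\norm{u}/h}\ge c_dh^{-d}e^{-r_0}.
\]
This is the Laplace analogue of Assumption~\ref{ass:kernel-lower}, with $r_K=r_0$ and $\kappa_K=c_de^{-r_0}$. No smoothness of $K$ is needed, which matters because the Laplace kernel is not differentiable at the origin.

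\emph{Step 2 (counting).} Fix $i$. By definition of $\mu_{x,-i}^N$,
\[
  Q_{\mu_{x,-i}^N,h}(x_i)=\frac1{N-1}\sum_{j\ne i}K_h(x_i-x_j).
\]
Every summand is positive. I drop all indices $j\ne i$ with $\norm{x_j-x_i}>r_0h$. For the remaining $n_i^{(-)}(x)$ indices, I apply Step~1. This gives
\[
  Q_{\mu_{x,-i}^N,h}(x_i)\ge \frac{n_i^{(-)}(x)}{N-1}\,c_dh^{-d}e^{-r_0}.
\]
Inserting the hypothesis $n_i^{(-)}(x)\ge\alpha(N-1)h^d$ cancels both the factor $N-1$ and the factor $h^d$. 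The result is $c_de^{-r_0}\alpha$, uniformly in $i$.

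There is no real obstacle. The only point needing care is consistency: the occupancy hypothesis must be normalized by $N-1$ and must exclude $j=i$, so that it matches the leave-one-out measure. The statement already arranges this. One may also remark, as in Proposition~\ref{prop:iid-occupancy}, that the i.i.d.\ Chernoff argument gives the hypothesis with high probability, with $\alpha=p_0/2$ applied directly to the binomial count $Z_i$. That remark is not required for the lemma.
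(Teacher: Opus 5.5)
Your proposal is correct and follows essentially the same route as the paper's proof. You write the leave-one-out KDE as an average over $j\ne i$, drop the indices outside the $r_0h$-ball, apply the pointwise lower bound $K_h(u)\ge c_dh^{-d}e^{-r_0}$, and substitute $n_i^{(-)}(x)\ge\alpha(N-1)h^d$. Your optional Chernoff remark with $\alpha=p_0/2$ is also consistent, provided the local-mass condition is stated at radius $r_0h$.
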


\begin{proof}
By definition,
\[
  Q_{\mu_{x,-i}^{N},h}(x_i)
  =
  \frac1{N-1}\sum_{j\ne i}K_h(x_i-x_j).
\]
For every \(j\ne i\) satisfying \(\norm{x_j-x_i}\le r_0h\),
\[
  K_h(x_i-x_j)
  =
  c_dh^{-d}\exp\!\left(-\frac{\norm{x_i-x_j}}{h}\right)
  \ge
  c_dh^{-d}e^{-r_0}.
\]
Therefore
\[
  Q_{\mu_{x,-i}^{N},h}(x_i)
  \ge
  \frac1{N-1}n_i^{(-)}(x)c_dh^{-d}e^{-r_0}.
\]
Using \(n_i^{(-)}(x)\ge \alpha(N-1)h^d\),
\[
  Q_{\mu_{x,-i}^{N},h}(x_i)
  \ge
  \frac1{N-1}\{\alpha(N-1)h^d\}c_dh^{-d}e^{-r_0}
  =
  c_de^{-r_0}\alpha.
\]
This proves the claim.
\end{proof}

\subsection{Bandwidth and Step-size Choices for Non-conservative Drifting with Laplace Kernel}
\label{subsec:laplace-bandwidth-eta}

The leave-one-out Laplace theorem has a different bandwidth tradeoff from the conservative theorem. Because the self-masked field removes the self-interaction term from the entropy identity, there is no analogue of the conservative self-interaction term $1/(Nh^{d+2})$ in \eqref{eq:laplace-main-rate}. Instead, the bandwidth appears through the scale-mismatch residual and the quadrature and leave-one-out errors.

A useful simplified form is obtained from Corollary~\ref{cor:laplace-shell-alignment} and Lemma~\ref{lem:laplace-quadrature-sufficient}. Suppose that along the trajectory
\[
  \frac{\beta_h}{\gamma_h}\Delta_h^2
  \le
  C_{\Delta,h}h^2G_h^2\delta_{r,h}^2,
\]
and
\[
  \frac{\varepsilon_{S,h,N}}{\gamma_h}+\varepsilon_{V,h,N}
  \le
  C_{Q,h}h^2+\frac{C_{\mathrm{loo},h}}{N}.
\]
Then, for $T=N$ and $H_N^{\#}(0)\le \kappa_0N$, Theorem~\ref{thm:laplace-main-rate} gives
\begin{equation}
  \frac1N\int_0^N
  \E_t[\mathsf V_N^{\mathrm{Lap}}(X(t))]\dd t
  \le
  \frac{\kappa_0}{\gamma_hN}
  +
  C_{\Delta,h}h^2G_h^2\delta_{r,h}^2
  +
  C_{Q,h}h^2
  +
  \frac{C_{\mathrm{loo},h}}{N}.
  \label{eq:laplace-simplified-bandwidth}
\end{equation}
If $\gamma_h$ is bounded below and $C_{\Delta,h},C_{Q,h},C_{\mathrm{loo},h}$ remain bounded, then the deterministic part of the rate improves as $h$ decreases, provided $hG_h\delta_{r,h}$ also decreases or remains controlled. The bandwidth should therefore be chosen as small as permitted by denominator control and regularity. A typical local-occupancy requirement is
\[
  Nh^d\gtrsim \log N,
\]
which suggests
\begin{equation}
  h_N\asymp \left(\frac{\log N}{N}\right)^{1/d}
  \label{eq:laplace-occupancy-bandwidth}
\end{equation}
when the constants in \eqref{eq:laplace-simplified-bandwidth} are stable at that scale. In the benign case where $G_h\delta_{r,h}=O(1)$, this gives
\[
  \frac1N\int_0^N
  \E_t[\mathsf V_N^{\mathrm{Lap}}(X(t))]\dd t
  \lesssim
  \frac1N+
  \left(\frac{\log N}{N}\right)^{2/d},
\]
up to the displayed constants and residual assumptions. The corresponding root residual velocity is
\[
  O(N^{-1/2})+
  O\!\left((\log N/N)^{1/d}\right).
\]
If the shell mismatch is worse, the term $hG_h\delta_{r,h}$ should be kept explicitly; if it is better, the rate improves accordingly.

The one-step drift size $\eta$ again enters only when the continuous-time bound is converted into a statement about an explicit correction. For the non-conservative Laplace field,
\[
  \widetilde x_i=x_i+\eta u_{x,-i}(x_i)
\]
satisfies
\[
  W_2^2(\mu_x^N,\widetilde\mu_x^N)
  \le
  \eta^2\mathsf V_N^{\mathrm{Lap}}(x).
\]
Thus a typical residual correction at the optimized bandwidth \eqref{eq:laplace-occupancy-bandwidth} is of order
\[
  \eta\left\{N^{-1/2}+(\log N/N)^{1/d}+h_NG_{h_N}\delta_{r,h_N}\right\}
\]
in root mean square. If the explicit displacement map must be a stable Euler step and the leave-one-out Laplace field has Lipschitz constant $L_u(h)$, a sufficient condition is $\eta L_u(h)\le c<1$. Under denominator control and bounded local-radius derivatives, the non-conservative displacement field often has $L_u(h)=O(1)$, in which case a constant-order $\eta$ is admissible. If $L_u(h)$ grows with decreasing $h$, the admissible step size should be reduced according to $\eta\asymp 1/L_u(h)$.

\subsection{Comparison Between the Conservative and Non-conservative Results}
\label{subsec:laplace-vs-gradient-kde}

The conservative velocity field studied earlier is
\[
  b_x(z)=s_\rho(z)-s_x(z),
\]
where $s_\rho=\nabla\log\rho$ is the score of the smoothed target and $s_x=\nabla\log q_x$ is the score of the model KDE. Its squared center velocity is
\[
  \mathsf V_N^{\mathrm{cons}}(x)
  :=
  \mathsf V_N(x)
  =
  \frac1N\sum_{i=1}^N\norm{b_x(x_i)}^2.
\]
The finite-particle rate for this conservative method has the form
\[
  \frac1N\int_0^N
  \E_t[\mathsf V_N^{\mathrm{cons}}(X(t))]\dd t
  \le
  \frac{\kappa_0}{N}
  +
  \frac{C_{\mathrm{self}}}{Nh^{d+2}}
  +
  C_{\mathrm{quad}}(h)h^2,
\]
where the second term is the reciprocal-KDE self-interaction correction and
\[
  C_{\mathrm{quad}}(h)
  =
  \frac12\{B_{A,N}(h)+B_{V,N}(h)\}m_2(K)
\]
is the point-evaluation quadrature constant. Hence the conservative method has a clean score/Fisher structure, but its optimized bandwidth rate depends critically on the $h$-dependence of the quadrature constants. If $C_{\mathrm{quad}}(h)=O(1)$, the root residual-velocity rate is $N^{-1/(d+4)}$. If $C_{\mathrm{quad}}(h)=O(h^{-\beta})$ with $0\le\beta<2$, the optimized root rate is instead
\[
  N^{-(2-\beta)/(2(d+4-\beta))}.
\]
If $\beta\ge2$, the present quadrature argument does not give a vanishing rate by shrinking $h$.

For the non-conservative drifting method with Laplace kernel, the corresponding rate is
\[
  \frac1N\int_0^N
  \E_t[\mathsf V_N^{\mathrm{Lap}}(X(t))]\dd t
  \le
  \frac{\kappa_0}{\gamma_hN}
  +
  \frac{\beta_h}{\gamma_h}\Delta_h^2
  +
  \frac{\varepsilon_{S,h,N}}{\gamma_h}
  +
  \varepsilon_{V,h,N}.
\]
The first term is the same entropy-driven finite-particle optimization term, up to the Laplace coercivity constant $\gamma_h$. The main new term is
\[
  \frac{\beta_h}{\gamma_h}\Delta_h^2,
\]
which measures the failure of the non-conservative Laplace mean-shift field to be an exact score-difference field. Explicitly,
\[
  \Delta_h^2
  \ge
  \int
  \norm{
  \{a_{\nu,h}(z)-a_{x,h}(z)\}\sigma_{\nu,h}(z)
  }^2
  \rho_{x,h}^{\#}(z)\dd z.
\]
Equivalently, since
\[
  a_{\alpha,h}(z)=h\{\bar r_{\alpha,h}(z)+h\},
\]
this residual is controlled by the mismatch between the data and model local Laplace-weighted radii:
\[
  a_{\nu,h}(z)-a_{x,h}(z)
  =
  h\{\bar r_{\nu,h}(z)-\bar r_{x,h}(z)\}.
\]

The two theories therefore share the same entropy backbone but differ in their coercivity mechanisms. For the conservative method, the drift is already a score mismatch, so the Stein identity directly gives a Fisher-type quantity after quadrature. For the non-conservative Laplace method, the drift decomposes as
\[
  u_x(z)=a_{x,h}(z)b_x^{\#}(z)+e_x(z),
\]
where $a_{x,h}$ is a positive scalar preconditioner and $e_x$ is the Laplace scale-mismatch residual. Thus the non-conservative method has a comparable rate only when the local scale factors of the data and model are aligned well enough that $\Delta_h$ is small.

There is also a difference in the finite-particle correction. The conservative theorem above uses the center-evaluation field with the full KDE, so differentiating the $i$th particle's own KDE contribution produces a reciprocal-KDE self-interaction correction. In the non-conservative Laplace result we use the self-masked, leave-one-out field $u_{x,-i}$. This removes the self-interaction term from the entropy identity. The price is the leave-one-out approximation error contained in $\varepsilon_{S,h,N}$ and $\varepsilon_{V,h,N}$, which is controlled by denominator lower bounds and local occupancy conditions such as Lemma~\ref{lem:laplace-occupancy}.

Finally, the target density used by the two analyses is different. The conservative theorem uses the ordinary smoothed target $\rho=K_h\ast\nu$. The non-conservative Laplace theorem uses the sharp-smoothed target
\[
  \rho_{\nu,h}^{\#}(z)
  =
  \frac{1}{Z_{\#,h}}
  \int L_h(z-y)\nu(\dd y),
  \qquad
  L_h(u)=h(\norm{u}+h)K_h(u).
\]
This difference is not an artifact of the proof. It reflects the identity
\[
  \nabla_zL_h(z-y)=(y-z)K_h(z-y),
\]
which is the structural reason the non-conservative Laplace displacement field can be related to a score mismatch at all. In the Gaussian case, the non-conservative displacement field is simply a constant multiple of the conservative field; in the Laplace case, it is instead a variable-scale preconditioned sharp-score field plus the residual $e_x$.

\bibliography{ref}

@book{penrose2003random,
  title={Random Geometric Graphs},
  author={Penrose, Mathew},
  publisher={Oxford University Press},
  year={2003}
}

@article{cao2026gradient,
  title={{Gradient flow drifting: Generative modeling via Wasserstein gradient flows of KDE-approximated divergences}},
  author={Cao, Jiarui and Wei, Zixuan and Liu, Yuxin},
  journal={arXiv preprint arXiv:2603.10592},
  year={2026}
}

@inproceedings{
balasubramanian2024improved,
title={{Improved Finite-Particle Convergence Rates for Stein Variational Gradient Descent}},
author={Banerjee, Sayan  and Balasubramanian, Krishnakumar  and Ghosal, Promit },
booktitle={The Thirteenth International Conference on Learning Representations},
year={2025},
url={https://openreview.net/forum?id=sbG8qhMjkZ}
}

@article{lai2026unified,
  title={A unified view of drifting and score-based models},
  author={Lai, Chieh-Hsin and Nguyen, Bac and Murata, Naoki and Takida, Yuhta and Uesaka, Toshimitsu and Mitsufuji, Yuki and Ermon, Stefano and Tao, Molei},
  journal={arXiv preprint arXiv:2603.07514},
  year={2026}
}

@article{dumont2026learning,
  title={{Learning Monge maps with constrained drifting models}},
  author={Dumont, Th{\'e}o and Lacombe, Th{\'e}o and Vialard, Fran{\c c}ois-Xavier},
  journal={arXiv preprint arXiv:2603.25182},
  year={2026}
}

@article{he2026sinkhorn,
  title={{Sinkhorn-Drifting Generative Models}},
  author={He, Ping and Khangaonkar, Om and Pirsiavash, Hamed and Bai, Yikun and Kolouri, Soheil},
  journal={arXiv preprint arXiv:2603.12366},
  year={2026}
}

@article{zhang2026lookahead,
  title={Lookahead Drifting Model},
  author={Zhang, Guoqiang and Niwa, Kenta and Kleijn, W. Bastiaan},
  journal={arXiv preprint arXiv:2605.04060},
  year={2026}
}

@article{li2026longshort,
  title={{A Long-Short Flow-Map Perspective for Drifting Models}},
  author={Li, Zhiqi and Zhu, Bo},
  journal={arXiv preprint arXiv:2602.20463},
  year={2026}
}

@inproceedings{ho2020denoising,
  title={{Denoising Diffusion Probabilistic Models}},
  author={Ho, Jonathan and Jain, Ajay and Abbeel, Pieter},
  booktitle={Advances in Neural Information Processing Systems},
  volume={33},
  pages={6840--6851},
  year={2020}
}

@inproceedings{
song2021scorebased,
title={{Score-Based Generative Modeling through Stochastic Differential Equations}},
author={Yang Song and Jascha Sohl-Dickstein and Diederik P Kingma and Abhishek Kumar and Stefano Ermon and Ben Poole},
booktitle={International Conference on Learning Representations},
year={2021},
url={https://openreview.net/forum?id=PxTIG12RRHS}
}

@inproceedings{
lipman2023flow,
title={{Flow Matching for Generative Modeling}},
author={Yaron Lipman and Ricky T. Q. Chen and Heli Ben-Hamu and Maximilian Nickel and Matthew Le},
booktitle={The Eleventh International Conference on Learning Representations },
year={2023},
url={https://openreview.net/forum?id=PqvMRDCJT9t}
}

@inproceedings{
liu2022flow,
title={{Flow Straight and Fast: Learning to Generate and Transfer Data with Rectified Flow}},
author={Xingchao Liu and Chengyue Gong and Qiang Liu},
booktitle={The Eleventh International Conference on Learning Representations },
year={2023},
url={https://openreview.net/forum?id=XVjTT1nw5z}
}

@article{
boffi2024flowmap,
title={{Flow map matching with stochastic interpolants: A mathematical framework for consistency models}},
author={Nicholas Matthew Boffi and Michael Samuel Albergo and Eric Vanden-Eijnden},
journal={Transactions on Machine Learning Research},
issn={2835-8856},
year={2025},
url={https://openreview.net/forum?id=cqDH0e6ak2},
note={}
}

@inproceedings{
albergo2023building,
title={{Building Normalizing Flows with Stochastic Interpolants}},
author={Michael Samuel Albergo and Eric Vanden-Eijnden},
booktitle={The Eleventh International Conference on Learning Representations },
year={2023},
url={https://openreview.net/forum?id=li7qeBbCR1t}
}

@inproceedings{song2023consistency,
  title={Consistency Models},
  author={Song, Yang and Dhariwal, Prafulla and Chen, Mark and Sutskever, Ilya},
  booktitle={International Conference on Machine Learning},
  year={2023}
}

@inproceedings{
salimans2022progressive,
title={Progressive Distillation for Fast Sampling of Diffusion Models},
author={Tim Salimans and Jonathan Ho},
booktitle={International Conference on Learning Representations},
year={2022},
url={https://openreview.net/forum?id=TIdIXIpzhoI}
}

@article{gorham2015measuring,
  title={{Measuring sample quality with Stein's method}},
  author={Gorham, Jackson and Mackey, Lester},
  journal={Advances in neural information processing systems},
  volume={28},
  year={2015}
}

@article{barbour1988stein,
  title={{S}tein's method and {P}oisson process convergence},
  author={Barbour, Andrew D},
  journal={Journal of Applied Probability},
  volume={25},
  number={A},
  pages={175--184},
  year={1988},
  publisher={Cambridge University Press}
}

@article{he2026finite,
  title={{Finite-Particle Rates for Regularized Stein Variational Gradient Descent}},
  author={He, Ye and Balasubramanian, Krishnakumar and Banerjee, Sayan and Ghosal, Promit},
  journal={arXiv preprint arXiv:2602.05172},
  year={2026}
}

@article{shi2023finite,
  title={{A finite-particle convergence rate for Stein variational gradient descent}},
  author={Shi, Jiaxin and Mackey, Lester},
  journal={Advances in Neural Information Processing Systems},
  volume={36},
  pages={26831--26844},
  year={2023}
}

@inproceedings{feng2017amortized,
  title={{Learning to draw samples with amortized Stein variational gradient descent}},
  author={Feng, Yihao and Wang, Dilin and Liu, Qiang},
  booktitle={The Conference on Uncertainty in Artificial Intelligence (UAI)},
  year={2017}
}

@article{liu2017stein,
  title={{Stein variational gradient descent as gradient flow}},
  author={Liu, Qiang},
  journal={Advances in neural information processing systems},
  volume={30},
  year={2017}
}

@article{liu2016stein,
  title={{Stein variational gradient descent: A general purpose bayesian inference algorithm}},
  author={Liu, Qiang and Wang, Dilin},
  journal={Advances in neural information processing systems},
  volume={29},
  year={2016}
}

@article{gretton2026wasserstein,
  title={{On the Wasserstein Gradient Flow Interpretation of Drifting Models}},
  author={Gretton, Arthur and Wenliang, Li Kevin and Galashov, Alexandre and Thornton, James and De Bortoli, Valentin and Doucet, Arnaud},
  journal={arXiv preprint arXiv:2605.05118},
  year={2026}
}

@article{franz2026drifting,
  title={Drifting fields are not conservative},
  author={Franz, Leonard and Hoffmann, Sebastian and Martius, Georg},
  journal={arXiv preprint arXiv:2604.06333},
  year={2026}
}

@article{estebancasadevall2026kernel,
  title={{Kernel-gradient drifting models}},
  author={Esteban-Casadevall, Maria and Carrasco-Pollo, Jorge and Welling, Max and van de Meent, Jan-Willem and Bekkers, Erik J. and Eijkelboom, Floor},
  journal={arXiv preprint arXiv:2605.10727},
  year={2026}
}

@article{deng2026drifting,
  title={{Generative modeling via drifting}},
  author={Deng, Mingyang and Li, He and Li, Tianhong and Du, Yilun and He, Kaiming},
  journal={arXiv preprint arXiv:2602.04770},
  year={2026}
}

@article{weber2023the,
title={{The Score-Difference Flow for Implicit Generative Modeling}},
author={Romann M. Weber},
journal={Transactions on Machine Learning Research},
issn={2835-8856},
year={2023},
url={https://openreview.net/forum?id=dpGSNLUCzu},
note={}
}

@article{turan2026generative,
  title={{Generative drifting is secretly score matching: A spectral and variational perspective}},
  author={Turan, Erkan and Ovsjanikov, Maks},
  journal={arXiv preprint arXiv:2603.09936},
  year={2026}
}

@article{lee2026identifiability,
  title={{Identifiability and Stability of Generative Drifting with Companion-Elliptic Kernel Families}},
  author={Lee, Hak Geun and Chun, Hyonho},
  journal={arXiv preprint arXiv:2604.24196},
  year={2026}
}
\bibliographystyle{plainnat}

\appendix
\section{Stop-gradient Training and The Particle ODE}
\label{app:stop-gradient-ode}

We briefly explain how the continuous-time particle dynamics arise from the practical stop-gradient training rule. Let \(g_\theta:\mathcal Z\to\R^d\) be the generator and let
\[
  x_i^k:=g_{\theta_k}(\xi_i),
  \qquad i=1,\ldots,N,
\]
be the generated batch at training step \(k\), for fixed latent variables
\(\xi_1,\ldots,\xi_N\). Let \(v_{x^k}\) denote the drift field computed from the current generated batch
\[
  x^k=(x_1^k,\ldots,x_N^k)
\]
and the data batch. In the original displacement-based drifting method~\citep{deng2026drifting}, \(v_{x^k}\) is the non-conservative field \(u_{\nu,\mu_{x^k},h}^{\mathrm{disp}}\); in the conservative method studied in this paper, it is the KDE-score field \(b_{x^k}\). The practical target for the \(i\)-th generated sample is
\[
  y_i^k
  :=
  \operatorname{sg}\{x_i^k+\eta v_{x^k}(x_i^k)\}.
\]
Here \(\operatorname{sg}\) denotes the stop-gradient operator. For any quantity
\(a(\theta)\) appearing in the computational graph,
\[
  \operatorname{sg}\{a(\theta)\}=a(\theta)
  \quad\text{in the forward pass,}
  \qquad
  \nabla_\theta \operatorname{sg}\{a(\theta)\}=0
  \quad\text{in the backward pass.}
\]
Thus \(\operatorname{sg}\) preserves the numerical value of its argument but
removes all derivatives through that argument. In particular, the target $  y_i=\operatorname{sg}\{x_i+\eta v_x(x_i)\}$ is evaluated as \(x_i+\eta v_x(x_i)\), but is treated as a constant when
differentiating the regression loss with respect to the generator parameters.

The corresponding regression loss is
\begin{equation}
  \mathcal L_k(\theta)
  :=
  \frac1{2N}\sum_{i=1}^N
  \norm{g_\theta(\xi_i)-y_i^k}^2.
  \label{eq:stop-gradient-loss}
\end{equation}
The stop-gradient operation means that \(y_i^k\) is treated as a constant when differentiating \(\mathcal L_k\) with respect to \(\theta\). Thus the derivative of the loss is
\[
  \nabla_\theta\mathcal L_k(\theta)
  =
  \frac1N\sum_{i=1}^N
  D_\theta g_\theta(\xi_i)^\top
  \{g_\theta(\xi_i)-y_i^k\},
\]
with no derivative through the map \(x^k\mapsto v_{x^k}\). Therefore the training step is a frozen-target regression problem: the generator is trained to move its current outputs toward the fixed points
\[
  x_i^k+\eta v_{x^k}(x_i^k).
\]

The idealized exact-regression limit is the limit in which this regression problem is solved exactly at every step and the generator class is rich enough to interpolate the frozen targets on the sampled latent variables. In that limit, the next generator satisfies
\[
  g_{\theta_{k+1}}(\xi_i)=y_i^k
  =
  x_i^k+\eta v_{x^k}(x_i^k),
  \qquad i=1,\ldots,N.
\]
Consequently the generated particles obey the explicit Euler update
\begin{equation}
  x_i^{k+1}
  =
  x_i^k+\eta v_{x^k}(x_i^k),
  \qquad i=1,\ldots,N.
  \label{eq:frozen-field-euler}
\end{equation}
If the regression is not exact, then \eqref{eq:frozen-field-euler} holds with an additional approximation error
\[
  e_i^k
  :=
  g_{\theta_{k+1}}(\xi_i)
  -
  \{x_i^k+\eta v_{x^k}(x_i^k)\}.
\]
Our analysis neglects this optimization and approximation error and studies the idealized particle system \eqref{eq:frozen-field-euler}.

Finally, define \(t_k=k\eta\) and let \(X_i^\eta(t)\) be the piecewise-linear interpolation of the Euler iterates. Under the usual consistency and stability conditions for explicit Euler schemes, for example local Lipschitz continuity of the interacting-particle vector field
\[
  F_i(x):=v_x(x_i),
  \qquad x=(x_1,\ldots,x_N),
\]
the interpolation \(X^\eta(t)\) converges as \(\eta\to0\) on finite time intervals to the solution of
\[
  \dot X_i(t)=F_i(X(t))=v_{X(t)}(X_i(t)),
  \qquad i=1,\ldots,N.
\]
Here, \(v_x\) can either be the conservative KDE-score field or the
non-conservative leave-one-out Laplace field. These are the continuous-time particle dynamics analyzed in the main text. Thus stop-gradient is used to justify the frozen-field Euler update; once this particle ODE is assumed, the entropy identities and convergence bounds are properties of the resulting interacting-particle flow.

We emphasize that the passage from the stop-gradient Euler update to the continuous-time ODE is
an idealized modeling step. Let $  F(x):=(F_1(x),\ldots,F_N(x))$. The exact-regression
stop-gradient update is
\[
  x^{k+1}=x^k+\eta F(x^k).
\]
If \(F\) is Lipschitz on a forward-invariant region \(\mathcal D\subset(\R^d)^N\)
with Lipschitz constant \(L_{h,\mathcal D}\), and if the Euler iterates and the
ODE trajectory remain in \(\mathcal D\), then the standard explicit-Euler
estimate gives, on a fixed time interval \([0,T]\),
\[
  \max_{0\le k\eta\le T}
  \norm{x^k-X(k\eta)}
  \le
  C_{T,h,\mathcal D}\eta,
\]
where \(C_{T,h,\mathcal D}\) depends on \(T\), \(L_{h,\mathcal D}\), and a bound
on \(F\) on \(\mathcal D\).

For the interacting-particle systems studied here, \(L_{h,\mathcal D}\) can
grow rapidly as \(h\to0\) and depends on denominator lower bounds such as
\(q_x(x_i)\ge\lambda\). Therefore the continuous-time approximation requires a
step-size condition of the form
\[
  \eta L_{h,\mathcal D}\ll1.
\]
If, for instance, \(L_{h,\mathcal D}=O(h^{-\alpha})\), then one needs
\[
  \eta=o(h^\alpha)
\]
for the Euler approximation error to vanish. Thus our continuous-time results
should not be interpreted as unconditional finite-step guarantees for the
practical algorithm. They describe the idealized stop-gradient, exact-regression,
small-step limit under independent stability and denominator-control
assumptions.

If the regression step is not exact, the particle update has the form $  x^{k+1}=x^k+\eta F(x^k)+e^k.
$, as mentioned above. Under the same Lipschitz assumptions, the discrete-to-continuous error contains
an additional accumulated term of size
\[
  \sum_{k\eta\le T} e^{L_{h,\mathcal D}(T-k\eta)}\norm{e^k}.
\]
Thus tracking the practical neural-network training dynamics would require
bounds on optimization error, approximation error, minibatch noise, and the
\(h\)-dependent stability constant. These effects are outside the scope of the
continuous-time entropy analysis.

\end{document}